\newcommand{\T}[1]{\texttt{#1}}
\definecolor{mygreen}{rgb}{0,0.6,0}
\definecolor{mygray}{rgb}{0.5,0.5,0.5}
\definecolor{mymauve}{rgb}{0.58,0,0.82}
\definecolor{codegreen}{rgb}{0,0.6,0}
\definecolor{codegray}{rgb}{0.5,0.5,0.5}
\definecolor{codepurple}{rgb}{0.58,0,0.82}
\definecolor{backcolour}{rgb}{0.95,0.95,0.95}
\newtheorem{theorem}{Theorem}
\newtheorem{lemma}{Lemma}
\newtheorem{example}{Example}
\newtheorem{fact}{Fact}
\def \t {\mathbf{t}}
\def \x {\mathbf{x}}
\def \y {\mathbf{y}}
\def \z {\mathbf{z}}
\def \R {\mathbb{R}}
\def \E {\mathbb{E}}
\def \calB {\mathcal{B}}
\def \calG {\mathcal{G}}
\def \calI {\mathcal{I}}
\def \calJ {\mathcal{J}}
\newcommand{\sys}{\texttt{CorgiPile}\xspace}
\title{\bf Stochastic Gradient Descent without Full Data Shuffle\footnote{This technical report is an extension of our SIGMOD 2022 paper titled \textit{In-Database Machine Learning with CorgiPile: Stochastic Gradient Descent without Full Data Shuffle}. \url{https://doi.org/10.1145/3514221.3526150}}}
\author[$\dagger\ddagger$]{Lijie Xu}
\author[$\mathsection$]{Shuang Qiu}
\author[$\dagger$]{Binhang Yuan}
\author[$\dagger$]{Jiawei Jiang}
\author[$\dagger$]{Cedric Renggli}
\author[$\dagger$]{Shaoduo Gan}
\author[$\dagger$]{Kaan Kara}
\author[$\mathparagraph$]{Guoliang Li}
\author[$\sharp$]{Ji Liu}
\author[$\flat$]{Wentao Wu}
\author[$\natural$]{Jieping Ye}
\author[$\dagger$]{Ce Zhang}
\affil[$\dagger$]{ETH Z\"urich \quad$^{\ddagger}$Institute of Software, CAS \quad$^{\mathsection}$University of Chicago} 
\affil[$\mathparagraph$]{Tsinghua University \quad$^{\sharp}$Kwai Inc. \quad$^{\flat}$Microsoft Research \quad$^{\natural}$University of Michigan}
\date{\small $^{\dagger}$\{firstname.lastname\}@inf.ethz.ch  \\
$^{\mathparagraph}$liguoliang@tsinghua.edu.cn
$^{\sharp}$ji.liu.uwisc@gmail.com
$^{\flat}$wentao.wu@microsoft.com
$^{\mathsection \natural}$\{qiush, jpye\}@umich.edu}
\begin{document}
\maketitle
\begin{abstract} 
Stochastic gradient descent (SGD) is
the cornerstone of modern machine learning (ML) systems.
Despite its computational 
efficiency, SGD requires 
random data access that is inherently inefficient when implemented 
in systems that rely on \emph{block-addressable secondary storage} such as 
HDD and SSD, e.g., TensorFlow/PyTorch and in-DB ML systems over large files.
To address this impedance mismatch, 
various data shuffling strategies have been proposed
to balance the convergence rate of SGD (which favors randomness) 
and its I/O performance (which favors sequential access).

In this paper, we first conduct a systematic empirical study on 
existing data shuffling strategies, which reveals that 
all existing strategies have room for improvement---they all suffer in terms of I/O performance \emph{or} convergence rate. 
With this in mind, we propose a simple but novel \emph{hierarchical} data shuffling strategy, \sys.
Compared with existing strategies, \sys \emph{avoids} a full data shuffle while maintaining \emph{comparable convergence rate} of SGD as if a full shuffle were performed. 
We provide a non-trivial theoretical 
analysis of \sys on its convergence behavior. We further integrate \sys into PyTorch by designing new parallel/distributed shuffle operators inside a new \texttt{CorgiPileDataSet} API. We also integrate \sys into PostgreSQL by introducing 
three new \textit{physical} operators with optimizations.
Our experimental results show that \sys
can achieve comparable convergence rate with the full shuffle based SGD for both deep learning and generalized linear models. For deep learning models on ImageNet dataset, \sys is 1.5$\times$ faster than PyTorch with full data shuffle. For in-DB ML with linear models, \sys is 1.6$\times$-12.8$\times$ faster than two state-of-the-art in-DB ML systems, Apache MADlib and Bismarck, on both HDD and SSD.
\end{abstract}

%-------------------------------------------------------------------------------

\section{Introduction}
\label{sec:intro}
Stochastic gradient descent (SGD) is the cornerstone of modern ML systems. With ever-growing 
data volume, inevitably, SGD algorithms have to 
access data stored in the \textit{secondary storage} instead of accessing the DRAM directly. 
This can happen in two prominent applications: (1) in \textit{deep learning systems} such as TensorFlow~\cite{Tensorflow-OSDI},
one needs to
support out-of-memory access via a specialized 
scanner over large files;
(2) for many \textit{in-database machine learning} (in-DB ML)
scenarios, one has to assume that
the data is stored on the secondary storage, managed 
by the buffer manager~\cite{Buffer-manager}.

% %vspace{-2em}
\paragraph*{\underline{Deep Learning Systems and In-database Machine Learning}} 

Both deep learning and in-DB ML systems are popular research areas for years~\cite{madlib-paper, Feng:2012:TUA:2213836.2213874,DBLP:conf/sigmod/SchleichOC16, learning-over-join, F-system, linear-algebra, Scale-factorization-ml, DBLP:conf/pods/Khamis0NOS18, scalable-linear-algebra}. The state-of-the-art deep learning systems such as PyTorch and TensorFlow provide users with simple \texttt{Dataset/DataLoader} APIs to load data from secondary storage into memory and further to GPUs, as shown in the following lines of code. The deep learning systems can automatically perform model training in \texttt{train()}, using a number of GPUs.
\begin{verbatim} 
          train_dataset = Dataset(dataset_path, other_args)
          train_loader = DataLoader(train_dataset, shuffle_args, other_args)
          train(train_loader, model, other_args)
\end{verbatim}

For in-DB ML, its major benefit is that users do not need to move the data out of DB to another specialized ML platform, given that data movement is often time-consuming, error-prone, or even impossible (e.g., due to privacy and security compliance concerns).
Instead, users can define their ML training jobs using SQL, e.g., training an SVM model 
with MADlib \cite{madlib, madlib-paper} and Bismarck \cite{Feng:2012:TUA:2213836.2213874} can be done via a single SQL statement:
%%vspace{-0.5em}
\begin{verbatim}
                        SELECT svm_train(table_name, parameters).
\end{verbatim}

%vspace{-0.5em}
\paragraph*{\underline{A Fundamental Discrepancy}}  
As identified
by previous work~\cite{Feng:2012:TUA:2213836.2213874,zhang2014dimmwitted,kara2018columnml}, one unique challenge of deep learning and in-DB ML 
is that data can be \textit{clustered} 
while shuffling is not always feasible. For example, the data is clustered by the label, where
data with negative labels might always come before data with positive labels~\cite{Feng:2012:TUA:2213836.2213874}. Another example is that the data is ordered by one of the features. These are common
cases when there is a clustered B-tree index, or the data is naturally
grouped/ordered by, e.g., timestamps.
As SGD requires a random data order 
(shuffle over \emph{all} data) to converge~\cite{Feng:2012:TUA:2213836.2213874,gurbuzbalaban2015random,yun2021can,de2020random,haochen2018random,mishchenko2020random,rajput2020closing,safran2020good},
directly running sequential scans over 
such a clustered dataset can slowdown its
convergence.

Meanwhile, when data are stored on block-addressable secondary storage such as HDD and SSD, it can be incredibly expensive to either shuffle the data \emph{on-the-fly} when running SGD, or shuffle the data \emph{once} with copy
and run SGD over the shuffled copy, due to the amount of \emph{random} I/O's. 
Sometimes, data shuffling might not be applicable in database systems---in-place shuffling might 
have an impact on other indices,
whereas shuffling over a data copy introduces  $2\times$ storage
overhead. 
\textit{How to design efficient SGD
algorithms without requiring 
even a single pass of full data shuffle?}
Understanding this question 
can have a profound impact to 
the system design of both deep 
learning systems and in-DB ML systems.

\begin{figure*}[t!]
\centering
\includegraphics[width=0.6\textwidth]{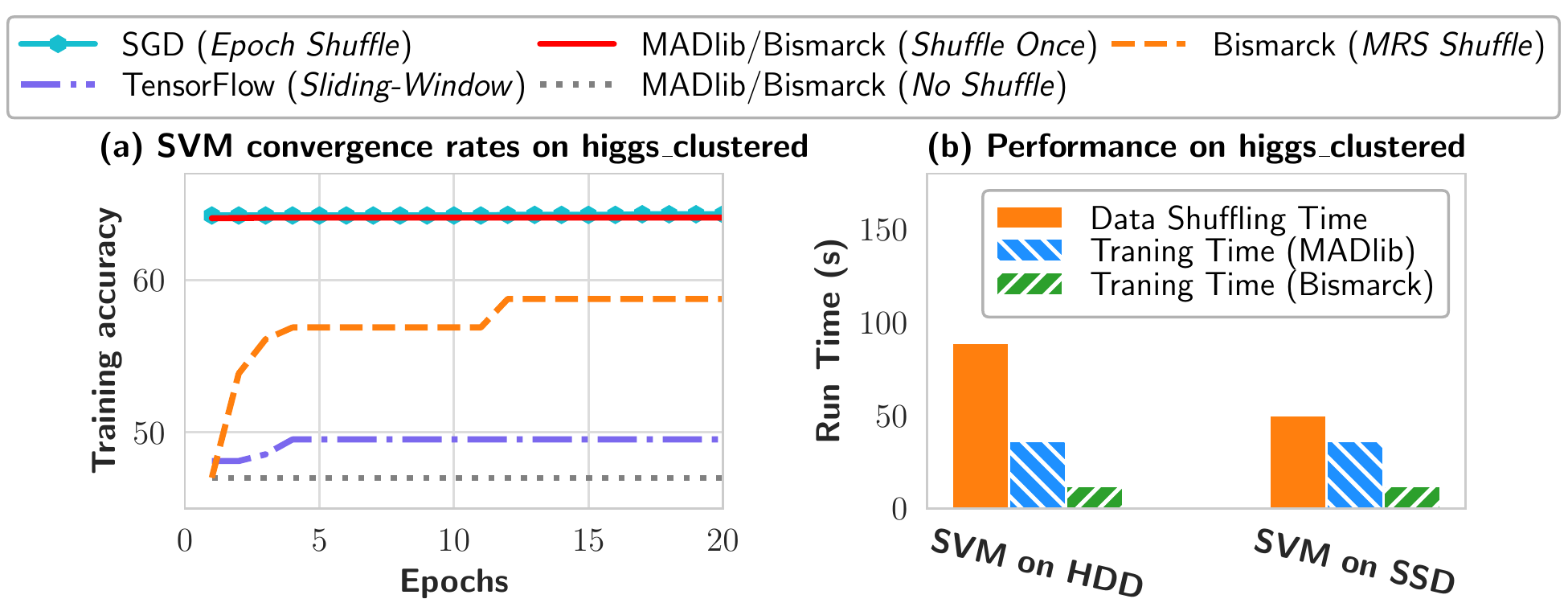}
\caption{The convergence rate and performance of SVM on the clustered \emph{higgs} dataset with different data shuffling strategies. (a) Today's SGD systems
over secondary 
storage, including in-DB ML
solutions (e.g., MADlib and Bismarck) and TensorFlow
file scanner, are sensitive to clustered 
data order. (b) Forcing a full data shuffle before 
training accommodates this clustered data issue, however, introduces large overhead that is often
more expensive than training itself.
}
\label{fig:convergence-of-all-tools}

\end{figure*}

\paragraph*{\underline{Existing Landscape and Challenges}}
Various solutions have been proposed,
in the context of both deep learning and in-DB ML systems.
TensorFlow
provides a shuffling strategy based on 
a \emph{sliding window} over the data~\cite{sliding-window}.
In Bismarck~\cite{Feng:2012:TUA:2213836.2213874},
the authors suggest a 
``multiplexed reservoir sampling'' (MRS) shuffling
strategy, in which two concurrent threads
update the same model---one 
reads data sequentially with reservoir sampling and the other 
reads from a small, shuffled 
in-memory buffer.  
Both significantly improve 
the SGD convergence rate and have been widely adopted 
in practice. 
Despite these efforts, however, they suffer from some shortcomings. As illustrated
in Figure~\ref{fig:convergence-of-all-tools}(a), both 
strategies proposed by 
Bismarck and TensorFlow perform 
suboptimally given a \emph{clustered}
data order. 
Meanwhile, the idea of shuffling data once \emph{before} training, i.e., 
the curve corresponding to 
``MADlib/Bismarck (\emph{Shuffle Once}),''
can accommodate for such convergence problem
but also introduce a significant 
overhead as shown in Figure~\ref{fig:convergence-of-all-tools}(b), which is consistent with 
the observations from previous work~\cite{Feng:2012:TUA:2213836.2213874}.

%vspace{-0.25em}
\paragraph*{\underline{Our Contributions}}
We are inspired by these
previous efforts. In this paper, we ask the following questions:
%vspace{-0.25em}
\begin{quote}
\em Can we design an SGD-style algorithm with efficient shuffling strategy 
that can converge without requiring a full data shuffle? 
Can we provide a rigorous theoretical 
analysis on the convergence behavior of such an algorithm? Can we integrate such an algorithm into both deep learning systems and database systems?
\end{quote}

In this work, we systematically study these
questions and make the following contributions.

\vspace{0.5em}
{\bf C1. An Anatomy and Empirical 
Study of Existing Algorithms.}
We start with a systematic 
evaluation of existing data shuffling strategies for SGD, including (1) \emph{Epoch Shuffle}, which performs a full shuffle before each epoch, (2) \emph{Shuffle Once}, (3) \emph{No Shuffle}, (4) \emph{Sliding-Window Shuffle}, and (5) \emph{MRS Shuffle}.
We evaluate them in the context of using SGD to train generalized linear models and deep learning models, over a variety of datasets.
Our evaluation reveals
 that existing strategies cannot \emph{simultaneously} achieve good hardware efficiency (I/O performance) and statistical efficiency (convergence rate and converged accuracy).
Specifically, \emph{Epoch Shuffle} and \emph{Shuffle Once} achieve the best statistical efficiency,
% in terms of convergence rate of SGD, 
since the data has been fully shuffled; however, their hardware efficiency is suboptimal given the %, often significant, 
additional shuffle overhead and storage overhead.
In contrast, \emph{No Shuffle} achieves the best hardware efficiency as no data shuffle is required; however, its statistical efficiency suffers as it might converge slowly or even diverge.
The other two strategies, \emph{Sliding-Window Shuffle} and \emph{MRS Shuffle}, 
% that are used by state-of-the-art systems, 
can be viewed as a compromise between \emph{Shuffle Once} and \emph{No Shuffle}, which trade statistical efficiency for hardware efficiency. Nevertheless, 
both strategies  suffer in terms of statistical 
efficiency (Section~\ref{sec:benchmark}). 

%vspace{0.5em}
{\bf C2. A Simple, but Novel, Algorithm
with Rigorous Theoretical Analysis.}
Motivated by the limitations of existing strategies, we propose
\sys, a novel
SGD-style algorithm  
based on a \emph{two-level hierarchical} data shuffle strategy.\footnote{Although we give unquestionable love to dogs, the name comes from the shuffling strategy that is a combination of \textit{pile shuffle} and \textit{corgi shuffle}, two commonly used strategies to shuffle a deck of cards.} 
The main idea is to first sample and shuffle the data at a \emph{block level}, and then shuffle data at a \emph{tuple level} within the sampled data blocks, i.e., first sampling data blocks (e.g., a batch of table pages per block in DB), then merging the sampled blocks in a buffer, and finally shuffling the tuples in the buffer for SGD.
While this two-level strategy seems quite simple, it can achieve both good hardware efficiency and statistical efficiency.
Although the hardware efficiency is easy to understand---accessing random data blocks is much more efficient than accessing random tuples, especially when the block size is large, the statistical efficiency requires some non-trivial analysis.
To this end, we further provide a rigorous theoretical study on the convergence behavior.

%vspace{0.5em}
{\bf C3. Implementation, Optimization, and Deep Integration with PyTorch and PostgreSQL.}
While the benefit of \sys for hardware efficiency is intuitive, its realization requires careful design, implementation, and optimization. 
Unlike previous in-DB ML systems such as MADlib and Bismarck that 
integrate ML algorithms using User-Defined Aggregates (UDAs),
our technique requires a deeper system integration since it needs to directly interact with the buffer manager.
Therefore, we operate at the ``physical level'' and enable in-DB ML inside PostgreSQL~\cite{PostgreSQL}
via three new \textit{physical operators}: \T{BlockShuffle} operator, \T{TupleShuffle} operator, and \T{SGD} operator for our customized SGD implementation\footnote{The code of \sys in PostgreSQL is available at~\url{https://github.com/DS3Lab/CorgiPile-PostgreSQL}.}.
We can then construct an \emph{execution plan} for the SGD computation by chaining these operators together to form a pipeline,  naturally following the built-in Volcano paradigm~\cite{Graefe94} of PostgreSQL. 
We also design a \emph{double-buffering} mechanism to optimize the \T{TupleShuffle} operator. For deep learning systems, we extend \sys to work in a parallel/distributed environment, by enhancing the \T{TupleShuffle} with multiple buffers\footnote{The code of \sys in PyTorch is available at~\url{https://github.com/DS3Lab/CorgiPile-PyTorch}.}.

%vspace{0.5em}
{\bf C4. Extensive Empirical Evaluations.}
We conduct extensive evaluations to demonstrate the effectiveness of \sys. We first compare \sys with other shuffling strategies in PyTorch using deep learning workloads of image classification and text classification. The results show that \sys achieves similar model accuracy to the best \emph{Shuffle Once} baseline, whereas other data shuffling strategies suffer from lower accuracy. Specifically, for the \texttt{ImageNet} dataset, \sys is 1.5$\times$ faster than \emph{Shuffle Once} to converge with 8 GPUs. 
We then compare the end-to-end performance of our PostgreSQL implementation with two state-of-the-art in-DB ML systems, MADlib and Bismarck.
The results again show that \sys achieves comparable model accuracy to the best \emph{Shuffle Once} baseline,
but is significantly faster since it does not require full data shuffle. Other strategies suffer from lower SGD convergence rate on \emph{clustered} datasets. 
Overall, \sys can achieve 1.6$\times$-12.8$\times$ speedup compared to MADlib and Bismarck over clustered data. Furthermore, for the datasets ordered by features instead of the label, our \sys still achieves comparable accuracy to the \emph{Shuffle Once}, whereas \emph{No Shuffle} suffers from lower accuracy.

\paragraph*{\underline{Overview}} This technical report is organized as follows.
We first review the SGD algorithm and its implementation (Section~\ref{sec:Preliminaries}). We next empirically study the SGD convergence rate using current state-of-the-art data shuffling strategies (Section~\ref{sec:benchmark}). We then propose our \sys strategy as well as a theoretical analysis on its convergence (Section~\ref{sec:design}). We present our implementation of \sys inside PyTorch in Section~\ref{deeplearningImpl} and inside PostgreSQL in Section~\ref{sec:impl}. We compare the convergence rate and performance of \sys with other baselines in Section~\ref{sec:eval}. 
We summarize related work in Section~\ref{sec:related} and conclude in Section~\ref{sec:conclusion}.

\section{Preliminaries} 
\label{sec:Preliminaries}
In this section, we briefly review the standard SGD algorithm and its implementation in the state-of-the-art deep learning and in-DB ML systems.

\subsection{Stochastic Gradient Descent (SGD)}

Given a dataset with $m$ training examples $\{\t_i\}_{i\in[m]}$, e.g., $m$ tuples if the training set is stored as a table in a database, the typical ML task essentially solves an optimization problem that can be cast into minimizing
a finite sum over $m$ data examples with respect to model $\x$:
\[
F(\x) = \frac{1}{m}\sum\nolimits_{i=1}^m f_i(\x),
\]
where each $f_i$ corresponds to 
the loss over each training tuple $\t_i$.
SGD is an \emph{iterative} procedure that takes as input hyperparameters such as the learning rate $\eta$ and the maximum number of epochs $S$.
It then works as follows.

\begin{enumerate}[noitemsep,topsep=0pt,parsep=0pt,partopsep=0pt,leftmargin=*]
\item \textbf{Initialization} -- Initialize the model $\x$, often randomly.
\item \textbf{Iterative computation} -- 
In each iteration %of the textbook SGD algorithm, 
it draws a (batch of)
tuple $\t_i$, \textit{randomly
with replacement}, computes the stochastic gradient 
$\nabla f_i(\x)$
and updates the parameters of model $\x$.
In practice, most systems implement 
a variant, where the random tuples are drawn \emph{without replacement}~\cite{bottou2012stochastic, Feng:2012:TUA:2213836.2213874, de2020random, yun2021can}. 
To achieve this, one \emph{shuffles} all tuples \emph{before} each epoch and sequentially scans 
these shuffled tuples. For each tuple, we compute the stochastic gradient and update the model parameters.

\item \textbf{Termination} -- The procedure ends when it converges (i.e., the parameters of model $\mathbf{x}$ no longer change) or has attained the maximum number of epochs. 
\end{enumerate}

\subsection{Deep Learning Systems} \label{dlsystem}
Deep learning systems such as PyTorch and TensorFlow are now widely used in industry and academia for AI tasks, including image analysis, natural language processing, speech recognition, etc. 
These systems usually leverage SGD optimizer or its variants \cite{SGDoverview, PyTorchOptim, TFOptim} for training deep learning models. To facilitate data loading, these systems classify the datasets into two types, including \emph{map-style} datasets and \emph{iterable-style} datasets. 
Map-style datasets refer to the datasets whose tuples can be randomly accessed given indexes. For example, if an image dataset is stored in an in-memory array as $\langle\T{image}, \T{label} \rangle$ tuples, it is a map-style dataset that can be randomly accessed by the array index. 
Iterable-style datasets refer to the datasets that can only be accessed in sequence, which is usually used for the datasets that cannot fit in memory. 
For map-style datasets, it is easy to shuffle them since we only need to shuffle the indexes and access the tuples based on the shuffled indexes. However, this random access usually leads to low I/O performance for secondary storage, as shown in Figure~\ref{IOTest} in the Appendix.
For iterable-style datasets, PyTorch currently does not provide any built-in shuffling strategies while TensorFlow provides \emph{Sliding-Window Shuffle} using sliding-window based sampling. 
As we will see in Section~\ref{sec:benchmark}, the problem of this shuffling strategy is that it suffers from low accuracy for the clustered dataset.

% %vspace{-1em}
\subsection{In-database Machine Learning Systems}
%%vspace{-0.5em}

There has been a plethora of work in the past decade focusing on in-DB ML~\cite{zhang2014dimmwitted, madlib-paper, Feng:2012:TUA:2213836.2213874, DBLP:conf/sigmod/SchleichOC16, learning-over-join, F-system, linear-algebra, Scale-factorization-ml, DBLP:conf/pods/Khamis0NOS18, scalable-linear-algebra,  Jankov2021distributed,luo2021automatic,yuan2021tensor,kara2018columnml}. 
Most existing in-DB ML systems implement SGD as ``User-Defined Aggregates'' (UDA)~\cite{Feng:2012:TUA:2213836.2213874, madlib-paper}.
Each epoch of SGD is done via an invocation of the corresponding UDA function, where the parameters of model $\x$ are treated as the \textit{state} and updated for each tuple.

To implement the data shuffling step required by SGD, different in-DB ML systems adopt distinct strategies.
For example, some systems such as MADlib~\cite{madlib-paper} and DB4ML~\cite{DB4ML} assume that the training data has already been shuffled, 
% and, as a consequence, 
so they do not perform any data shuffling.
Other systems, such as Bismarck~\cite{Feng:2012:TUA:2213836.2213874}, do not make this assumption.
Instead, they either perform a \emph{pre-shuffle} of the data in an offline manner and then store the shuffled data as a replica in the database, or perform \emph{partial} data shuffling based on sampling technologies such as \emph{reservoir sampling} and \emph{sliding-window sampling}.
As we will see in the next section, such partial data shuffling strategies, despite alleviating the computation and storage overhead of the preshuffle strategy, raise new issues regarding the convergence of SGD, since the data is insufficiently shuffled and does not follow the purely random order required.

%-------------------------------------------------------------------------------

%-------------------------------------------------------------------------------

\section{Data Shuffling Strategies for SGD}
\label{sec:benchmark}

In this section, we present a systematic analysis of data shuffling strategies 
used by existing in-DB ML systems.
We consider five common data shuffling strategies:
(1) \emph{Epoch Shuffle}, (2) \emph{Shuffle Once}, (3) \emph{No Shuffle}, (4) \emph{Sliding-Window Shuffle}~\cite{sliding-window}, and 
(5) \emph{MRS Shuffle}~\cite{Feng:2012:TUA:2213836.2213874}.
We use diverse SGD workloads, including generalized linear models such as logistic regression (LR) and support vector machine (SVM), as well as deep learning models such as VGG~\cite{VGG19} and ResNet~\cite{Resnet}.

\vspace{0.5em}
\noindent \textbf{Experimental Setups.}
We use the \texttt{criteo} dataset~\cite{LIBSVM-dataset} for generalized linear models, and use the \texttt{cifar-10} image dataset~\cite{Cifar-10} for deep learning. 
Each dataset has two versions: a \emph{shuffled} version and a \emph{clustered} version.
In the \emph{shuffled} version,
all tuples are randomly shuffled,
whereas in the \emph{clustered} version 
all tuples are clustered by their \textit{labels}.
The use of \emph{clustered} datasets is inspired by similar settings leveraged in~\cite{Feng:2012:TUA:2213836.2213874}, with the goal of testing the worst-case scenarios of data shuffling strategies for SGD. 
For example, the clustered version of \texttt{criteo} dataset has the \emph{negative} tuples (with ``-1'' labels) ordered before the \emph{positive} tuples (with ``+1'' labels).

%vspace{-1em}
\subsection{``Shuffle Once'' and ``Epoch Shuffle''}

%As a baseline method,
The \emph{Shuffle Once} strategy performs an offline shuffle of all data tuples, either in-place or by storing the shuffled tuples as a copy in the database.
SGD is then executed over this shuffled copy without any further shuffle during the execution of SGD.
Albeit a simple (but costly) idea, it is arguably a strong baseline that many state-of-the-art in-DB ML systems assume when they take as input an already shuffled dataset.
For \emph{Epoch Shuffle}, it shuffles 
the training set before each training epoch. Therefore, the data shuffling 
cost of \emph{Epoch Shuffle} grows \emph{linearly} with respect to the number of epochs.

%vspace{0.5em}
\noindent \textbf{Convergence.}
As illustrated in Figure~\ref{bench-clustered-data}, \emph{Shuffle Once} can achieve a convergence rate comparable to \emph{Epoch Shuffle}
on \emph{both} shuffled and clustered datasets, 
confirming previous observations~\cite{Feng:2012:TUA:2213836.2213874}.

\vspace{0.5em}
\noindent \textbf{Performance.} Although \emph{Shuffle Once} reduces the number of data shuffles to only once, the shuffle itself can be very expensive on large datasets due to the random access of tuples, as we will show in our experiments. Previous work has also reported that shuffling a huge dataset could not be finished in one day~\cite{Feng:2012:TUA:2213836.2213874}.
Another problem of \emph{Shuffle Once} is that,
when in-place shuffle is not feasible,
it needs to duplicate the data, which can double the space overhead.

\begin{figure*}[t!]
\centering
\includegraphics[width=0.99\textwidth]{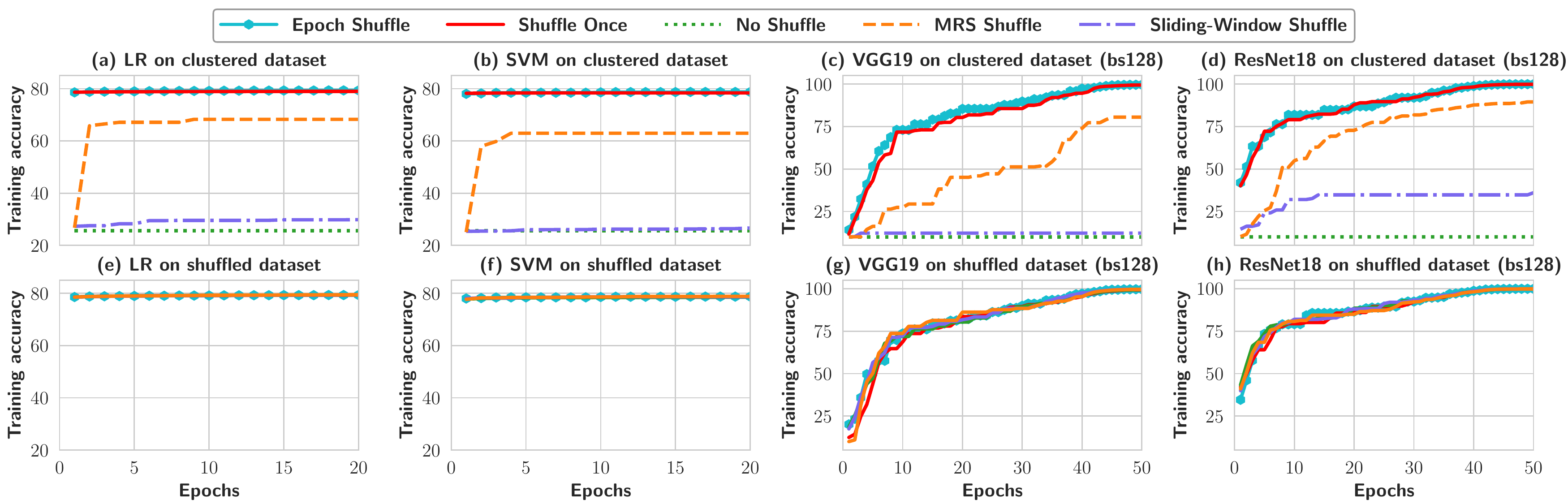}
%vspace{-1.5em}
\caption{The convergence rates of SGD with different data shuffling strategies, for clustered and shuffled datasets, using the same buffer size (10\% of the dataset size) for MRS and Sliding-Window Shuffles.}
%vspace{-1.5em}
\label{bench-clustered-data}
\end{figure*}

%vspace{-0.5em}
\subsection{``No Shuffle''}

The \emph{No Shuffle} strategy does not perform any data shuffle at all, i.e., the SGD algorithm runs over the given data order in each epoch. Simply running MADlib over a
 dataset or running PyTorch over \T{IterableDataset} picks
the \emph{No Shuffle} strategy.

\vspace{0.5em}
\noindent \textbf{Convergence.}
On \emph{shuffled} data, \emph{No Shuffle} can achieve comparable convergence rate to \emph{Shuffle Once}. However, for \emph{clustered} data, \emph{No Shuffle} leads to a significantly lower model accuracy. 
This is not surprising, as SGD relies on a random data order to converge.

\vspace{0.5em}
\noindent \textbf{Performance.}
\emph{No Shuffle} is the fastest among the five data shuffling strategies, as it can always \emph{sequentially}, instead of randomly, access the data tuples~\cite{HDDPerf}.

%vspace{-0.5em}
\subsection{``Sliding-Window Shuffle''}

The \emph{Sliding-Window Shuffle} strategy leverages a sliding window to perform \emph{partial} data shuffling, which is used by TensorFlow~\cite{sliding-window}. It involves the following steps:

\begin{enumerate}[noitemsep,topsep=0pt,parsep=0pt,partopsep=0pt,leftmargin=*]
    \item Allocate a sliding window and fill tuples as they are scanned.
    \item Randomly select a tuple from the window and use it for the SGD computation.
    The slot of the selected tuple in the window is then filled in by the next incoming tuple.
    \item Repeat (2) until all tuples are scanned.
\end{enumerate}

\vspace{0.5em}
\noindent \textbf{Convergence.}
As illustrated in Figure~\ref{bench-clustered-data}, for clustered datasets, \emph{Sliding-Window Shuffle} can achieve higher model accuracy than \emph{No Shuffle} but lower accuracy than \emph{Shuffle Once} when SGD converges.
The reason is that this strategy shuffles the data only \emph{partially}.
For two data examples $\t_i$ and $\t_j$ where 
$\t_i$ is stored much earlier than $\t_j$ ($i \ll j$),
it is likely that $\t_i$ is still selected before $\t_j$.
As a result, on the clustered datasets used in our study, negative tuples are more likely to be selected (for SGD) before positive ones, which distorts the training data seen by SGD and leads to low 
model accuracy.

%vspace{0.5em}
\noindent \textbf{Performance.}
\emph{Sliding-Window Shuffle} can achieve I/O performance comparable to \emph{No Shuffle}, as it also only needs to \emph{sequentially} access the data tuples with limited additional CPU overhead 
to maintain and sample from the sliding window.

%vspace{-0.5em}
\subsection{``Multiplexed Reservoir Sampling Shuffle''}

\emph{Multiplexed Reservoir Sampling (MRS) Shuffle} uses two concurrent threads to read tuples and update a shared model~\cite{Feng:2012:TUA:2213836.2213874}. 
The first thread sequentially scans the dataset and performs \emph{reservoir sampling}.
The \emph{sampled} (i.e., selected) tuples are stored in a buffer $B_1$ and the \emph{dropped} (i.e., not selected) ones are used for SGD. 
The second thread loops over the sampled tuples using another buffer $B_2$ for SGD, where tuples are simply copied from the buffer $B_1$.

\vspace{0.5em}
\noindent \textbf{Convergence.}
As illustrated in Figure~\ref{bench-clustered-data},
\emph{MRS Shuffle} achieves higher accuracy than \emph{Sliding-Window Shuffle} but lower accuracy than \emph{Shuffle Once} when SGD converges.
The reason is quite similar to that given to \emph{Sliding-Window Shuffle}, as the shuffle based on reservoir sampling is again \emph{partial} and therefore is insufficient when dealing with clustered data.
Specifically, the order of the \emph{dropped} tuples is also generally \emph{increasing}, i.e., if $i \ll j$, $\t_i$ is likely to be processed by SGD before $\t_j$.
Moreover, looping over the sampled tuples may lead to suboptimal data distribution---the sampled tuples in the looping buffer $B_2$ may be used multiple times, which can cause data skew and thus decrease the model accuracy.

\vspace{0.5em}
\noindent \textbf{Performance.}
\emph{MRS Shuffle} is fast, as the first thread only needs to \emph{sequentially} scan the tuples for reservoir sampling.
It is slightly slower than \emph{Sliding-Window Shuffle} and \emph{No Shuffle}, as there is a second thread that loops over the buffered tuples.

%vspace{-1em}
\subsection{Analysis and Summary}
\label{sec:benchmark:summary}

\begin{table*}[t] \small
\centering
\caption{A Summary of Different Shuffling Strategies, where 
\textbf{bold} fonts represent the ``ideal'' scenario. We assume all methods that require an in-memory buffer have reasonably large buffer size, e.g., 1\%-10\% of the dataset size.}
%vspace{-1em}
\begin{tabular}{c|cccc} 
\hline
 {\bf Shuffling Strategy}  & \textbf{Convergence Behavior} & \textbf{I/O Perf.} & \textbf{In-memory buffer} & \textbf{Additional Disk Space}  \\
\hline
 \textit{No Shuffle}      & Slow; Lower Accuracy     & \textbf{Fast} & \textbf{No}  & \textbf{No} \\ 
 \textit{Epoch Shuffle}   & \textbf{Fast; High Accuracy}  & Slow    & Yes & 2$\times$ data size  \\
 \textit{Shuffle Once}    & \textbf{Fast; High Accuracy}  & Slow    & Yes & 2$\times$ data size  \\
 \hline
 \textit{MRS Shuffle}~\cite{Feng:2012:TUA:2213836.2213874}     & Worse than \textit{Shuffle Once}   & \textbf{Fast} & Yes & \textbf{No}\\
 \textit{Sliding-Window}~\cite{sliding-window}  & Worse than \textit{Shuffle Once}   & \textbf{Fast} & Yes & \textbf{No}\\
\hline
\sys             & \textbf{Comparable to \textit{Shuffle Once}}  & \textbf{Fast} & Yes & \textbf{No}\\
\hline
\end{tabular}
\label{tab:current-shuffles-tab}
%vspace{-1em}
\end{table*}

Table~\ref{tab:current-shuffles-tab} summarizes the characteristics 
of different data shuffling strategies.
As discussed, the effectiveness of data shuffling strategies for SGD largely depends on two somewhat conflicting factors, namely, (1) the degree of \emph{data randomness} of the shuffled tuples and (2) the \emph{I/O efficiency} when scanning data from disk.
There is an apparent \emph{trade-off} between these two factors:
\begin{itemize}
    \item The more random the tuples are, the better the convergence rate of SGD is. \emph{Epoch Shuffle} introduces data randomness at the highest level, but is too expensive to implement in in-DB ML systems. \emph{Shuffle Once} also introduces significant data randomness, which is usually the best practice in terms of SGD convergence for in-DB ML systems.
    \item A higher degree of randomness implies more random disk accesses and thus lower I/O efficiency. As a result, the \emph{No Shuffle} strategy is the best in terms of I/O efficiency.
\end{itemize}
The other 
strategies (\emph{Sliding-Window} and \emph{MRS})
try to sacrifice data randomness for better I/O efficiency, leaving room for improvement.

\begin{example}\label{example:id-label-distribution}

\begin{figure*}%[t!]
    \centering
    \begin{subfigure}[No Shuffle]{
        \includegraphics[width=0.225\textwidth]{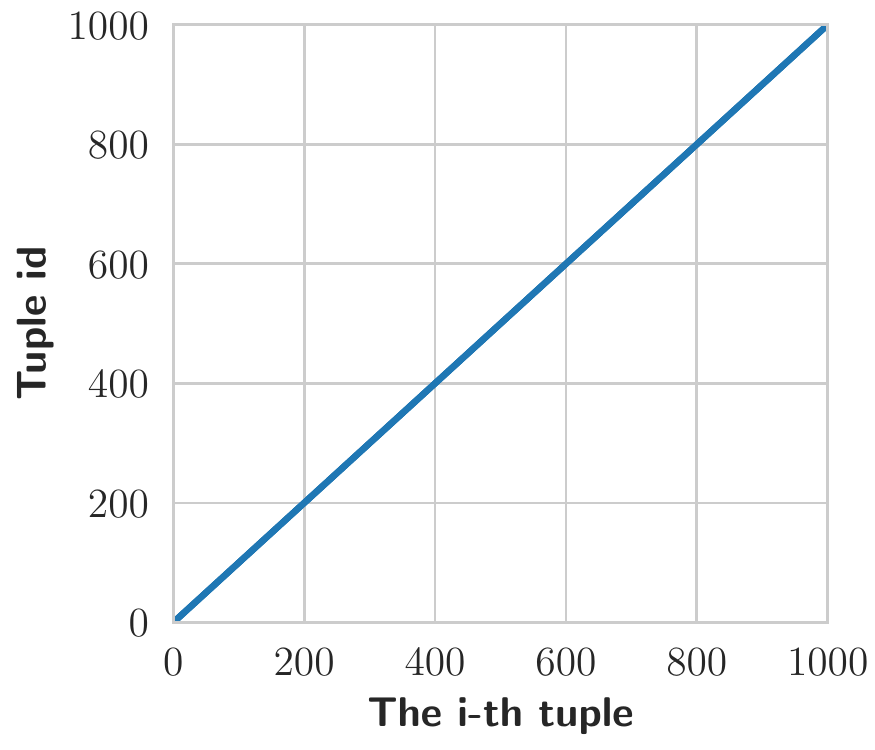}
        \label{fig:no-shuffle-id-dist}}
        %%vspace{-0.5em}
    \end{subfigure}
    \begin{subfigure}[Sliding-Window Shuffle]{
        \includegraphics[width=0.225\textwidth]{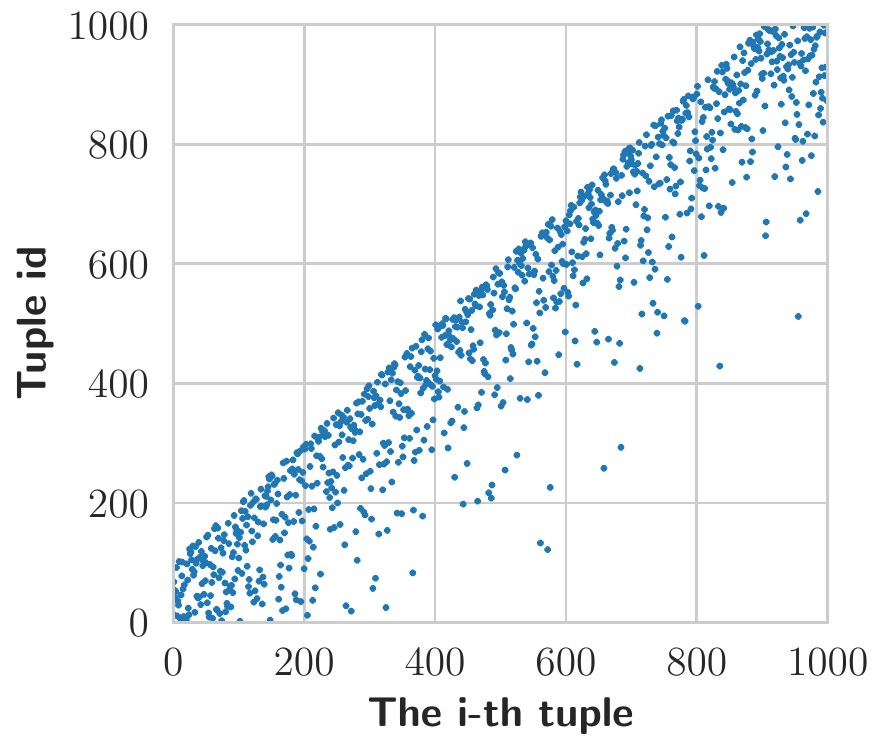}
        \label{fig:sliding-window-shuffle-id-dist}}
        %%vspace{-0.5em}
    \end{subfigure}
    % \hfill
    \begin{subfigure}[MRS Shuffle]{
        \includegraphics[width=0.225\textwidth]{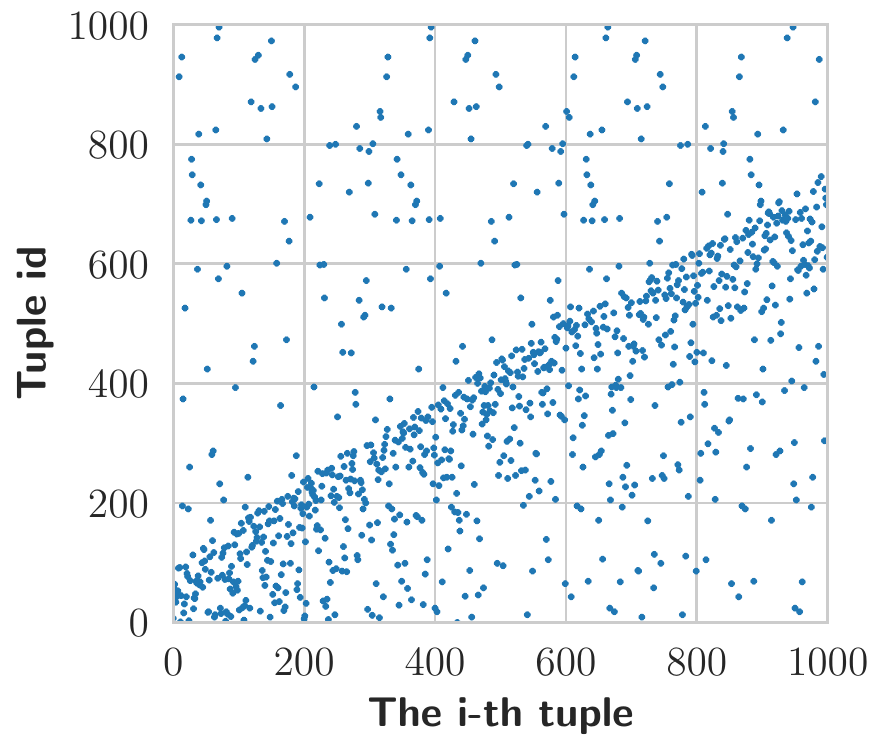}
        \label{fig:mrs-shuffle-id-dist}}
        %%vspace{-0.5em}
    \end{subfigure}
     \begin{subfigure}[Full Shuffle (ideal)]{
        \includegraphics[width=0.225\textwidth]{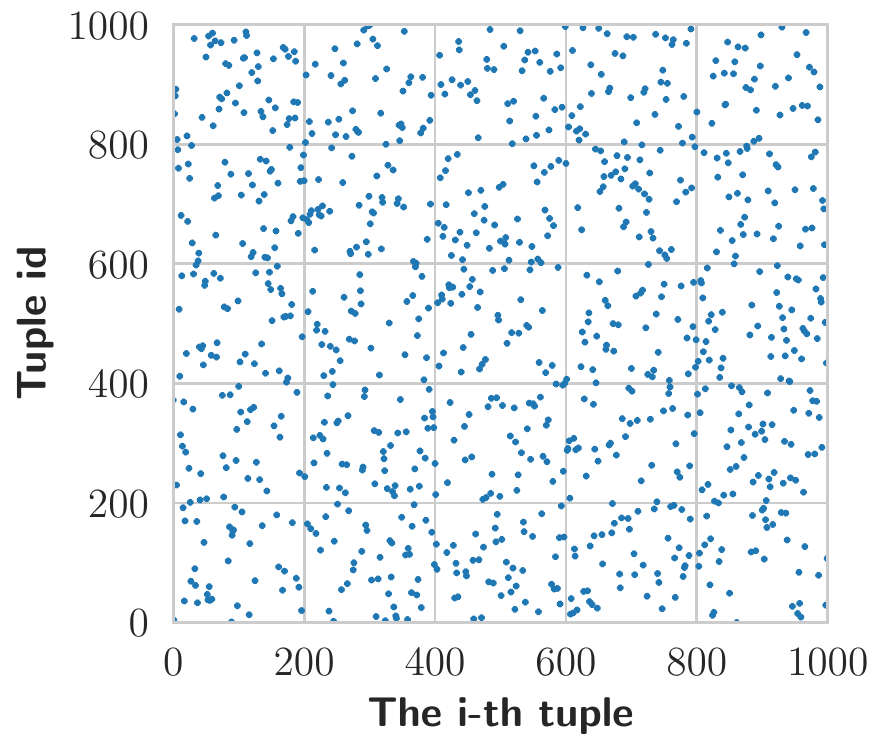}
        \label{fig:full-shuffle-id-dist}}
        %%vspace{-0.5em}
    \end{subfigure}

       \begin{subfigure}[No Shuffle]{
        \includegraphics[width=0.225\textwidth]{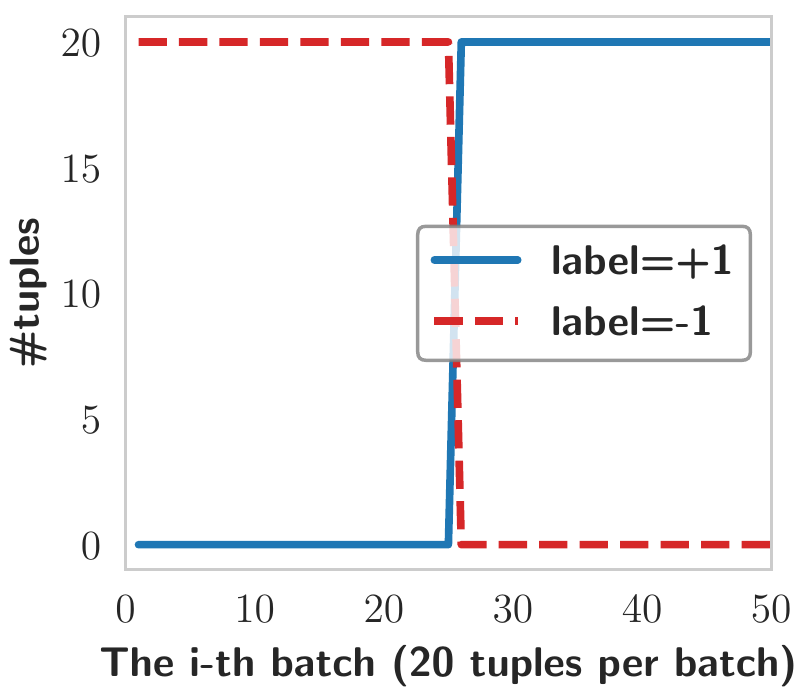}
        \label{fig:no-shuffle-label-dist}}
    \end{subfigure}
    \begin{subfigure}[Sliding-Window Shuffle]{
        \includegraphics[width=0.225\textwidth]{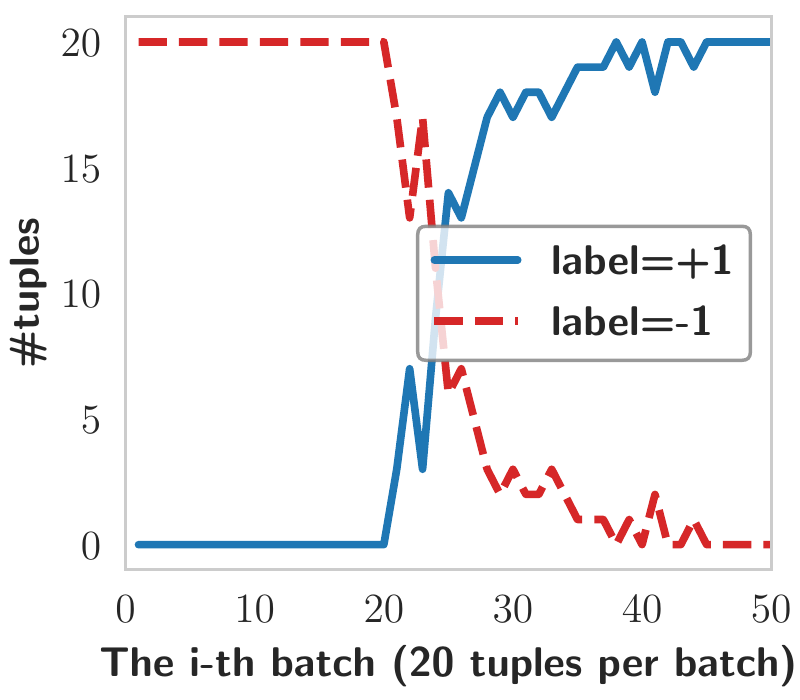}
        \label{fig:sliding-window-shuffle-label-dist}}
    \end{subfigure}
    \begin{subfigure}[MRS Shuffle]{
        \includegraphics[width=0.225\textwidth]{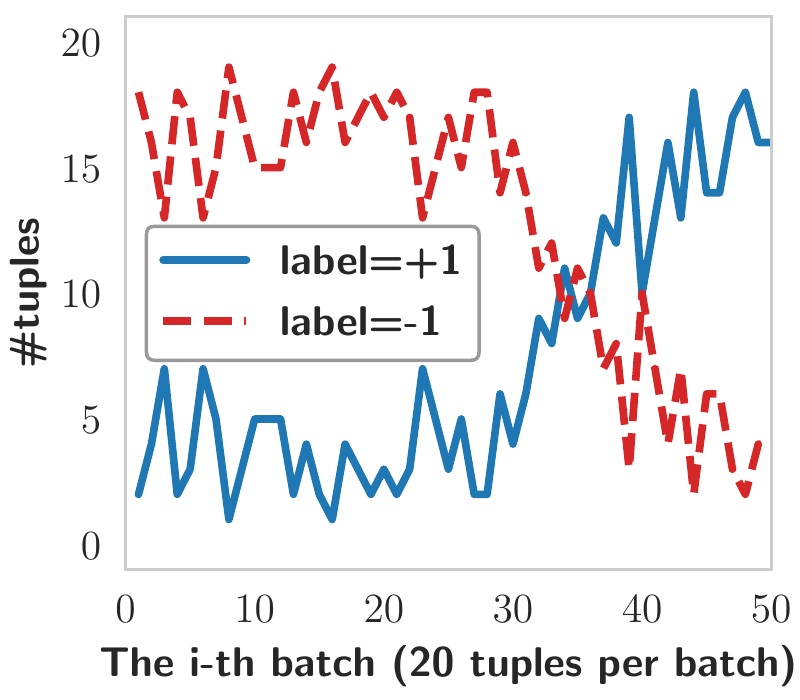}
        \label{fig:mrs-shuffle-label-dist}}
    \end{subfigure}
    \begin{subfigure}[Full Shuffle (ideal)]{
        \includegraphics[width=0.225\textwidth]{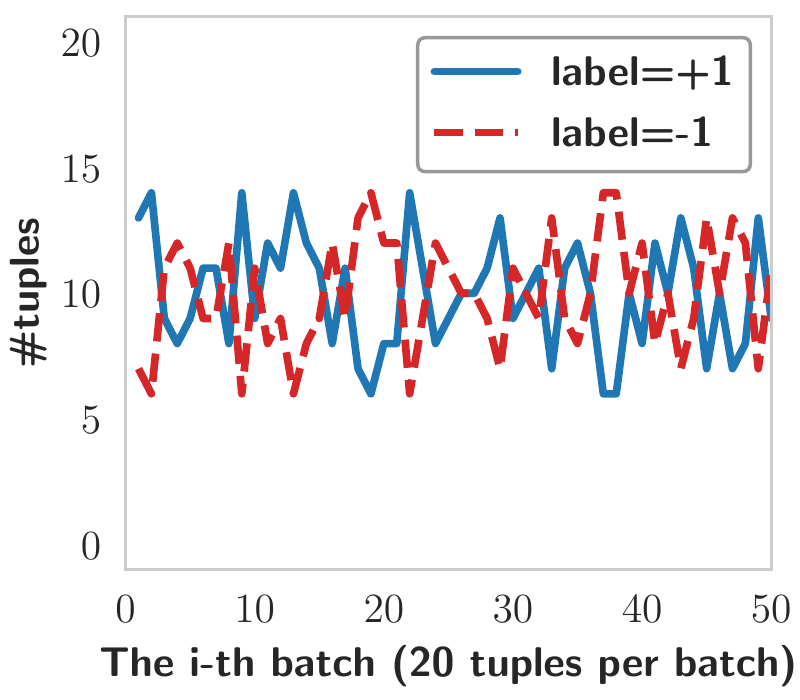}
        \label{fig:full-shuffle-label-dist}}
    \end{subfigure}
    \caption{The tuple id distribution (a-d) and the corresponding label distribution (e-h) of data shuffling strategies. Tuple id denotes the tuple position after shuffling. \#tuple denotes the number of negative/positive tuples in every 20 tuples shuffled.} 
    \label{shuffle-result-demo}
\end{figure*}

To better understand these issues, consider a clustered dataset with 1,000 tuples,
each of which has a \emph{tuple\_id} and a \emph{label}, where \emph{tuple\_id} of the $i$-th tuple is $i$. The first 500 tuples are negative and the next 500 tuples are positive. 
Figure~\ref{shuffle-result-demo} plots the distributions of \emph{tuple\_id} and corresponding \emph{label}s after Sliding-Window and MRS Shuffle, and compare them with the ideal distributions from a full shuffle.
Specifically, the \emph{tuple\_id} distribution illustrates the positions of the tuples after shuffling, whereas the \emph{label} distribution illustrates the number of negative/positive tuples in every 20 tuples shuffled.

We can observe that Sliding-Window results in a ``linear''-shape distribution of the \emph{tuple\_id} after shuffling, as shown by Figure~\ref{fig:sliding-window-shuffle-id-dist}, which suggests that the tuples are almost \emph{not} shuffled. The corresponding label distribution in Figure~\ref{fig:sliding-window-shuffle-label-dist} further confirms this, where almost all negative labels still appear before positive ones after shuffling.
Similar patterns can be observed for MRS in Figures~\ref{fig:mrs-shuffle-id-dist} and~\ref{fig:mrs-shuffle-label-dist}, though MRS has improved over Sliding-Window.
In summary, the data randomness achieved by Sliding-Window or MRS is far from the ideal case, as shown in Figures~\ref{fig:full-shuffle-id-dist} and~\ref{fig:full-shuffle-label-dist}.

\end{example}

%-------------------------------------------------------------------------------

%-------------------------------------------------------------------------------

%vspace{-0.5em}
\section{\sys}
\label{sec:design}

As illustrated in the previous 
section,  data shuffling strategies used by existing systems can be 
suboptimal when dealing with clustered
data. Although recent efforts
have significantly improved over
baseline methods, there is still large 
room for improvement.
Inspired by these previous efforts, % and their convergence and performance profiles,
we present a simple but novel data shuffling strategy named \sys.
The key idea of \sys lies in the following two-level hierarchical shuffling mechanism:
\begin{quote}
\em
We first randomly select 
a set of blocks (each block refers to a set of contiguous tuples) and put them 
into an in-memory buffer; we then 
randomly shuffle
all tuples in the buffer and use 
them for SGD.
\end{quote}
Despite its simplicity, \sys is 
highly effective. 
In terms of hardware efficiency,
when the block size is 
large enough (e.g., 10MB+), a random access on the \emph{block} level can 
be as efficient as a \emph{sequential} scan, as shown in the I/O performance test on HDD and SSD in Appdenx~\ref{IOTestOnHDDandSSD}. 
In terms of statistical efficiency,
as we will show,
\textit{given the same buffer size},
\sys converges much better than \textit{Sliding-Window} and \emph{MRS}.
Nevertheless, both the convergence 
analysis and its integration into PyTorch and
PostgreSQL are non-trivial. 
In the following, we first describe 
the \sys algorithm precisely and then present 
a theoretical analysis on its convergence behavior.

\vspace{0.5em}
\noindent\textbf{Notations and Definitions.} 
The following is a list of notations and definitions that we will use:

\begin{itemize}[noitemsep,topsep=0pt,parsep=0pt,partopsep=0pt,leftmargin=*]
\item $\|\cdot \|$: the $\ell_2$-norm for vectors and the spectral norm for matrices;

\item $\lesssim$: For two arbitrary vectors $a, g$, we use $a_s\lesssim g_s$ to denote that
there exists a certain constant $C$ that satisfies $a_s\leq C g_s$ for all $s$;

\item $N$, the total number of blocks ($N\geq 2$);
	
\item $n$, the buffer size (i.e., the number of blocks kept in the buffer);

\item $b$, the size (number of tuples) of each data block;
    
\item $B_l$, the set of tuple indices in the $l$-th block ($l \in [N]$ and $|B_l| = b$);
    
\item $m$, the number of tuples for the finite-sum objective ($m = N b$);

\item $f_i(\cdot )$, the function associated with the $i$-th tuple;

\item $\nabla F(\cdot )$ and $\nabla f_i(\cdot)$, the gradients of the functions $F(\cdot)$ and $f_i(\cdot)$; 

\item $H_i(\cdot) := \nabla^2 f_i(\cdot )$, the Hessian matrix of the function $f_i(\cdot)$;

\item $\x^*$, the global minimizer of the function $F(\cdot)$;
    
\item $\x^s_k$, the model $\x$ in the $k$-th iteration at the $s$-th epoch;

\item $\mu$-strongly convexity: function $F(\x)$ is $\mu$-strongly convex if $\forall\x, \y$,
\begin{align} \label{eq:strongcvx}
F(\x)  \geq F(\y) + \left\langle\x-\y, \nabla F(\y) \right \rangle + \frac{\mu}{2}\|\x-\y\|^2.
\end{align}

\end{itemize}

%vspace{-0.5em}
\subsection{The \sys Algorithm}
\label{algo-design}

\begin{algorithm}[t]

\caption{\sys Algorithm}\label{alg:partialrs_sample}
	\begin{algorithmic}[1]
		\STATE {\bfseries Input: } $N$ blocks with $m$ total tuples, total epochs $S$ ($S \geq 1$), $a \geq 1$, $F(\cdot) = \frac{1}{m} \sum_{i=1}^m f_i(\cdot)$.
		\STATE {\bfseries Initialize} $\x^0_0$;
		\FOR{$s=0,\cdots,S$}
        	\STATE Randomly pick $n$ blocks without replacement, each containing $b$ tuples. Load these blocks into the buffer; 
		    \STATE Shuffle tuple indices among all $n$ blocks in the buffer and obtain the permutation $\boldsymbol{\psi}_s$;
		    \FOR{$k=1,...,bn$}
                \STATE Update  $\x^s_k = \x^s_{k-1} - \eta_s \nabla f_{\psi_s(k)} \left( \x^s_{k-1} \right)$;
		    \ENDFOR
		    \STATE $\x^{s+1}_0 = \x^{s}_{bn}$;
		\ENDFOR
		\STATE {\bfseries Return} $x_{bn}^{S}$;
	\end{algorithmic}
\end{algorithm}

Algorithm~\ref{alg:partialrs_sample} illustrates the details of \sys.
At each epoch (say, the $s$-th epoch), \sys runs the following steps:
\begin{enumerate}[noitemsep,parsep=5pt,partopsep=0pt,leftmargin=*]
\item (\textbf{Sample}) Randomly sample $n$ blocks out of $N$ data blocks \emph{without replacement} and load the $n$ blocks into the buffer. 
Note that
we use \emph{sample without replacement} to avoid 
visiting the same tuple multiple times for each epoch, which can converge faster and is a standard
practice in most  ML systems~\cite{bottou2009curiously, bottou2012stochastic,gurbuzbalaban2019random,haochen2018random, Feng:2012:TUA:2213836.2213874}.
\item (\textbf{Shuffle}) Shuffle all tuples in the buffer. 
We use $\boldsymbol{\psi}_s$ to denote an ordered set, whose elements are 
the indices of the shuffled tuples at the $s$-th epoch.
The size of $\boldsymbol{\psi}_s$ is $bn$, where $b$ is the number of tuples per block. $\boldsymbol{\psi}_s(k)$ is the $k$-th element in $\boldsymbol{\psi}_s$.

\item (\textbf{Update}) Perform gradient descent by scanning each tuple with the shuffle indices in $\boldsymbol{\psi}_s$, yielding the updating rule %as follows
\begin{align*}
    \x^s_k = \x^s_{k-1} - \eta_s \nabla f_{\boldsymbol{\psi}_s(k)} \left( \x^s_{k-1} \right),
\end{align*}
where $\nabla f_{\boldsymbol{\psi}_s(k)}(\cdot)$ is the gradient function averaging the gradients of all samples in the tuples indexed by $\boldsymbol{\psi}_s(k)$, and $\eta_s$ is the learning rate for gradient descent at the epoch $s$. The parameter update is performed for all $k=1,...,bn$ in one epoch.
\end{enumerate}

\begin{figure}[t!]
    \centering
    %vspace{-1em}
    \begin{subfigure}[Tuple id distribution of \sys]{
        \includegraphics[width=0.3\textwidth]{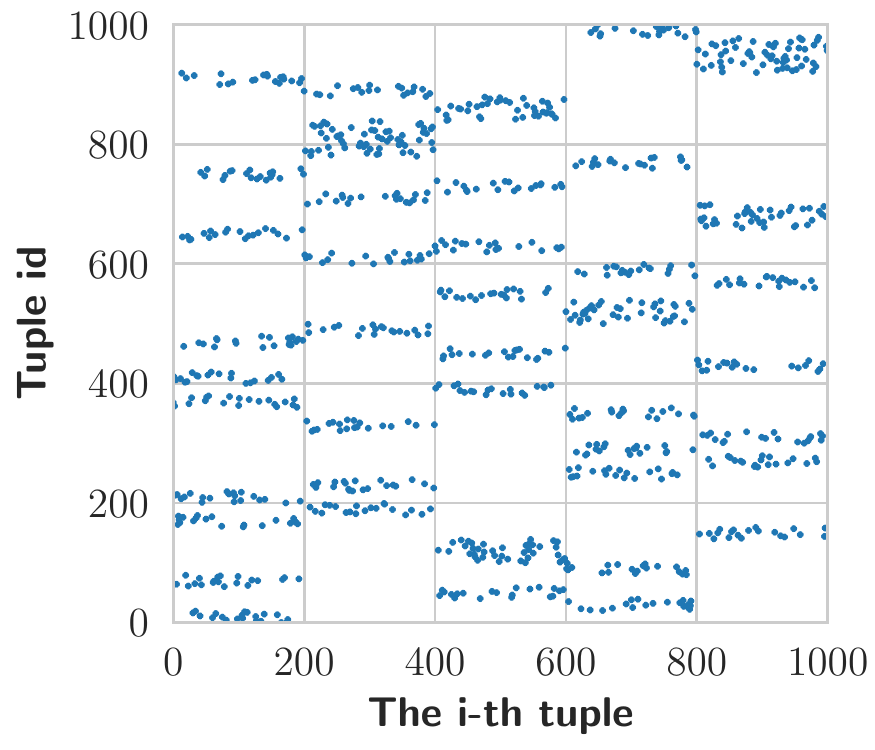}
        \label{fig:our-shuffle-id-dist}}
    \end{subfigure}
    \begin{subfigure}[Label distribution of \sys]{
        \includegraphics[width=0.3\textwidth]{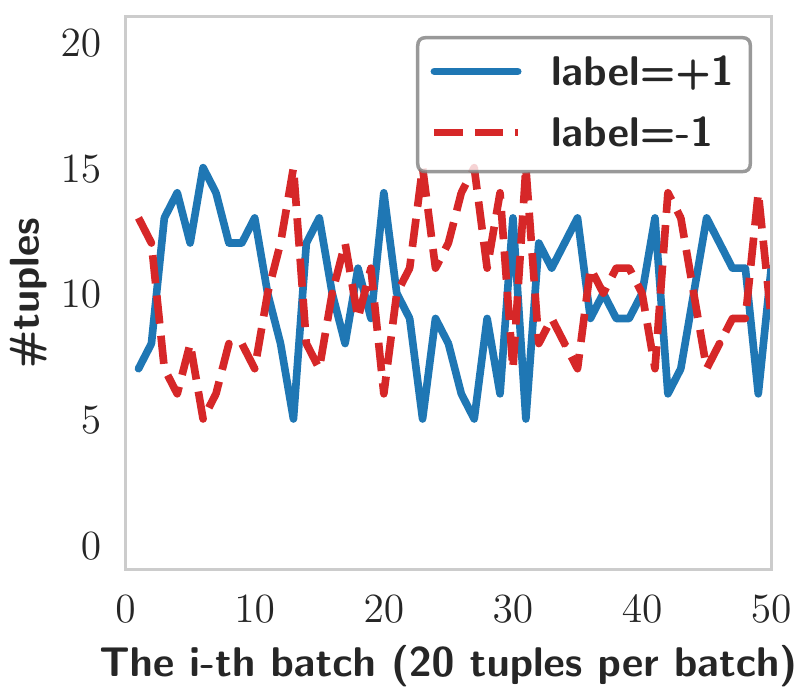}
        \label{fig:our-shuffle-label-dist}}
    \end{subfigure}
    %vspace{-1.5em}
    \caption{The tuple id/label distribution of \sys. }
    \label{our-shuffle-result}
\end{figure}

\vspace{0.5em}
\noindent \textbf{Intuition behind \sys.}
Before we present the formal 
theoretical analysis, we first
illustrate the intuition behind \sys, following the same example used in Section~\ref{sec:benchmark:summary}.

\begin{example}

Consider the same settings as those in Example~\ref{example:id-label-distribution}.
Recall that \sys contains both block-level and tuple-level shuffles.
Suppose that the block-level shuffle generates a random order of blocks as \{b20, b8, b45, b0, ...\} and the buffer can hold 10 blocks. The tuple-level shuffle will put the first 10 blocks into the buffer, whose \emph{tuple\_id}s are \{b20[400, 419], b8[160, 179], b45[900, 919], b0[0, 19], ...\}. After shuffling, the buffered tuples will have random \emph{tuple\_id}s in a large \emph{non-contiguous} interval that is the union of \{[0, 19], [160, 179], ..., [900, 919]\}, as shown in the first 200 tuples in Figure~\ref{fig:our-shuffle-id-dist}. The buffered tuples therefore follow a random order closer to what is given by a full shuffle. As a result, the corresponding label distribution, as shown in Figure~\ref{fig:our-shuffle-label-dist}, is closer to a \emph{uniform} distribution. 
\end{example}

\noindent \textbf{Performance.} \label{pref-analysis}
While \emph{No Shuffle} only requires \emph{sequential} I/Os, our \sys needs to (1) randomly access blocks, (2) copy all tuples in these blocks into a buffer, and (3) shuffle the tuples inside the buffer.
Here, random accessing a block means randomly picking a block and reading the tuples of this block from secondary storage (e.g., disk) into memory.
If the block size is large enough, the I/O performances of random and sequential accesses are close. %not too different.
\sys incurs additional overheads for \emph{buffer copy} and
in-memory \emph{shuffle}.
However, these I/O overheads 
can be hidden via standard techniques such as double buffering.
As we will show in our experiments on PostgreSQL,
the optimized version of \sys only incurs 11.7\% additional overhead compared to the most efficient \emph{No Shuffle} baseline.

\subsection{Convergence Analysis} \label{Convergence_analysis}

Despite its simplicity, the convergence analysis of \sys is not trivial---even reasoning about
the convergence of SGD with
\textit{sample without replacement}
is a open question for decades~\cite{shamir2016without,gurbuzbalaban2015random,ying2019stochastic,haochen2018random}, not to say 
a hierarchical sampling scheme like ours.  Luckily, a recent theoretical 
advancement~\cite{haochen2018random} provides us with the technical language 
to reason about \sys's convergence.
In the following, we present a novel theoretical analysis for \sys.

Note that in our analysis, one epoch represents going through all the tuples in the sampled $n$ blocks.

{\assumption \label{assump:global} We make the following standard assumptions, as that in other previous work on SGD convergence analysis \cite{Bottou18, First-Order}: 

\begin{enumerate}
    \item $ F(\cdot )$ and $ f_i(\cdot)$ are twice continuously differentiable.
    
    \item $L$-Lipschitz gradient: $\exists L \in \mathbb{R}_+$, $\|\nabla f_i (\x) - \nabla f_i (\y)\| \leq L \|\x-\y\|$ for all $i \in [m]$.
    
    \item $L_H$-Lipschitz Hessian matrix: $\|H_i(\x) - H_i(\y)\| \leq L_H\|\x-\y\|$ for all $i\in [m]$.% and $H_i(\cdot):=\nabla^2 f_i(\cdot)$ is symmetric.
    
   \item Bounded gradient: $\exists G \in \mathbb{R}_+$, $\|\nabla f_i (\x_k^s)  \| \leq G$ for all $i \in [m]$, $k\in [K-1]$, and $s\in\{0,1\ldots,S\}$. 
   
    \item Bounded Variance: $\E_{\xi} [\|\nabla  f_{\xi} (\x) - \nabla F(\x)  \|^2] = \frac{1}{m}\sum_{i=1}^m\|\nabla f_i(\x)-\nabla F(\x)\|^2\leq \sigma^2$ where $\xi$ is the random variable that takes the values in $[m]$ with equal probability $1/m$.
    
\end{enumerate}
}

\noindent \textbf{Factor $h_D$}. 
In our analysis, we use the factor $h_D$ to characterize the upper bound of a block-wise data variance: 
\begin{align*}
%\E_{\xi'} \left[\left\|\frac{1}{b}\sum_{i\in B_{\xi'} } \nabla f_i(\x) - \nabla F(\x) \right\|^2 \right]
%=
\frac{1}{N}\sum_{l=1}^N \left\|\nabla f_{B_l}(\x) - \nabla F(\x) \right\|^2 \leq h_D  \frac{\sigma^2}{b},
\end{align*}
where $b=|B_l|$ is the size of each data block (recall the definition of $b$).
Here, $h_D$ is an essential parameter to measure the ``cluster'' effect within the original data blocks.  
Let's consider two extreme cases: 1) ($h_D=1$) all samples in the data set are fully shuffled, such that the data in each block follows the same distribution; 2) ($h_D=b$) samples are well clustered in each block, for example, all samples in the same block are identical. Therefore, the larger $h_D$, the more ``clustered'' the data.

We now present the results for both strongly convex objectives (corresponding to 
generalized linear models) and non-convex objectives (corresponding to deep learning models) respectively, in order to show the correctness and efficiency of \sys. The proof of the following theorems is at the end of the Appendix.

\paragraph*{\underline{Strongly convex objective}}
We first show the result for strongly convex objective that satisfies the strong convexity condition \eqref{eq:strongcvx}.

\begin{theorem} \label{thm:partialrs_sample}
Suppose that $F(\x)$ is a smooth and $\mu$-strongly convex function. Let $T = S n b$, that is, the total number of samples used in training and $S\geq 1$ is the number of tuples iterated, and choosing $\eta_s = \frac{6}{bn\mu(s+a)}$ 
where  $a \geq \max \left\{\frac{8LG + 24L^2 + 28L_H G}{\mu^2}, \frac{24L}{\mu} \right\}$, under Assumption \ref{assump:global}, \sys has the following convergence rate 
\begin{align} \label{eq:thm1}
    \E [F\left( \bar{\x}_S \right) - F(\x^*)]  \lesssim  ( 1-\alpha)h_D\sigma^2 \frac{1}{T}  + \beta \frac{1}{T^2}  + \gamma \frac{  m^3}{T^3},
\end{align}
where $\bar{\x}_S = \frac{\sum_s (s+a)^3 \x_s}{\sum_s (s+a)^3}$, and
\begin{align*}
\alpha := \frac{n-1}{N-1}, \beta := \alpha^2 + ( 1-\alpha)^2(b-1)^2, \gamma := \frac{n^3}{N^3}.
\end{align*}
\end{theorem}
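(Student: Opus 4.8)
The plan is to analyze the per-epoch dynamics of \sys by decomposing the model update into a full-gradient step plus a noise term, and then control the accumulated noise by carefully isolating the variance contributed by the two-level sampling scheme. First I would fix an epoch $s$ and write the total displacement over that epoch, $\x^{s+1}_0 - \x^s_0 = -\eta_s \sum_{k=1}^{bn} \nabla f_{\psi_s(k)}(\x^s_{k-1})$. I would then split $\nabla f_{\psi_s(k)}(\x^s_{k-1})$ around the epoch-start iterate $\x^s_0$ using the $L_H$-Lipschitz Hessian assumption, so that $\nabla f_{\psi_s(k)}(\x^s_{k-1}) = \nabla f_{\psi_s(k)}(\x^s_0) + H_{\psi_s(k)}(\x^s_0)(\x^s_{k-1}-\x^s_0) + O(L_H\|\x^s_{k-1}-\x^s_0\|^2)$. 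Since $\|\x^s_{k-1}-\x^s_0\| \lesssim \eta_s b n G$, the quadratic remainder feeds the $m^3/T^3$ term in \eqref{eq:thm1} (note $bn \le m$ and $\eta_s \sim 1/(bn\mu s)$), and the Hessian-linear term is handled via a standard telescoping/contraction argument exploiting $\mu$-strong convexity and $L$-smoothness, which is the source of the $1/T^2$ term with coefficient $\beta$.

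The crux is the leading term: the expectation over the random choice of $n$ blocks (without replacement) \emph{and} the random within-buffer permutation $\boldsymbol{\psi}_s$ of the sum $\sum_{k=1}^{bn}\nabla f_{\psi_s(k)}(\x^s_0)$. Conditioned on which $n$ blocks land in the buffer, this sum equals $bn \cdot \bar\nabla$, where $\bar\nabla$ is the average gradient over the buffered tuples — so the within-buffer permutation only affects higher-order cross terms (the ``sampling without replacement'' effect analyzed in the style of \cite{haochen2018random}), contributing the $(b-1)^2$ piece inside $\beta$. The genuinely new ingredient is the block-level sampling: I would compute $\E\|\bar\nabla - \nabla F(\x^s_0)\|^2$ where $\bar\nabla = \frac{1}{n}\sum_{l \in \mathcal{S}} \nabla f_{B_l}(\x^s_0)$ and $\mathcal{S}$ is a size-$n$ subset of $[N]$ drawn uniformly. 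By the standard finite-population sampling-without-replacement variance formula, this equals $\frac{1}{n}\cdot\frac{N-n}{N-1}\cdot\frac{1}{N}\sum_{l=1}^N\|\nabla f_{B_l}(\x^s_0)-\nabla F(\x^s_0)\|^2 \le \frac{1}{n}\cdot\frac{N-n}{N-1}\cdot h_D\frac{\sigma^2}{b}$, invoking the definition of $h_D$. Writing $\alpha = \frac{n-1}{N-1}$, one checks $\frac{N-n}{N-1} = 1-\alpha$ (up to the $1/n$ vs.\ $1/(N-1)$ bookkeeping), which after multiplying by $\eta_s^2 (bn)^2$ and summing over epochs with the weights $(s+a)^3$ yields the $(1-\alpha)h_D\sigma^2/T$ term. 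The $\alpha^2$ part of $\beta$ accounts for the residual bias/variance interplay when $n$ is close to $N$ (near-full-shuffle regime).

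With the one-epoch inequality in hand — schematically $\E[\|\x^{s+1}_0 - \x^*\|^2] \le (1 - \eta_s bn\mu/2)\E[\|\x^s_0-\x^*\|^2] + \eta_s^2(bn)^2 \cdot (\text{variance terms}) + (\text{higher order})$ — I would plug in $\eta_s = \frac{6}{bn\mu(s+a)}$, which makes $1-\eta_s bn\mu/2 = 1 - 3/(s+a)$, and run the now-standard weighted-average argument (multiply the recursion by $(s+a)^3$, telescope, and use $\bar{\x}_S = \frac{\sum_s(s+a)^3\x_s}{\sum_s(s+a)^3}$ together with convexity of $F$) to convert the distance-to-optimum bound into the function-value bound \eqref{eq:thm1}, tracking that $\sum_s(s+a)^3 \asymp S^4 \asymp (T/(bn))^4$ and $T = Snb$. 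The condition $a \ge \max\{(8LG+24L^2+28L_HG)/\mu^2, 24L/\mu\}$ is exactly what is needed to make the contraction factor valid and to absorb the Hessian and remainder constants into a clean statement.

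The main obstacle I anticipate is disentangling the two randomness sources in the leading variance term while keeping the constants honest: the block sampling and the tuple permutation are not independent in their effect on $\sum_k \nabla f_{\psi_s(k)}(\x^s_{k-1})$ once the iterate drift $\x^s_{k-1}-\x^s_0$ is reintroduced, so one must carefully interleave the Hessian-expansion bookkeeping with the finite-population variance computation — a naive union bound would lose the crucial $\frac{N-n}{N-1}$ factor that drives the $h_D$-dependence and the interpolation between full shuffle ($n=N$, $\alpha=1$, leading term vanishes) and heavy clustering. A secondary technical nuisance is ensuring the higher-order terms genuinely scale as $m^3/T^3$ rather than a worse power of $bn$; this requires using $\eta_s \lesssim 1/(bn\mu s)$ so that each factor of $\eta_s bn$ is $O(1/s)$, and then summing the cubes against the $(s+a)^3$ weights.
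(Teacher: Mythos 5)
Your proposal follows essentially the same route as the paper's proof: expand the one-epoch update around $\x^s_0$, compute the block-level finite-population (sampling-without-replacement) variance to obtain the $(1-\alpha)h_D\sigma^2/T$ leading term, treat the Hessian--gradient cross term in the style of \cite{haochen2018random} to produce $\beta/T^2$, push the iterate-drift and $L_H$ remainders into $\gamma m^3/T^3$, and finish with the $(s+a)^3$-weighted telescoping under the stated condition on $a$. The one substantive difference is that the paper centers the Hessian expansion at $\x^*$ rather than at $\x^s_0$, so that the identity $\sum_j\nabla f_j(\x^*)=0$ can be subtracted with the optimal coefficient $\rho=\tfrac{n-1}{N-1}$ from the cross term; this single centering step is what yields both pieces of $\beta=\alpha^2+(1-\alpha)^2(b-1)^2$ (rather than the two separate sources you describe) and avoids an extra factor of order $n^2$ in the $1/T^2$ coefficient, whereas your $\x^s_0$-based expansion would need to absorb a nonzero $H(\x^s_0)\nabla F(\x^s_0)$ correction via smoothness to reach the same bound.
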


\noindent\textbf{Tightness.} The convergence rate of \sys is tight in the following sense:
\begin{itemize}[noitemsep,topsep=0pt,parsep=0pt,partopsep=0pt,leftmargin=*] 
\item $\alpha = 1$: It means that $n=N$, i.e., all tuples are fetched to the buffer. Then \sys reduces to full-shuffle SGD~\cite{haochen2018random}. In this case, the upper bound in Theorem~\ref{thm:partialrs_sample} is $O(1/T^2 + m^3/T^3)$, which matches the result of the full shuffle SGD algorithm~\cite{haochen2018random}. 
\item $\alpha = 0$: It means that $n=1$, i.e., only sampling one block each time. Then \sys is very close to \emph{mini-batch} SGD (by viewing a block as a mini-batch), except that the model is updated once per data tuple. Ignoring the higher-order terms in~\eqref{eq:thm1}, our upper bound $O(h_D\sigma^2/ T)$ is consistent with that of mini-batch SGD. 
\end{itemize}

\vspace{0.5em}
\noindent\textbf{Comparison to vanilla SGD.} In vanilla SGD, we only randomly select one tuple from the dataset to update the model. It admits the convergence rate $O(\sigma^2/T)$. Comparing to the leading term $(1-\alpha)h_D(\sigma^2/T)$ in~\eqref{eq:thm1} for our algorithm, if $n \gg  (h_D -1) (N-1)/h_D + 1$ (for $h_D > 0$), $(1-\alpha) h_D$ will be much smaller than $1$, indicating that our algorithm outperforms vanilla SGD in terms of sample complexity. 
It is also worth noting that, even if $n$ is small, \sys may still significantly outperform vanilla SGD.
%if we consider practical situation. 
Assuming that reading 
a random single tuple incurs an overhead of
$t_{\text{lat}}+t_{\text{t}}$ and reading 
a block of $b$ tuples
incurs an overhead of 
$t_{\text{lat}}+bt_{\text{t}}$, where
$t_{\text{lat}}$ is 
the ``latency''
for one read/write operation that 
does not grow linearly with respect to 
the amount of data that one reads/writes
(e.g., SSD read/write latency or HDD ``seek and rotate'' time),
and $t_{\text{t}}$ is the time that one
needs to transfer a single tuple.
To reach an error of $\epsilon$, %we have that 
vanilla SGD requires, in physical time,
\[
O\left(\frac{\sigma^2}{\epsilon} t_{\text{lat}} +  \frac{\sigma^2}{\epsilon} t_{\text{t}} \right),
\]
whereas \sys requires
\[
O\left((1-\alpha) \frac{h_D}{b} \cdot \frac{\sigma^2}{\epsilon} t_{\text{lat}}
+  (1-\alpha) h_D \cdot \frac{\sigma^2}{\epsilon} t_{\text{t}} \right).
\]
Because
$(1-\alpha) \frac{h_D}{b} < 1$,
\sys always provides benefit over vanilla SGD in terms of the read/write latency
$t_{\text{lat}}$.
When $t_{\text{lat}}$ dominates the transfer time $t_{\text{t}}$,
\sys can outperform vanilla SGD even for small buffers.

\paragraph*{\underline{Non-convex objective}}
We further conduct an analysis on objectives that are non-convex 
or satisfy the Polyak-\L ojasiewicz condition,
which leads to similar insights on the behavior of \sys.

\begin{theorem} \label{thm:partialrs_sample_smooth}
Suppose that $F(\x)$ is a smooth function. Letting $T = S n b$ be the number of tuples iterated, under Assumption \ref{assump:global}, \sys has the following convergence rate: 
\begin{enumerate}
\item When $\alpha \leq \frac{N-2}{N-1}$,  choosing $\eta_s = \frac{1}{\sqrt{bn(1-\alpha)h_D\sigma^2 S} }$ 
and assuming  $S \geq \frac{bn(\frac{104}{3}L + \frac{4}{3}L_H)^2}{\sigma^2 (1-\alpha) h_D}$, we have
\begin{align*}
\frac{1}{S} \sum_{s=1}^S \E\|\nabla F(\x_0^s)\|^2 \lesssim &  ( 1-\alpha)^{1/2} \frac{\sqrt{h_D } \sigma }{\sqrt{T}}  + \beta \frac{1}{T}  +\gamma \frac{  m^3}{T^{\frac{3}{2}}},
\end{align*}
where the factors are defined as
\begin{align*}
&\alpha := \frac{n-1}{N-1}, \beta := \frac{\alpha^2}{1-\alpha} \frac{1}{h_D \sigma^2} + ( 1-\alpha)\frac{(b-1)^2}{h_D \sigma^2}, \gamma := \frac{n^3}{(1-\alpha)N^3};
\end{align*}

\item When $\alpha =1$, choosing $\eta_s = \frac{1}{(m S)^{\frac{1}{3}} }$ and assuming $S \geq (\frac{416}{3}L + \frac{16}{3}L_H)^3 b^2 n^3 / N$, we have
\begin{align*}
 \frac{1}{S} \sum_{s=1}^S \E\|\nabla F(\x_0^s)\|^2\lesssim  \frac{1}{T^{\frac{2}{3}}}  +\gamma' \frac{  m^3}{T},
\end{align*}
where we define $\gamma' := \frac{n^3}{N^3}$.
\end{enumerate}

\end{theorem}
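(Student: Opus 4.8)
The plan is to analyze the per-epoch progress of \sys viewed as a single ``super-step'' that processes $bn$ tuples drawn by the two-level sampling scheme, and then sum the resulting recursion over epochs with the prescribed step sizes. First I would write the one-epoch update as $\x_0^{s+1} = \x_0^s - \eta_s \sum_{k=1}^{bn} \nabla f_{\boldsymbol{\psi}_s(k)}(\x_{k-1}^s)$, and Taylor-expand each gradient $\nabla f_{\boldsymbol{\psi}_s(k)}(\x_{k-1}^s)$ around $\x_0^s$ using the $L_H$-Lipschitz Hessian assumption, so that the aggregated update decomposes into (i) a ``full-gradient'' term $bn \nabla F(\x_0^s)$, (ii) a sampling-bias term measuring how far the randomly sampled $n$ blocks deviate from the full average, (iii) a within-epoch ``curvature'' term coming from the fact that iterates drift away from $\x_0^s$, and (iv) higher-order remainder terms controlled by $L_H$, $G$, and $\eta_s$. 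This is essentially the same decomposition underlying the strongly convex analysis (Theorem~\ref{thm:partialrs_sample}), so I would reuse those lemmas; the new ingredient is that for the non-convex case I use $L$-smoothness of $F$ directly, $F(\x_0^{s+1}) \leq F(\x_0^s) + \langle \nabla F(\x_0^s), \x_0^{s+1}-\x_0^s\rangle + \frac{L}{2}\|\x_0^{s+1}-\x_0^s\|^2$, rather than strong convexity.

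The key probabilistic step is bounding $\E\big\|\frac{1}{n}\sum_{l \in \mathcal{S}_s} \nabla f_{B_l}(\x_0^s) - \nabla F(\x_0^s)\big\|^2$, where $\mathcal{S}_s$ is the random set of $n$ sampled blocks chosen without replacement from $N$. Standard sampling-without-replacement identities give this variance as $\frac{1}{n}\cdot\frac{N-n}{N-1}\cdot\frac{1}{N}\sum_{l=1}^N \|\nabla f_{B_l}(\x_0^s)-\nabla F(\x_0^s)\|^2$, which by the definition of $h_D$ is at most $(1-\alpha)\frac{h_D\sigma^2}{bn}$ up to constants (here $1-\alpha = \frac{N-n}{N-1}$). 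Combined with the within-buffer tuple-level shuffle, whose contribution is handled exactly as in the full-shuffle SGD analysis of~\cite{haochen2018random} applied to the $bn$ buffered tuples, I obtain a descent inequality of the schematic form $\E[F(\x_0^{s+1})] \leq \E[F(\x_0^s)] - \frac{\eta_s bn}{2}\E\|\nabla F(\x_0^s)\|^2 + C_1 \eta_s^2 (bn)^2 (1-\alpha) h_D \sigma^2 + C_2 \eta_s^3 (bn)^3 (\text{const}) + \dots$, where the $\beta$- and $\gamma$-type terms track the $\alpha^2$ sampling-bias-squared contribution, the $(1-\alpha)(b-1)^2$ within-block-variance contribution, and the $n^3/N^3$ higher-order block-sampling term respectively.

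Rearranging this descent inequality, averaging over $s=1,\dots,S$, telescoping $\E[F(\x_0^1)] - \E[F(\x_0^{S+1})] \leq F(\x_0^1) - F^* =: \Delta$, and plugging in the constant step size $\eta_s = 1/\sqrt{bn(1-\alpha)h_D\sigma^2 S}$ yields $\frac{1}{S}\sum_s \E\|\nabla F(\x_0^s)\|^2 \lesssim \frac{\Delta}{\eta_s bn S} + \eta_s bn (1-\alpha) h_D\sigma^2 \cdot C_1 + \dots = \Theta\big(\sqrt{(1-\alpha)h_D\sigma^2/(bnS)}\big) + \text{l.o.t.}$, and substituting $T = Snb$ gives the claimed $(1-\alpha)^{1/2}\sqrt{h_D}\sigma/\sqrt{T}$ leading term, with the $\beta/T$ and $\gamma m^3/T^{3/2}$ corrections; the lower-bound assumption on $S$ is exactly what is needed to ensure $\eta_s L bn \leq O(1)$ so the descent lemma's quadratic term is dominated and the higher-order $L_H$ terms stay controlled. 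Case~2 ($\alpha = 1$, i.e., $n=N$) is the degenerate full-shuffle regime where the $(1-\alpha)$ factor vanishes; there the dominant noise is the within-epoch curvature term, the analysis collapses to that of~\cite{haochen2018random}, and choosing $\eta_s = (mS)^{-1/3}$ balances the $\Delta/(\eta bn S)$ term against the $\eta^2$-curvature term to give the $T^{-2/3}$ rate plus the $\gamma' m^3/T$ remainder.

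I expect the main obstacle to be a careful accounting of the within-epoch drift: because the model is updated once per \emph{tuple} (not once per block), bounding $\|\x_{k-1}^s - \x_0^s\|$ requires a recursive argument using the bounded-gradient assumption $\|\nabla f_i\| \leq G$ to get $\|\x_{k-1}^s - \x_0^s\| \leq (k-1)\eta_s G$, and then propagating these drift bounds through the second-order Taylor remainders while keeping all the $\eta_s$, $b$, $n$, $L$, $L_H$ dependencies explicit so that the stated constants (the $104/3$, $4/3$, $416/3$, $16/3$ coefficients in the $S$-lower-bounds) come out correctly. Coupling this drift control with the sampling-without-replacement variance bound, while ensuring the cross terms between ``block-sampling randomness'' and ``tuple-shuffle randomness'' are handled by conditioning on the sampled block set and taking iterated expectations, is the technically delicate part; everything else is the routine algebra of summing a step-size-weighted recursion.
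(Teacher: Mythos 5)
Your proposal follows essentially the same route as the paper's proof: the $L$-smoothness descent lemma applied to the per-epoch super-step, a decomposition into a mean-gradient term, a without-replacement block-sampling variance term (with the finite-population factor $\frac{N-n}{N-1}$ and the $h_D$ bound), within-epoch drift controlled by $\|\x^s_{k-1}-\x^s_0\|\leq (k-1)\eta_s G$, Hessian-Lipschitz remainders handled via the integral representation of gradient differences, and a step-size-weighted summation, with the $\alpha=1$ case collapsing to the full-shuffle analysis exactly as in the paper. One small inconsistency to fix: in your schematic descent inequality the block-sampling variance term should scale as $\eta_s^2\, bn\,(1-\alpha)h_D\sigma^2$ (linear in $bn$, as your own sampling-without-replacement identity implies), not $\eta_s^2 (bn)^2(1-\alpha)h_D\sigma^2$ — with the quadratic scaling the balancing you then perform would pick up a spurious factor of $bn$ and would not yield the stated $(1-\alpha)^{1/2}\sqrt{h_D}\,\sigma/\sqrt{T}$ leading term.
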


We can apply a similar analysis as that of Theorem~ \ref{thm:partialrs_sample} to compare \sys with vanilla SGD, in terms of convergence rate, and reach similar insights.

%-------------------------------------------------------------------------------

%-------------------------------------------------------------------------------
\section{Multi-process \sys and the implementation in PyTorch}
\label{deeplearningImpl}

We have integrated \sys into PyTorch, one of the state-of-the-art deep learning systems. For this integration, the main challenge is to extend \sys to work for the parallel/distributed environment, since deep learning systems usually use multiple processes with multiple GPUs to train models. For example, apart from single-process training, PyTorch also supports multi-process training using the \texttt{DistributedDataParallel} (DDP) mode \cite{DDP-PyTorch}. In this mode, PyTorch runs multiple processes (typically one process per GPU) in parallel in a single machine or across a number of machines to train models. We call this parallel/distributed multi-process training the \emph{multi-process} mode.  

\subsection{Multi-process \sys}

We found that our \sys can be naturally extended to work in the multi-process mode, by enhancing the tuple-level shuffle. As mentioned in Section~\ref{algo-design}, \sys contains both block-level shuffle and tuple-level shuffle. For the multi-process mode as shown in Figure~\ref{fig:pytorch-impl}(a), we can naturally implement block-level shuffle by randomly distributing data blocks into different processes. For the tuple-level shuffle, we can use multi-buffer based shuffling instead of single-buffer based shuffling---in each process we allocate a local buffer to read blocks and shuffle their tuples locally. For the following SGD computation, the deep learning system itself can read the shuffled tuples to perform the forward/backward/update computation as well as gradient/parameter communication/synchronization among different processes. We name this enhanced \sys as multi-process \sys and implement it as a new \texttt{CorgiPileDataset} API in PyTorch as follows. Users just need to initialize the \texttt{CorgiPileDataset} with necessary parameters and then use it as one parameter of the \texttt{DataLoader}. The \texttt{train()} constantly extracts a batch of tuples from \texttt{DataLoader} and performs mini-batch SGD computation on the tuples. We next detail the implementation of multi-process \sys in PyTorch with four steps, and further demonstrate that this multi-process \sys can achieve similar random data order for SGD as that of the single-process \sys in Section~\ref{demo-single-multiple-corgipile}. 

\begin{verbatim} 
    train_dataset = CorgiPileDataset(dataset_path, block_index_path, other_args)
    train_loader = torch.utils.data.DataLoader(train_dataset, other_args)
    train(train_loader, model, other_args)
\end{verbatim}

\textbf{(1) Block partitioning:} The first step is to partition the dataset into blocks. In the parallel/distributed environment, we usually store the dataset on the block-based parallel/distributed file systems such as HDFS~\cite{HDFS}, Amazon EBS~\cite{EBS}, and Lustre~\cite{Lustre}. For example, our ETH Euler cluster~\cite{ETH} uses high-performance parallel Lustre file system, which reads/writes data in blocks (by default 4 MB)~\cite{ETH-Euler} and does not allow users to store/read massive small files like raw images in a directory. Therefore, for training ImageNet dataset with 1.3 million raw images~\cite{Imagenet}, we need to convert these images into binary data files like widely-used \texttt{TFRecords}~\cite{TFRecord, TFRecord-PyTorch} and store them in Lustre before training. In addition, for the next block-level shuffle, we need to build a block index to identify the start/end of each block, by using the block information provided by the file system or running indexing tools such as \texttt{PyTorch-TFRecord} library~\cite{TFRecord-PyTorch} on the dataset. If the dataset itself contains tuple index such as \textit{map-style dataset} in PyTorch, we can also partition the dataset into blocks according to the tuple index. 

\vspace{0.5em}
\textbf{(2) Block shuffle:} For block shuffle, we just need to let each process randomly pick $\textit{BN}/\textit{PN}$ blocks, where $\textit{BN}$ is the number of total blocks and $\textit{PN}$ is the number of total processes. We implement this block shuffle in our \texttt{CorgiPileDataSet}, which uses the \emph{dataset path} and the \emph{block index} of all blocks as the input parameters. In the $i$-th process, at the beginning of each epoch, \texttt{CorgiPileDataSet} first shuffles the block index of all blocks and splits it into $\textit{PN}$ parts, and then only reads the blocks with indexes in the $i$-th part. Since we set the same random seed for each process, the shuffled block indexes in all the processes are the same. Therefore, different processes can obtain different blocks.

 \begin{figure*}[t!]
\centering
\includegraphics[width=0.9\textwidth]{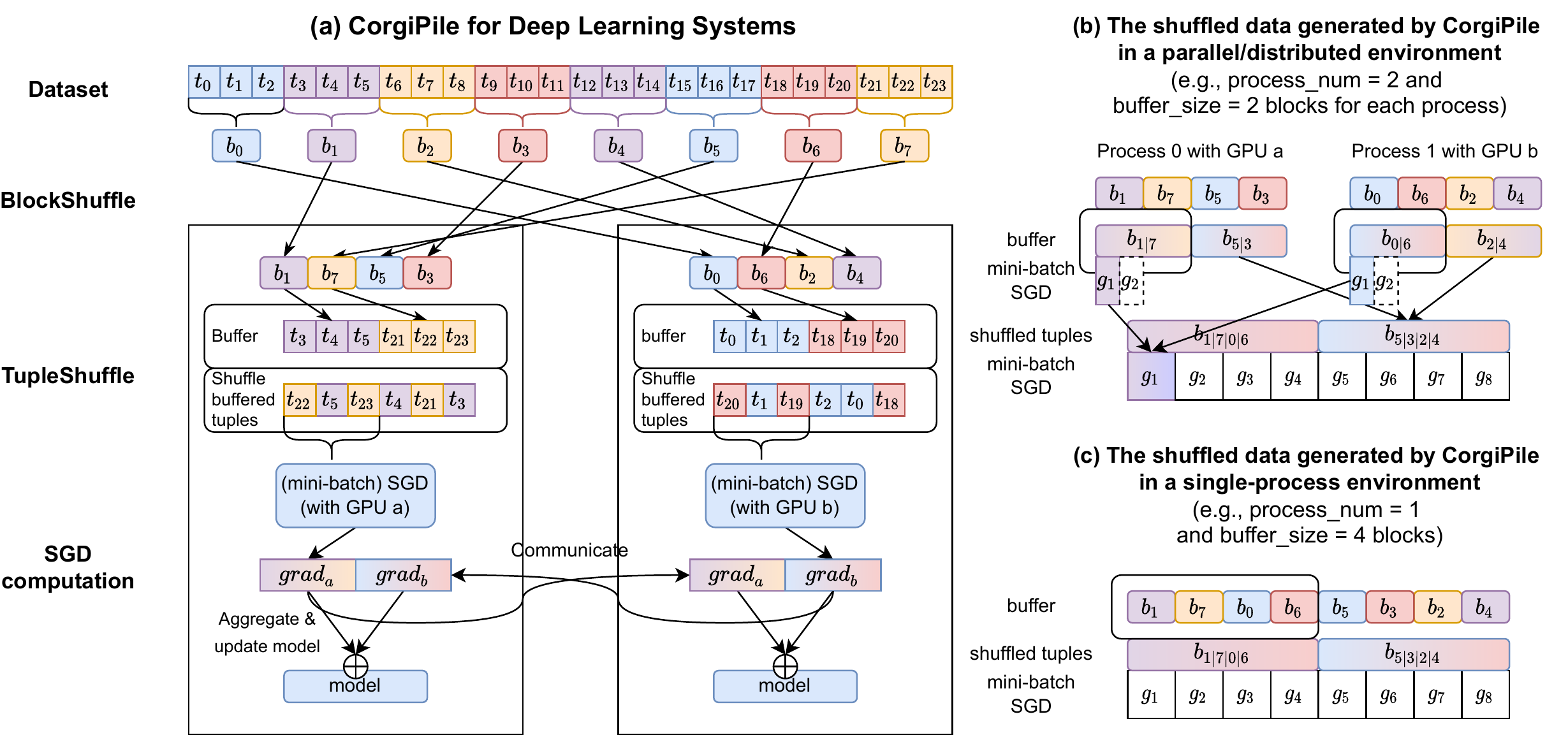}
%vspace{-0.5em}
\caption{(a) The implementation of \sys in a parallel/distributed environment, e.g., PyTorch with multiple processes and GPUs. (b) The shuffled data generated by \sys in a parallel/distributed environment are similar with (c) the shuffled data generated by \sys in a single-process environment.}
\label{fig:pytorch-impl}
%%vspace{-1em}
\end{figure*}
\vspace{0.5em}
\textbf{(3) Tuple shuffle:} For tuple shuffle, each process first  allocates a small buffer in memory and then constantly reads the blocks into the buffer. Once the buffer is full, the process will shuffle the buffered tuples. This is also implemented in \texttt{CorgiPileDataSet}, whose \texttt{iter()} reads blocks into a buffer and returns the shuffled tuples one by one to the next SGD computation. Note that the buffer size here is much smaller than that used in the single-process mode. For example, if we set $\textit{buffer}\_\textit{size} = \textit{BS}$ in the single-process mode, we can choose $\textit{buffer}\_\textit{size} = \textit{BS}/\textit{PN}$ for the multi-process mode. We will compare the shuffled data orders of these two modes later.

\vspace{0.5em}
\textbf{(4) SGD computation:} After the block shuffle and tuple shuffle, each process can perform mini-batch SGD computation on the shuffled tuples. Different from the single-process mode that performs mini-batch SGD on the whole dataset with $\textit{batch}\_\textit{size}=\textit{bs}$, each process in the multi-process mode performs mini-batch SGD on partial dataset with a smaller batch size ($\textit{bs}/\textit{PN}$) and updates the model with gradient synchronization every batch. As shown in Figure~\ref{fig:pytorch-impl}(a), after each batch, all the processes will synchronize/aggregate the gradients using communication protocols like \texttt{AllReduce}, and then updates the local model. This procedure is executed inside $\texttt{train()}$ of PyTorch, which automatically performs the gradient computation/communication/synchronization and model update every time after reading a batch of tuples from \texttt{CorgiPileDataset}. 

\subsection{Single-process \sys vs. Multi-process \sys}
\label{demo-single-multiple-corgipile}

We found that the generated data order of multi-process \sys is comparable to that of single-process \sys, when using mini-batch SGD. Here, we use a simple example as shown in Figure~\ref{fig:pytorch-impl} to demonstrate this. As shown in Figure~\ref{fig:pytorch-impl}(a), there are two processes and each of them randomly picks 4 blocks from the dataset. Each process can read two blocks into the buffer at once and shuffle their tuples. As a result, as shown in Figure~\ref{fig:pytorch-impl}(b), the shuffled tuples of process 0 are in sequence from block 1/7 (denoted as $b_{1|7}$) and then from block 5/3. Likewise, the shuffled tuples of process 1 are in sequence from block 0/6 and then from block 2/4. Since PyTorch sequentially performs mini-batch SGD on the first $\textit{batch}\_\textit{size}/\textit{PN}$ tuples of each process (denoted as $g_1$ on block 1/7 and block 0/6) and synchronizes their gradients (sums and averages $g_1$) every batch, it is the same as performing mini-batch SGD on the first $\textit{batch}\_\textit{size}$ tuples from block 1/7/0/6 (i.e., $g_1$ on $b_{1|7|0|6}$). Therefore, from the view of the whole dataset, PyTorch performs mini-batch SGD on the tuples first from block 1/7/0/6 and then from block 5/3/2/4. This is similar to the data order generated by single-process mode in Figure~\ref{fig:pytorch-impl}(c), where the buffer size is $\textit{PN}$ times of that of the multi-process mode. Here, the $\textit{PN}$ is 2 and the buffer can keep 4 blocks at once. In summary, due to block shuffle, multi-buffer based tuple shuffle, and synchronization protocol of mini-batch SGD, multi-process \sys can achieve shuffled data order similar to that of the single-process \sys.

\section{Implementation in the Database}
\label{sec:impl}

We integrate \sys into PostgreSQL. 
Our implementation provides a simple SQL-based interface for users to invoke \sys, with the following query template: 
\begin{verbatim}
                    SELECT * FROM table TRAIN BY model WITH params.
\end{verbatim}
This interface is similar to that offered by existing in-DB ML systems such as MADlib~\cite{madlib-paper, madlib} and Bismarck~\cite{Feng:2012:TUA:2213836.2213874}.
Examples of the \texttt{params} include \emph{learning\_rate = 0.1}, \emph{max\_epoch\_num = 20}, and \emph{block\_size = 10MB}.
\sys outputs various metrics after each epoch, such as \emph{training loss}, \emph{accuracy}, and \emph{execution time}.

\paragraph*{\underline{The Need of a Deeper Integration}}
Unlike existing in-DB ML systems, we choose not to implement our \sys strategy %\lijie{mechanism?} 
using UDAs. Instead, we choose to integrate \sys into 
PostgreSQL by introducing physical operators. \textit{Is it necessary for such a deeper integration with database system internals, compared to a potential UDA-based implementation without modifying the internals?}

While a UDA-based implementation is conceptually possible, it is not natural for \sys,
which requires accessing low-level data layout information such as table pages, tuples, and buffers.
A deeper integration with database internals makes it much easier to reuse such functionalities that have been built into the core APIs offered by database system internals but %may have 
not yet have been externally exposed as UDAs.
Moreover, such a physical-level integration opens up the door for more advanced optimizations, such as double-buffering that will be illustrated in Section~\ref{Optimizations}.

\subsection{Design Considerations}

As discussed in Section~\ref{algo-design}, \sys consists of three steps: (1) block-level shuffling, (2) tuple-level shuffling, and (3) SGD computation.
Accordingly, we design three physical operators, one for each of the three steps:
\begin{itemize}
    \item \texttt{BlockShuffle}, an operator for randomly accessing blocks;
    \item \texttt{TupleShuffle}, an operator for buffering a batch of blocks and shuffling their tuples;
    \item \texttt{SGD}, an operator for the SGD computation.
\end{itemize}
We then chain these three operators together to form a pipeline, and implement the \texttt{getNext()} method for each operator, following the classic Volcano-style execution model~\cite{Graefe94} that is also the query execution paradigm of PostgreSQL.

One challenge is the design and implementation of the \emph{SGD} operator, which requires an \emph{iterative} procedure that is not typically supported by database systems.
We choose to implement it by leveraging the built-in \emph{re-scan} mechanism of PostgreSQL to reshuffle and reread the data after each epoch.

%\iffalse
We store the dataset as a table in PostgreSQL using the schema of $\langle \textit{id}, \textit{features\_k}[], \textit{features\_v}[], \textit{label} \rangle$, 
which is similar to the one used by Bismarck~\cite{Feng:2012:TUA:2213836.2213874}.
For sparse datasets, $\textit{features\_k}[]$ indicates which dimensions have non-zero values, and $\textit{features\_v}[]$ refers to the corresponding non-zero feature values. For dense dataset, only $\textit{features\_v}[]$ is used.

Currently, we store the (learned) machine learning model as an in-memory object (a C-style Struct) with an ID in the PostgreSQL's kernel instead of using UDA. Users can initialize the model hyperparameters via the query. For the inference, users can execute a query as ``\texttt{SELECT table PREDICT BY model ID}'', which invokes the learned model for prediction.

%vspace{-0.5em}
\subsection{Physical Operators}

The control flow of the three operators is shown in Figure~\ref{fig:implemenation}, which leverages a PostgreSQL's pull-style dataflow to read tuples and perform the SGD computation. 
In the following, we assume that the readers are familiar with
the structure of PostgreSQL's operators, e.g., functions such as \T{ExecInit()} and \T{getNext()}.

After parsing the input query, \sys invokes \T{ExecInit()} of each operator to initialize their states such as ML models and I/O buffers. 
At each epoch, the \texttt{SGD} operator pulls tuples from the \texttt{TupleShuffle} operator for SGD computation, which further pulls tuples from the \texttt{BlockShuffle} operator.
The \texttt{BlockShuffle} operator is responsible for shuffling blocks and reading their tuples.
We now present the implementation of these operators.

\begin{figure*}[t]
\centering
\includegraphics[width=0.9\textwidth]{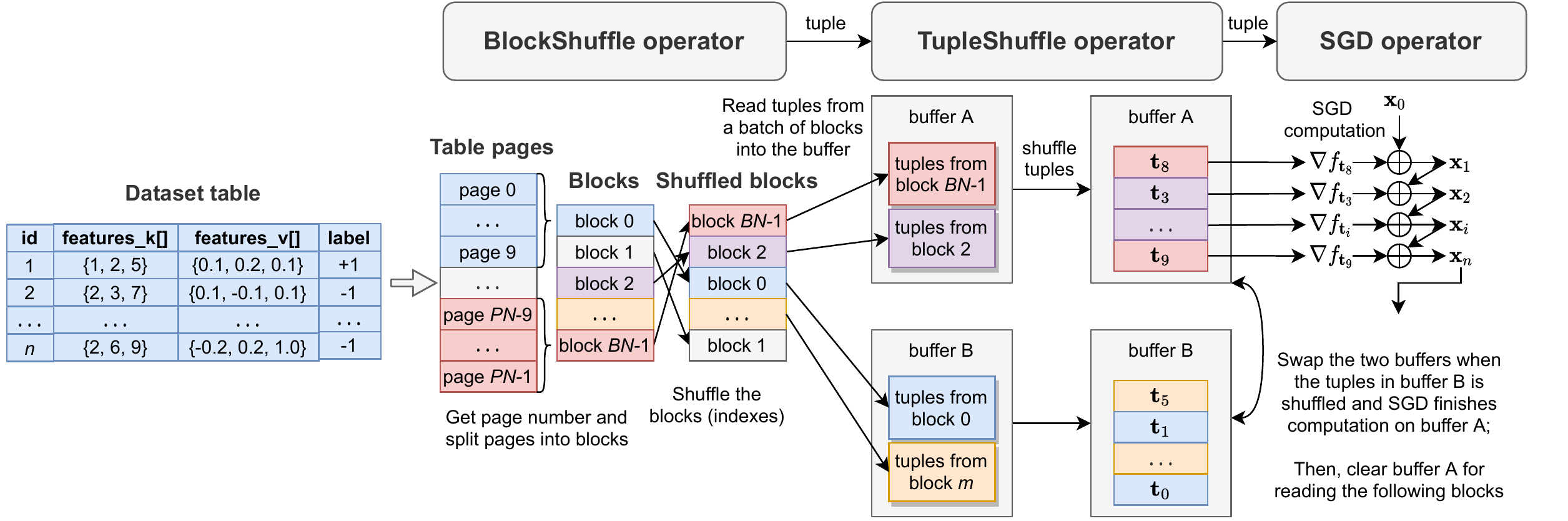}
%vspace{-1em}
\caption{The implementation of \sys with three new operators and the ``double-buffering'' optimization, in PostgreSQL.}

%vspace{-1.5em}
\label{fig:implemenation}
\end{figure*}

\vspace{0.5em}
\textbf{(1) BlockShuffle:}
This operator first obtains the total number of pages by PostgreSQL's  internal function as \T{RelationGetNumberOfBlocks()}.
It then computes the number of blocks $\textit{BN}$ by 
$\textit{BN} = \textit{page\_num} * \textit{page\_size} / \textit{block\_size}.$
After that, it shuffles the block indices $[0, \dots, \textit{BN}-1]$ and obtains shuffled block ids, where each block corresponds to a batch of contiguous table pages.
For each shuffled block id, it reads the corresponding pages using \texttt{heapgetpage()} and returns each fetched tuple to the \texttt{TupleShuffle} operator. 
The \texttt{BlockShuffle} operator is somewhat similar to PostgreSQL's \texttt{Scan} operator, although the \texttt{Scan} operator reads pages sequentially instead of randomly.

\vspace{0.5em}
\textbf{(2) TupleShuffle:}
It first allocates a buffer, and then pulls the tuples one by one from the \texttt{BlockShuffle} operator by invoking its \T{ExecTupleShuffle()}, i.e., \T{getNext()}. 
Each pulled tuple is transformed to an \T{SGDTuple} object, which is then copied to the buffer.
Once the buffer is filled, it shuffles the buffered tuples, which is similar to how the \T{Sort} operator works in PostgreSQL. 
After that, the shuffled tuples are returned one by one to the \T{SGD} operator.

\vspace{0.5em}
\textbf{(3) SGD:}
It first initializes an ML model in \T{ExecInitSGD()} and then executes SGD in \T{ExecSGD()}. 
At each epoch, \T{ExecSGD()} pulls tuples from \T{TupleShuffle} one by one, and runs SGD computation.
Once all tuples are processed, an epoch ends.
It then has to reshuffle and reread the tuples for the next epoch, using the \emph{re-scan} mechanism of PostgreSQL.
Specifically, after each epoch, \texttt{SGD} invokes \T{ExecReScan()} of \T{TupleShuffle} to reset the I/O states of the buffer.
It further invokes \texttt{ExecReScan()} of \T{BlockShuffle} to reshuffle the block ids. After that, \T{SGD} operator can reread shuffled tuples via \T{ExecSGD()} for the next epoch. This is similar to the multiple table/index scans in PostgreSQL's \T{NestedLoopJoin}.

\lstdefinestyle{mystyle}{
    backgroundcolor=\color{backcolour},
    commentstyle=\color{codegreen},
    keywordstyle=\color{magenta},
    numberstyle=\tiny\color{codegray},
    stringstyle=\color{codepurple},
    basicstyle=\ttfamily\footnotesize,
    breakatwhitespace=false,         
    breaklines=true,                 
    captionpos=b,                    
    keepspaces=true,                 
    numbers=left,                    
    numbersep=5pt,                  
    showspaces=false,                
    showstringspaces=false,
    showtabs=false,                  
    tabsize=3
}
\lstset{style=mystyle}

%vspace{-0.5em}
\subsection{Optimizations} \label{Optimizations}

As discussed in Section~\ref{algo-design}, \sys introduces 
additional overheads for \emph{buffer copy} and \emph{shuffle}.
To reduce them, we use a double-buffering strategy as shown in Figure~\ref{fig:implemenation}. 
Specifically, we launch two concurrent threads for \T{TupleShuffle} with two buffers.
One \emph{write} thread is responsible for pulling tuples from \T{BlockShuffle} into one buffer and shuffling the buffered tuples; the other \emph{read} thread is responsible for reading tuples from another buffer and returning them to \T{SGD}. %for SGD computation. 
The two buffers are swapped once one is full and the other has been consumed by \T{SGD}. 
As a result, the data loading (i.e., block-level and tuple-level shuffling) and SGD computation can be executed concurrently, reducing the overhead.

%-------------------------------------------------------------------------------

%-------------------------------------------------------------------------------
\begin{table}[t]
\centering
\caption{Datasets. The first four are from LIBSVM~\cite{LIBSVM-dataset}. For \T{criteo}, we extract 98M tuples from the \T{criteo} terabyte dataset. For \T{yfcc}, we extract 3.6M tuples from the \T{yfcc100m} dataset~\cite{yfcc-dataset}; the \emph{outdoor} and \emph{indoor} tuples are marked as negative (-1) and positive (+1).
\#Tuples like 4.5/0.5M refer to 4.5M tuples for training and 0.5M tuples for testing.}
%%vspace{0.1cm}
\label{table:data}
%vspace{-1em}
{%\small
\begin{tabular}{l|c| c|c|c}
Name & Type & \#Tuples & \#Features & Size in DB or on disk  \\
\toprule
higgs & dense & 10.0/1.0M & 28 & 2.8 GB \\
susy & dense & 4.5/0.5M & 18 & 0.9 GB \\
epsilon & dense & 0.4/0.1M & 2,000 & 6.3 GB  \\
\toprule
criteo & sparse &92/6.0M & 1,000,000 & 50 GB \\
yfcc & dense & 3.3/0.3M & 4,096 & 55 GB  \\
\toprule
ImageNet & image &1.3/0.05M & 224*224*3 & 150 GB \\
cifar-10 & image & 0.05/0.01M & 3,072 & 178 MB  \\
yelp-review-full & text & 0.65/0.05M & - & 600 MB \\
\toprule

\end{tabular}
}
%vspace{1em}
\end{table}

%vspace{-0.5em}
\section{Evaluation}
\label{sec:eval}

We evaluate \sys on both deep learning and in-DB ML systems. Our goal is to 
study the statistical and hardware efficiency of \sys  when applied to these systems, i.e., whether it can 
achieve both high accuracy and high performance. For deep learning system, we integrate \sys into PyTorch and compare it with other shuffling strategies, using both image classification and natural language processing workloads.
For in-DB ML systems, we compare our PostgreSQL-based implementation with two state-of-the-art systems, {\em Apache MADlib and Bismarck} with diverse linear models and datasets. Next, we first evaluate deep learning models in Section~\ref{dl-results}. After that, we evaluate linear models with standard SGD in PostgreSQL in Section~\ref{SGD-LR-SVM}. We further evaluate linear models with mini-batch SGD as well as other types of (continuous, multi-class, and feature-ordered) datasets in PostgreSQL in Section~\ref{Mini-batch-SGD}.

\subsection{Experimental Setup}

\subsubsection{Runtime} 
For deep learning workloads, we perform them on our ETH Euler cluster~\cite{ETH} as batch jobs. Each job can use maximum 16 CPU cores, 160 GB RAM, and 8 NVIDIA GeForce RTX 2080 Ti GPUs. The datasets are stored in the cluster's block-based Lustre parallel file system. 

For in-DB ML workloads, we perform the experiments on a single \textit{ecs.i2.xlarge} node in Alibaba Cloud.
It has 2 physical cores (4 vCPU), 32 GB RAM, 1000 GB HDD, and 894 GB SSD. The HDD has a maximum 140 MB/s bandwidth, and the SSD has a maximum 1 GB/s bandwidth. 
Moreover, \sys only uses a single physical core, and we bind the two threads (see Section~\ref{Optimizations}) to the same physical core using the ``\T{taskset -c}'' command.
We run all experiments in PostgreSQL under CentOS 7.6, and we clear the OS cache before running each experiment.

\subsubsection{Datasets}
For deep learning, we use both \T{cifar-10} dataset with 10 classes~\cite{Cifar-10} and \T{ImageNet} dataset with 1,000 classes~\cite{Imagenet} for image classification. We also use \T{yelp-review-full} dataset~\cite{yelp} with 5 classes for text classification.
For in-DB ML, we use a variety of datasets in our evaluation, including dense/sparse and small/large ones as shown in Table~\ref{table:data}. 
The datasets in Table~\ref{table:data} are stored in PostgreSQL for in-DB ML experiments.
For both deep learning and in-DB ML, we focus on the evaluation over the \emph{clustered} datasets, since SGD with various data shuffling strategies can achieve comparable convergence rates on the \emph{shuffled} datasets, as shown in Figure~\ref{bench-clustered-data}.

\subsubsection{Models and Parameters}
For the evaluation on deep learning system, we perform the classical VGG19 and ResNet18 models 
on the \T{cifar-10} dataset, and perform more complex ResNet50 model on the \T{ImageNet} dataset. We also perform the classical HAN \cite{HAN} and TextCNN \cite{textcnn} models for text classification, with pre-trained word embeddings \cite{glove} on the \T{yelp-review-full} dataset.
For the evaluation on in-DB ML systems, 
we mainly train two popular generalized linear models, logistic regression (LR) and support vector machine (SVM), that are also supported by Bismarck and MADlib. We briefly report the evaluation results for other liner models such as linear regression and softmax regression, which are currently only supported by MADlib.

Currently, Bismarck and MADlib only support two of the baseline data shuffling strategies, namely, \emph{No Shuffle} and \emph{Shuffle Once}, which we compare our PostgreSQL-based implementation against. 
Note that the code of \emph{MRS Shuffle} has not been released 
by Bismarck yet.\footnote{We have confirmed this with the author of Bismarck (private communication).}
Therefore, 
we leave it out of our end-to-end comparisons.
Instead, we implemented \emph{MRS Shuffle} by ourselves in PyTorch
and compare with it
when we discuss the convergence behavior of
different data shuffling strategies (like Figure~\ref{fig:convergence-rates}).

The model hyperparameters include the learning rate, 
the decay factor, and the maximum number of epochs. By default,
we use an exponential learning rate decay with 0.95. 
We set the number of epochs to 20 for in-DB ML and 50 for deep learning models. Only for ResNet50 on \texttt{ImageNet}, we set the epoch number to 100 and decay the learning rate every 30 epochs, following the official PyTorch-ImageNet code~\cite{ImageNet-PyTorch}. We use grid search to tune the best learning rate from \{0.1, 0.01, 0.001\}. 
For in-DB ML, we use the same initial parameters 
and hyperparameters among the compared systems, including MADlib, Bismarck, and \sys.

\subsubsection{Settings of \sys}
\sys has two more parameters, i.e., the buffer size and the block size.
We experiment with a diverse range of buffer sizes in \{1\%, 2\%, 5\%, 10\%\}
and the block size is chosen in \{2MB, 10MB, 50MB\}.
We always use the same buffer size (by default 10\% of the whole dataset size) for \emph{Sliding-Window Shuffle}, \emph{MRS Shuffle}, and our \sys.

\subsubsection{Settings of PostgreSQL}
For PostgreSQL, we set the \T{work\_mem} to be the maximum RAM size and 
tune \T{shared\_buffers}.
Note that PostgreSQL can further compress high-dimensional datasets using the so-called TOAST~\cite{TOAST} technology, which tries to compress large field value or break it into multiple physical rows. 
For our dense \T{epsilon} and \T{yfcc} datasets with 2,000+ dimensions, PostgreSQL uses TOAST to compress their \emph{features\_v} columns.

\subsection{Evaluation with Deep Learning System}
\label{dl-results}

\sys is a general data shuffling strategy for any SGD implementation. To understand its impact on deep learning systems and workloads, 
we implement the \sys strategy as well as others in PyTorch and compare them
using deep learning models, for both image classification and text classification. In the following parts, we first evaluate the end-to-end performance and convergence rate of \sys on the \T{ImageNet} dataset. We then
study the convergence rate of \sys and compare it with others on other datasets, including \T{cifar-10} and \T{yelp-review-full}.  Furthermore, we explore whether \sys can work on other first-order optimization methods such as Adam~\cite{Adam-paper} in Section~\ref{adam-results}.

\subsubsection{Performance comparison} To evaluate the performance of \sys in PyTorch, we perform ResNet50 model on \texttt{ImageNet} dataset, which has 1.3 million images in 1,000 classes. We run this experiment using multi-process \sys with 8 GPUs and 16 CPU cores in our ETH Euler cluster. We evaluate two different block sizes (5MB and 10MB that are about 50 and 100 images per block), as our Euler cluster reads data in terms of 4MB+ blocks. The batch size is set to $512$ images, so each process performs SGD computation on $512/8 = 64$ images per batch. The buffer size of each process is 1.25\% of the whole dataset, thus the total buffer size of all processes is 10\% of the whole dataset. The number of data loading threads for each process is set to two, since we have twice as many CPU cores as GPUs. The learning rate is initialized as 0.1 and is decayed every 30 epochs.

Figure~\ref{fig:imagenet-results} illustrates the end-to-end execution time of ResNet50 model on the large \texttt{ImageNet} dataset, using different shuffling strategies. We report both the Top 1 and Top 5 accuracy. From Figure~\ref{fig:imagenet-results}(a) and \ref{fig:imagenet-results}(b), we can observe that \sys is 1.5$\times$ faster than \emph{Shuffle Once} to converge and the converged accuracy of \sys is similar to that of \emph{Shuffle Once}. The main reason of the slowness of \emph{Shuffle Once} is that it needs about 8.5 hours to shuffle the large ($\sim$150 GB) \texttt{ImageNet} dataset and store the shuffled dataset in our Euler cluster. In contrast, \sys eliminates this long data shuffling time. The second reason is that our \sys has limited per-epoch overhead. Although \sys has block shuffle and tuple shuffle overhead, the per-epoch time of \sys with 5MB or 10MB block is only $\sim$15\% longer than that of the fastest \emph{No Shuffle} baseline. The reason is that \sys reads data in terms of blocks which is comparable to sequential read on block-based parallel file system.

For the convergence rate comparison as shown in  Figure~\ref{fig:imagenet-results}(c) and ~\ref{fig:imagenet-results}(d), we can see that the convergence rates of both \sys with 5 MB block and \sys with 10 MB block are comparable to that of \emph{Shuffle Once}. Although \sys with 10 MB block has a bit lower convergence rate than \emph{Shuffle Once} in the first 30 epochs, it can catch up with \emph{Shuffle Once} in the following epochs and converges to the similar accuracy as \emph{Shuffle Once}. In contrast, the converged accuracy of \emph{No Shuffle} is close to $0\%$.

\begin{figure*}[t]
\centering
\includegraphics[width=0.99\textwidth]{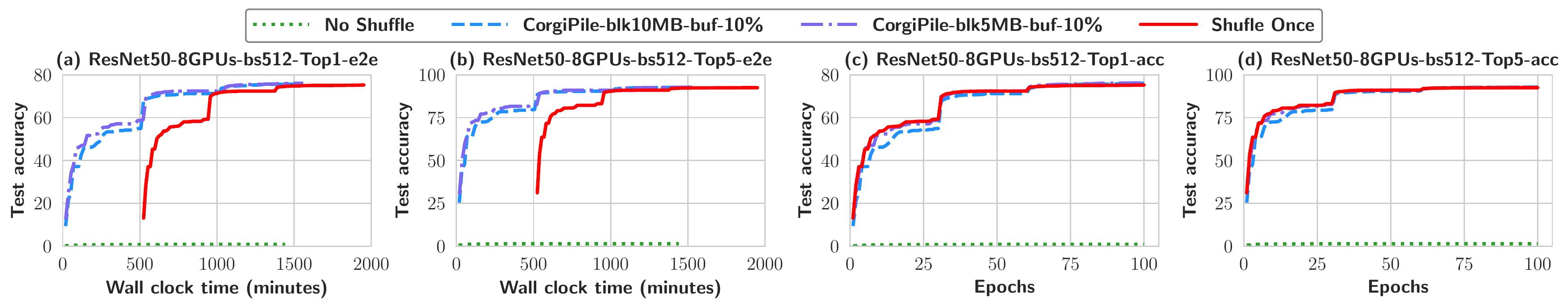}
\caption{The convergence rates of ResNet50 with different data shuffling strategies, for the clustered ImageNet dataset. TopN refers to the Top-N accuracy.}
%%vspace{-1em}
\label{fig:imagenet-results}
\end{figure*}

\begin{figure*}[t]
\centering
\includegraphics[width=0.99\textwidth]{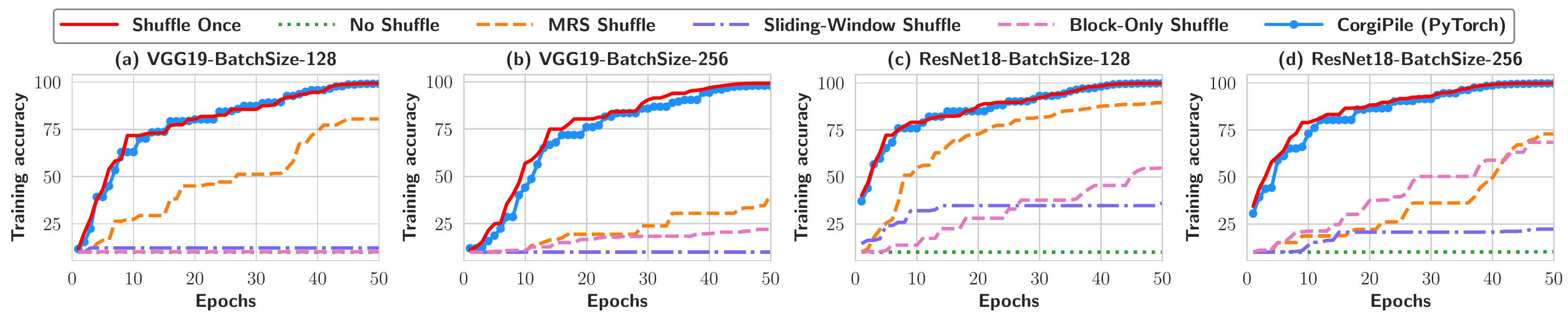}
\caption{The convergence rates of deep learning models with different data shuffling strategies and batch sizes, for the clustered 10-class \T{cifar-10} image dataset.}
%%vspace{-1em}
\label{fig:cifar-10-results}
\end{figure*}

\begin{figure*}[th]
\centering
\includegraphics[width=0.99\textwidth]{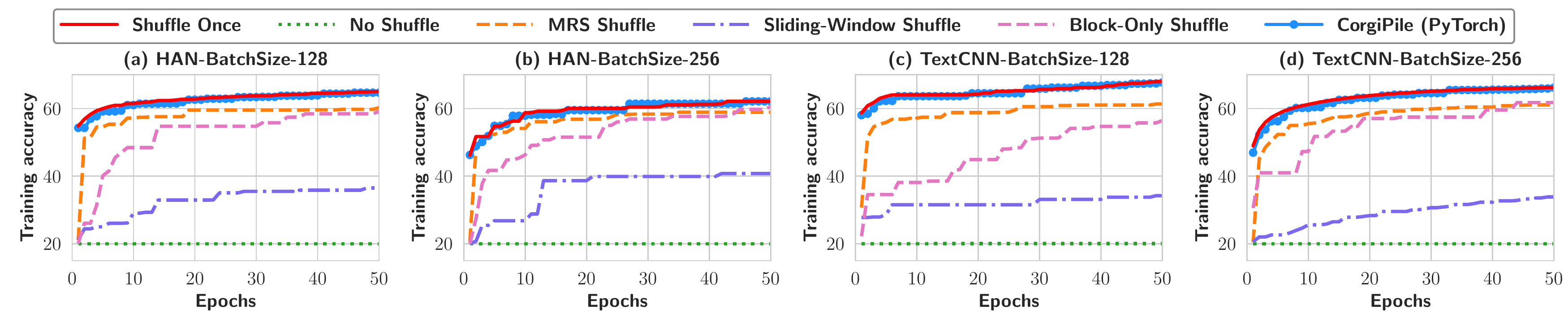}
\caption{The convergence rates of deep learning models with different data shuffling strategies and batch sizes, for the clustered 5-class \T{yelp-review-full} text dataset.}
%%vspace{-1em}
\label{fig:yelp-nlp-results}
\end{figure*}

\begin{figure*}[th]
\centering
\includegraphics[width=0.99\textwidth]{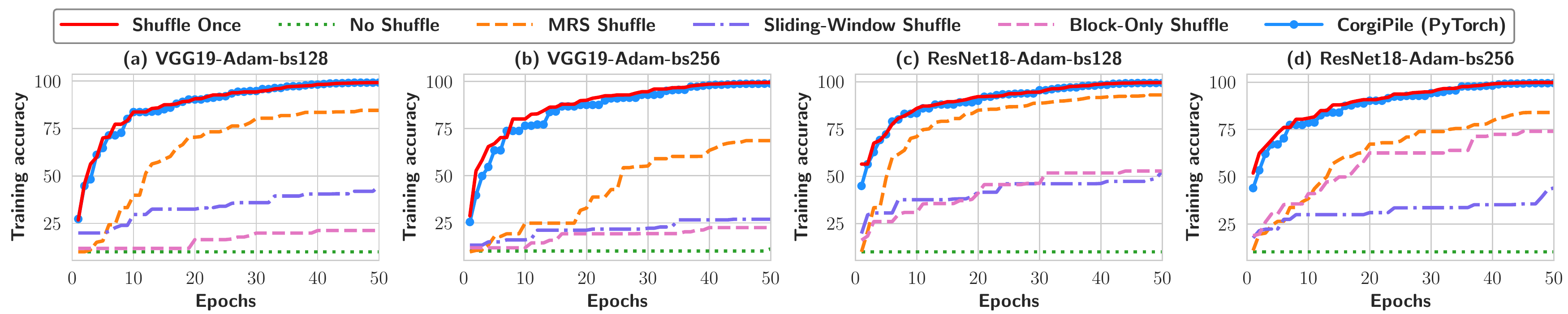}
\caption{The convergence rates of deep learning models with different data shuffling strategies and batch sizes, using Adam instead of SGD for the clustered cifar-10 image dataset.}
%%vspace{-1em}
\label{fig:cifar-adam}
\end{figure*}
\subsubsection{Convergence rate comparison} 
We perform PyTorch with \sys and other strategies on \T{cifar-10} image dataset and \T{yelp-review-full} dataset using a single GPU. The \T{cifar-10} dataset contains 50,000 training images in 10 classes, while the \T{yelp-review-full} dataset has 650,000 reviews (text messages) in 5 classes.

For the image classification, Figure~\ref{fig:cifar-10-results} illustrates the convergence rates of VGG19 and ResNet18 models 
on the \emph{clustered} \T{cifar-10} dataset, with different shuffling strategies and different batch sizes (128 and 256).
The buffer size is $10\%$ of the whole dataset and the block size is set to 100 images per block.
This figure shows that \sys achieves comparable convergence rate and accuracy to the \emph{Shuffle Once} baseline, whereas other strategies suffer from lower accuracy due to the partially random order of the shuffled tuples. Specifically, the \emph{Sliding-Window Shuffle} used by TensorFlow only performs better than \emph{No Shuffle}, and suffers from large (50\%+) accuracy gap with \emph{Shuffle Once} and \sys.

For the text classification, Figure~\ref{fig:yelp-nlp-results} shows the convergence results of classical HAN \cite{HAN} and TextCNN \cite{textcnn} models with pre-trained word embeddings \cite{glove} on the clustered \T{yelp-review-full} dataset \cite{yelp} with different batch sizes (128 and 256). The buffer size is still $10\%$ of the whole dataset and the block size is set to 1,000 reviews per block. Again, \sys achieves similar convergence rate and accuracy to the \emph{Shuffle Once} baseline, whereas other strategies converge to lower accuracy. Specifically, \emph{No Shuffle} only achieves about $20\%$ accuracy for the two models, and the \emph{Sliding-Window Shuffle} used by TensorFlow only achieves about $40\%$ accuracy. \emph{MRS Shuffle} converges faster and better than them but still suffers from lower accuracy than the \emph{Shuffle Once} baseline.

The above results indicate that \sys can achieve both good statistical efficiency and hardware efficiency for deep learning models on non-convex optimization problems. When integrated to PyTorch, \sys is 1.5$\times$ faster than the \emph{Shuffle Once} baseline on the large \texttt{ImageNet} dataset in our experiments.

\subsubsection{Beyond SGD optimizer}\label{adam-results}

Although our work focuses on the most popular SGD optimizer,
we are confident that \sys can also be used in other optimizers, such as more complex first-order optimizers like Adam~\cite{Adam-paper}.

Here, we further perform VGG19 and ResNet18 models on the clustered \texttt{cifar-10} dataset using Adam instead of SGD. Figure~\ref{fig:cifar-adam} shows the convergence results with different batch sizes (128 and 256). The result of the convergence rate comparison is similar to that of SGD in Figure~\ref{fig:cifar-10-results}. Our \sys still achieves comparable convergence rate and accuracy to the best \emph{Shuffle Once} baseline, whereas other shuffling strategies suffer from lower accuracy.

\subsection{Evaluation on SGD with In-DB ML Systems} \label{SGD-LR-SVM}

\begin{figure*}[t]
\centering
\includegraphics[width=0.99\textwidth]{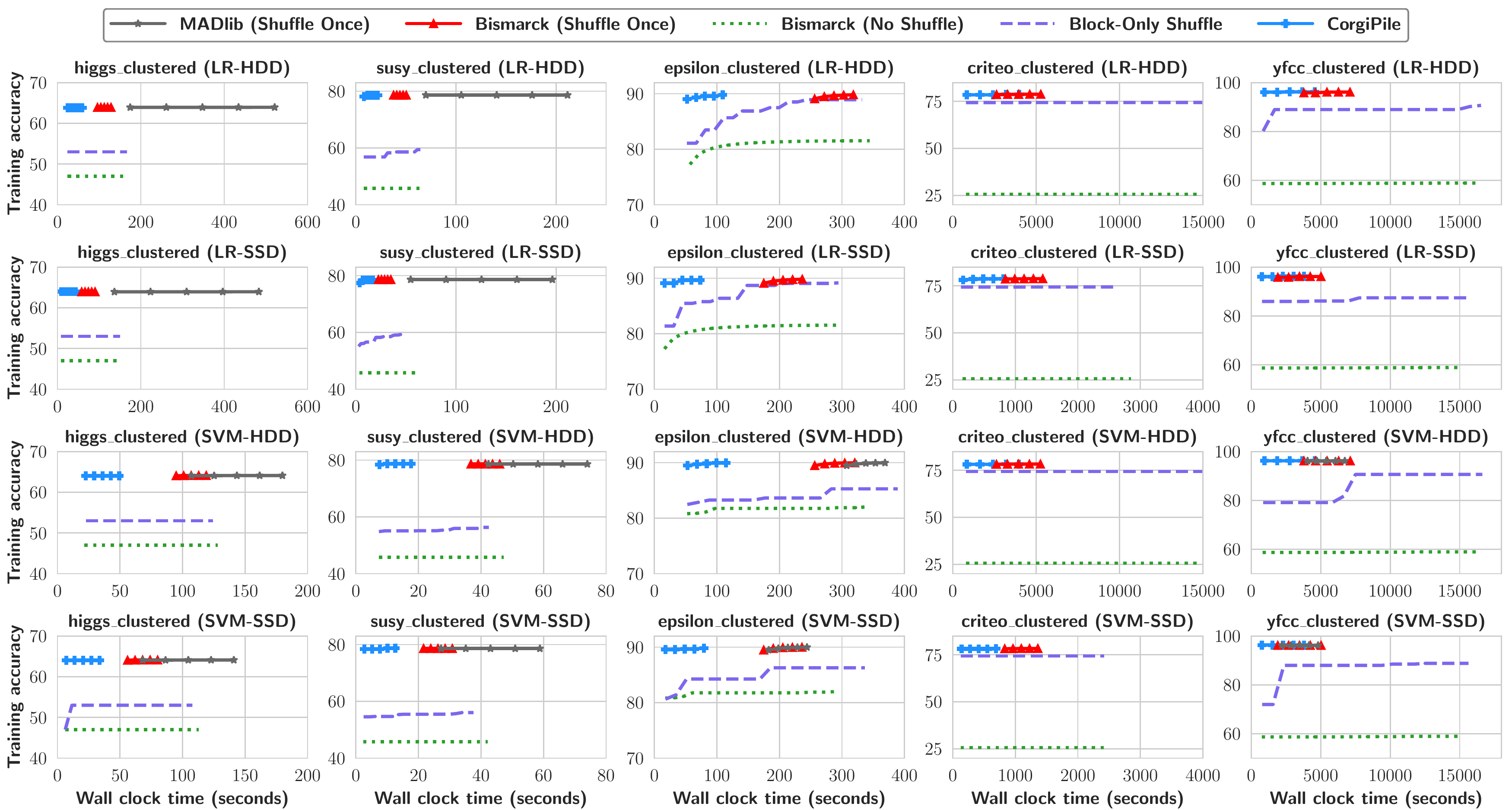}
\caption{The end-to-end execution time of SGD with different data shuffling strategies in PostgreSQL, for clustered datasets on HDD and SSD. We only show the first 5 epochs for \emph{Shuffle Once} and \sys, since they converge in 1-3 epochs.}
%vspace{-1em}
\label{convergence-results}
\end{figure*}

For in-DB ML, we first evaluate \sys in terms of the end-to-end execution time. The compared systems include \emph{No Shuffle} and \emph{Shuffle Once} strategies in MADlib and Bismarck,  
as well as a simpler version of our \sys named \emph{Block-Only Shuffle}, to see how \sys behaves without tuple-level shuffle.  
We then analyse the convergence rates, in comparison with other strategies, including \emph{MRS Shuffle} and \emph{Sliding-Window Shuffle}.
We finally study the overhead of \sys by comparing the per-epoch execution time of \sys with the fastest \emph{No Shuffle} baseline. 

In the following, we set the buffer size to $10\%$ of the whole dataset and 
block size to 10 MB for all methods. We report a sensitivity analysis 
on the impact of buffer sizes and block sizes in Section~\ref{sec:sensitivity}.

\subsubsection{End-to-end Execution Time}

Figure~\ref{convergence-results} presents the end-to-end execution time of SGD for in-DB ML systems, for \emph{clustered} datasets on both HDD and SSD. The end-to-end execution time includes: (1) the time for shuffling the data, i.e., \emph{Shuffle Once} needs to perform a full data shuffle before SGD starts running;\footnote{Therefore, \emph{Shuffle Once} in MADlib and Bismarck starts later than the others.} (2) the data caching time, i.e., the time spent on loading data from disk to the OS cache during the first epoch;\footnote{This is determined by the I/O bandwidth. Since SSD has higher I/O performance than HDD, the GLMs' first epoch on SSD starts earlier than that on HDD.}
and (3) the execution time of all epochs.

From Figure~\ref{convergence-results}, we can observe that \sys converges the fastest among all systems, and simultaneously achieves comparable converged accuracy to the best \emph{Shuffle Once} baseline,
usually within 1-3 epochs because of the large number of data tuples. 
Compared to \emph{Shuffle Once} in MADlib and Bismarck, \sys converges 2.9$\times$-12.8$\times$ faster than MADlib and 2$\times$-4.7$\times$ faster than Bismarck, on HDD and SSD. This is due to the eliminated data shuffling time. 
For example, for the clustered \T{yfcc} dataset on HDD, \sys can converge in 16 minutes, whereas \emph{Shuffle Once} in Bismarck needs 50 minutes to shuffle the dataset and another 15 minutes to 
execute the first epoch (to converge). 
That is, when \sys converges, \emph{Shuffle Once} is still performing data shuffling. For other datasets like \T{criteo} and \T{epsilon}, 
similar observations hold.
Moreover, data shuffling using \texttt{ORDER BY RANDOM()} in PostgreSQL, as implemented by \emph{Shuffle Once} in MADlib/Bismarck,
requires 2$\times$ disk space to generate and store the shuffled data. Therefore, \sys is both more efficient and requires less space.

MADlib is slower than Bismarck given that it performs more computation on some auxiliary statistical metrics and has less efficient implementation~\cite{DB4ML}. 
Moreover, for high-dimensional dense datasets, such as \T{epsilon} and \T{yfcc}, MADlib LR cannot finish even a single epoch within 4 hours,
due to some expensive matrix computations on a metric named \emph{stderr}.\footnote{We have confirmed this behavior with the MADlib developers.} 
MADlib's SVM implementation does not have this
problem and can finish its execution on high-dimensional dense datasets.
In addition, MADlib currently does not support training LR/SVM on sparse datasets such as \T{criteo} dataset.

\subsubsection{Convergence rate comparison} \label{converge-rate}

\begin{table}[t]
\centering
\caption{The final training and testing accuracy of \emph{Shuffle Once} (SO) and \sys.}
%vspace{-1em}
\label{table:acc}
{\small
\begin{tabular}{l|c| c|c|c}
 & \multicolumn{2}{c|}{LR (SO $|$ \sys)} & \multicolumn{2}{c}{SVM (SO  $|$ \sys)}   \\
\cmidrule(lr){2-5}
 Dataset & Train acc. (\%) & Test acc. (\%) & Train acc. (\%) & Test acc. (\%) \\
\toprule
higgs & 64.04  $|$ 64.07 & 64.04  $|$ 64.06 & 64.11  $|$ 64.22 & 63.93  $|$ 63.95 \\
susy & 78.61  $|$ 78.54 & 78.69  $|$ 78.66 & 78.61  $|$ 78.66 & 78.73  $|$ 78.66 \\
epsilon & 90.02  $|$ 90.01 & 89.77  $|$ 89.74 & 90.12  $|$ 90.11 & 89.81  $|$ 89.80  \\
\toprule
criteo & 78.97  $|$ 78.91 & 78.77  $|$ 78.69 & 78.31  $|$ 78.41 &  78.45  $|$ 78.44  \\
yfcc & 96.43  $|$ 96.38 & 96.14  $|$ 96.11 & 96.35  $|$ 96.31 & 96.23  $|$ 96.20 \\
% \toprule
% cifar-10  & 99.44 | 99.10 &  92.29 | 92.15 & 99.88 | 99.86 & 93.82 | 94.11

\end{tabular}
}
\end{table}

\begin{figure*}[t]
\centering
\includegraphics[width=0.99\textwidth]{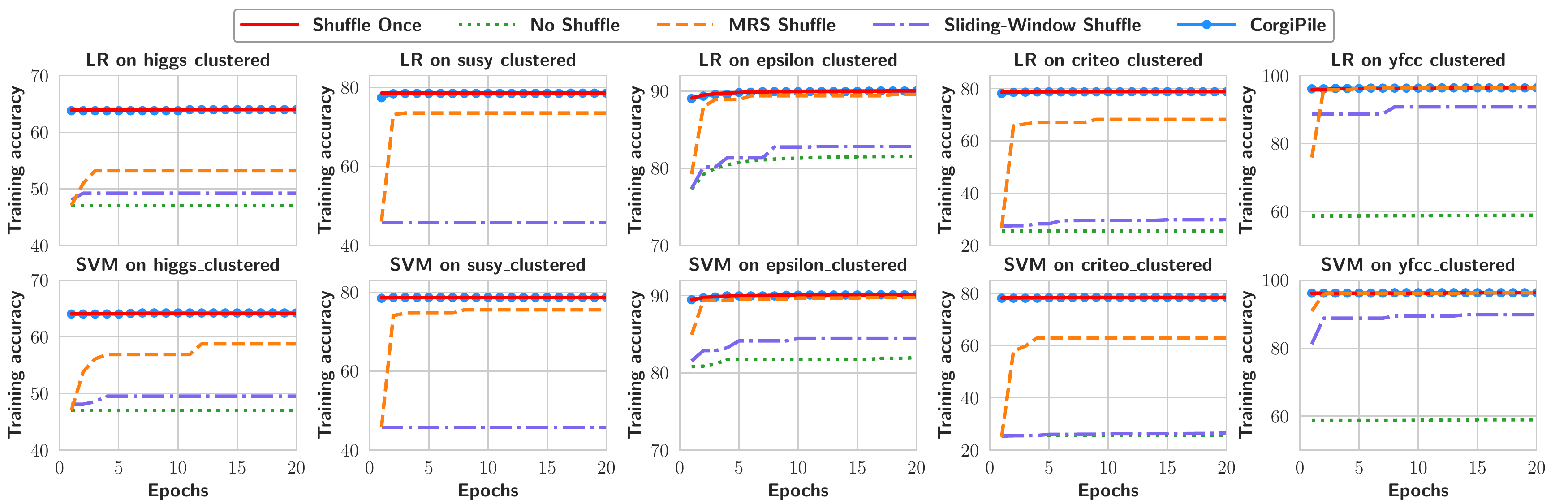}
%%vspace{-2em}
%vspace{-1em}
\caption{The convergence rates of LR and SVM with different shuffling strategies for clustered datasets.}
%vspace{-1em}
\label{fig:convergence-rates}
\end{figure*}

For all datasets inspected, the gap between \emph{Shuffle Once} and \sys is below 1\%
for the final training/testing accuracy, as shown in Table~\ref{table:acc}. 
We attribute this to the fact that \sys can yield good data randomness in each epoch of SGD (Section~\ref{Convergence_analysis}).
\emph{No Shuffle} results in the lowest accuracy when SGD converges, as illustrated in Figure~\ref{convergence-results}.
The \emph{Block-Only Shuffle} baseline, where we simply omit tuple-level shuffle in \sys, can achieve higher accuracy than \emph{No Shuffle} but lower accuracy than \emph{Shuffle Once}. The reason is that \emph{Block-Only Shuffle} can only yield a \emph{partially} random order, and the tuples in each block can all be negative or positive for the clustered data.

Since \emph{MRS Shuffle} and \emph{Sliding-Window Shuffle} are not available in the current MADlib/Bismarck, we use our own implementations (in PyTorch)
and compare their convergence rates. Figure~\ref{fig:convergence-rates} shows the convergence rates of all strategies, where \emph{Sliding-Window}, \emph{MRS}, and \sys all use the same buffer size (10\% of the whole dataset). As shown in Figure~\ref{fig:convergence-rates}, \emph{Sliding-Window Shuffle} suffers from lower accuracy, whereas \emph{MRS Shuffle} only achieves comparable accuracy to \emph{Shuffle Once} on \T{yfcc} but suffers on the other datasets.

\begin{figure*}[t]
\centering
\includegraphics[width=0.99\textwidth]{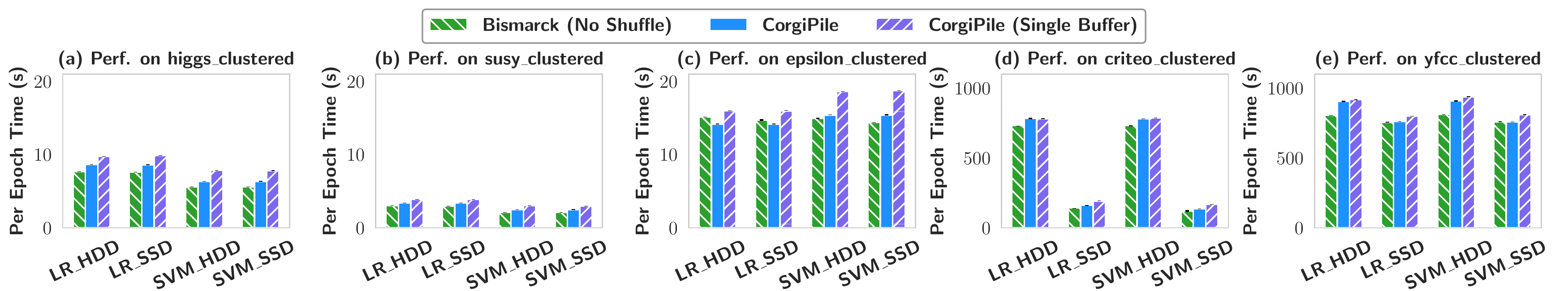}
\caption{The average per-epoch time of SGD with Bismarck (\emph{No Shuffle}), \sys, and \sys with single buffer in PostgreSQL, for clustered datasets on HDD and SSD. It shows that \sys is up to 11.7\% slower than the fastest \emph{No Shuffle}.}
%vspace{-1em}
\label{fig:per-iter-perf-clustered-data}
\end{figure*}

\subsubsection{Per-epoch Overhead} \label{overhead}
To study the overhead of \sys, we compare its per-epoch execution time with the fastest \emph{No Shuffle} baseline, as well as the single-buffer version of \sys, as shown in
Figure~\ref{fig:per-iter-perf-clustered-data}.
We make the following three observations.

\begin{itemize}[leftmargin=*]
    \item For small datasets with in-memory I/O bandwidth, the average per-epoch time  of \sys is comparable to that of \emph{No Shuffle}. 
    \item For large datasets with disk I/O bandwidth, the average per-epoch time of \sys is up to $\sim$1.1$\times$ slower than that of \emph{No Shuffle}, i.e., it incurs at most an additional 11.7\% overhead, due to buffer copy and tuple shuffle. 
    \item By using double-buffering optimization, \sys can achieve up to 23.6\% shorter per-epoch execution time, compared to its single-buffering version.
\end{itemize}

The above results reveal that \sys with double-buffering optimization can introduce limited overhead (11.7\% longer per-epoch execution time), compared to the best \emph{No Shuffle} baseline.

%vspace{-0.5em}
\subsubsection{Sensitivity Analysis}
\label{sec:sensitivity}

We next study the effects of different buffer sizes, I/O bandwidths, and block sizes
for \sys.

\vspace{0.5em}
\noindent\textbf{The effects of buffer size.} 
Figure~\ref{fig:buffer-size-var} reports the convergence behavior 
of \sys on the two largest datasets with different buffer sizes:
1\%, 2\%, and 5\% of the dataset size. We see that \sys 
only requires a buffer size of 2\% to maintain the same convergence
behavior as \textit{Shuffle Once}. With a 1\% buffer, it
only converges slightly slower than \emph{Shuffle Once},
but achieves the same final accuracy. On the other hand,
as discussed in previous sections, \textit{Sliding-Window Shuffle}
and \textit{MRS Shuffle} achieve a much lower accuracy even when
given a much larger buffer (10\%).

\begin{figure*}[t!]
    \centering
    \begin{subfigure}[\sys's convergence with varying buffer sizes.]{
        \includegraphics[width=0.48\textwidth]{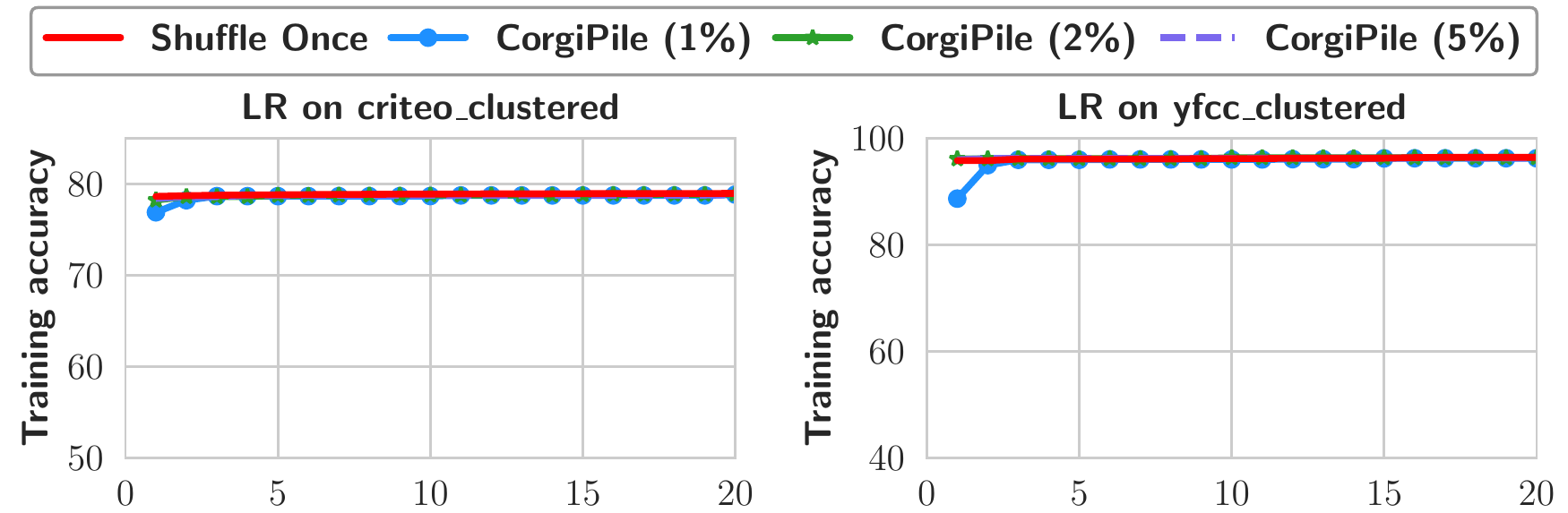}
      \label{fig:buffer-size-var}}
    \end{subfigure}
    \begin{subfigure}[Per-epoch time of \sys with varying block sizes.]{
        \includegraphics[width=0.48\textwidth]{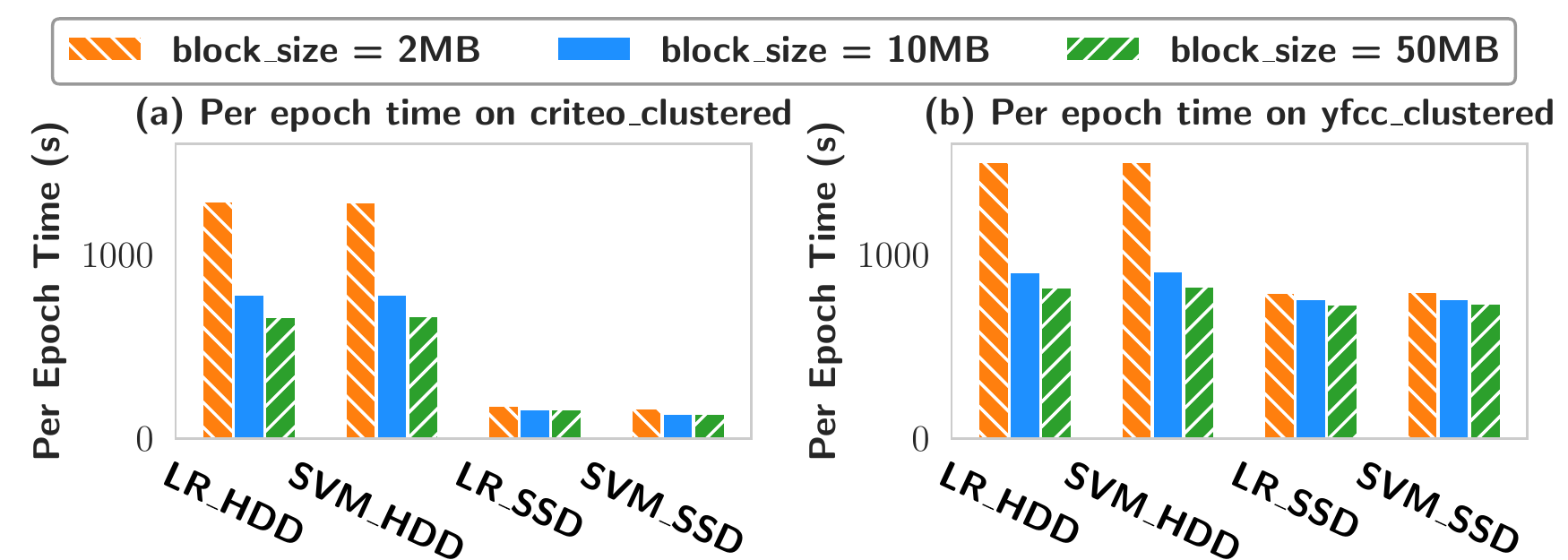}
        \label{fig:block-size-var}
    }
    \end{subfigure}
    \caption{The effects of buffer size and block size on \sys.}
    \label{all}
\end{figure*}

\vspace{0.5em}
\noindent\textbf{The effects of I/O bandwidth.} 
%We now study the effects of I/O bandwidths on the per-epoch execution time.
As shown in Figure~\ref{fig:per-iter-perf-clustered-data}, 
for smaller datasets such as \T{higgs}, \T{susy}, and \T{epsilon}, \sys on HDD and \sys on SSD achieve the similar per-epoch times, since these datasets have been cached in memory after the first epoch. For larger datasets such as \T{criteo},
\sys is faster on SSD than HDD, as expected.
Interestingly, for \T{yfcc}, \sys achieves similar performance on both HDD and SSD. The reason is that the TOAST compression on \T{yfcc} slows down data loading to only $\sim$130 MB/s on both SSD and HDD, whereas it achieves $\sim$700/130 MB/s on SSD/HDD for \T{criteo} without this compression. The same observation holds for \emph{No Shuffle} on HDD/SSD for these datasets.

\vspace{0.5em}
\noindent\textbf{The effects of block size.}
We vary the block size in $\{$2MB, 10MB, 50MB$\}$ on the large \T{criteo} and \T{yfcc} datasets. Figure~\ref{fig:block-size-var} shows that the per-epoch time decreases as the block size increases from 2MB to 50MB, due to the higher I/O bandwidth (throughput). However, the time difference between 10MB and 50MB is limited (under 10\%), because using 10MB has achieved the highest possible I/O bandwidth (130 MB/s on HDD). In practice, we recommend users to choose the smallest block size that can achieve high-enough I/O throughput, using I/O test commands such as ``fio'' in Linux.

\subsubsection{Performance comparison with PyTorch}
\begin{figure}[t]
\centering
\includegraphics[width=0.5\columnwidth]{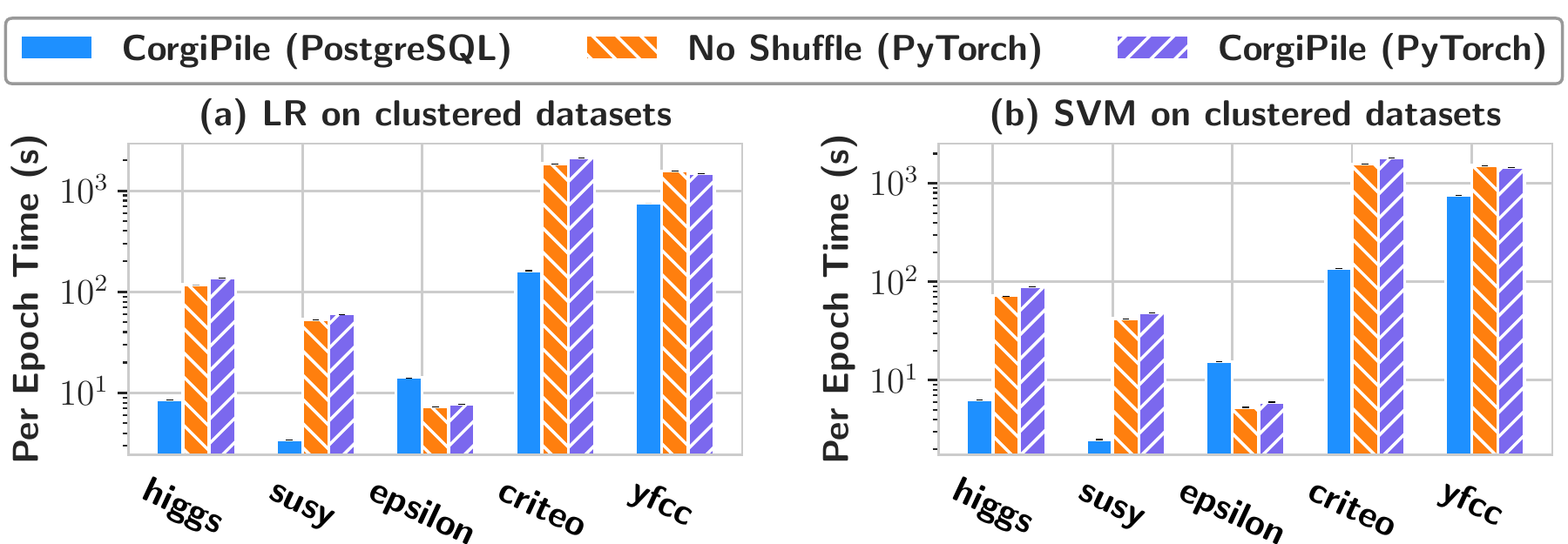}
%%vspace{-2em}
%vspace{-1em}
\caption{The per-epoch time comparison between in-DB \sys and PyTorch on SSD. }
% %vspace{-1em}
\label{pytorch-vs-corgipile}
\end{figure}

To further understand the performance gap between our in-DB \sys and the start-of-the-art PyTorch outside DB, we compare them in two ways.

(1) \sys in PostgreSQL vs. PyTorch:
Figure~\ref{pytorch-vs-corgipile} shows the per-epoch time comparison between \sys in PostgreSQL and PyTorch with \emph{No Shuffle}. For PyTorch, we load small datasets into memory before training to reduce the I/O overhead, and store the large \T{criteo} and \T{yfcc} datasets on disk. The comparison results in Figure~\ref{pytorch-vs-corgipile} show that our in-DB \sys is 2-16$\times$ faster than
 PyTorch on \T{higgs}, \T{susy}, \T{criteo}, and \T{yfcc} datasets. We speculate that this is because PyTorch has high overhead of Python-C++ invocations of forward/backward/update functions
for each tuple, and these datasets have a large number (3-92 millions) of tuples. Only for the \T{epsilon} dataset, PyTorch is 2-3$\times$ faster than \sys. The reason is that this dataset is compressed in DB by \emph{TOAST}~\cite{TOAST}. \sys needs to decompress each tuple, while PyTorch directly computes on the in-memory uncompressed data.

(2) Outside DB: Figure~\ref{pytorch-vs-corgipile} shows that PyTorch with \sys introduces small (up to 16\%) overhead
compared to PyTorch with \emph{No Shuffle}, which is consistent with
what we observed in DB.

\subsection{Evaluation on Mini-Batch SGD and other types of datasets with In-DB ML Systems} \label{Mini-batch-SGD}

In the previous experiments, we focus on the  standard SGD algorithm, which updates the model per tuple. Since it is also common to use mini-batch SGD, we implement mini-batch SGD for \emph{CorgiPile},  \emph{Once Shuffle}, \emph{No Shuffle}, and \emph{Block-Only Shuffle}, using our in-DB operators in PostgreSQL. Since MADlib and Bismarck currently do not support mini-batch SGD for linear models, we compare these shuffling strategies based on our PostgreSQL implementations.

\subsubsection{Mini-batch LR and SVM models}

\begin{figure*}[t]
\centering
\includegraphics[width=0.99\textwidth]{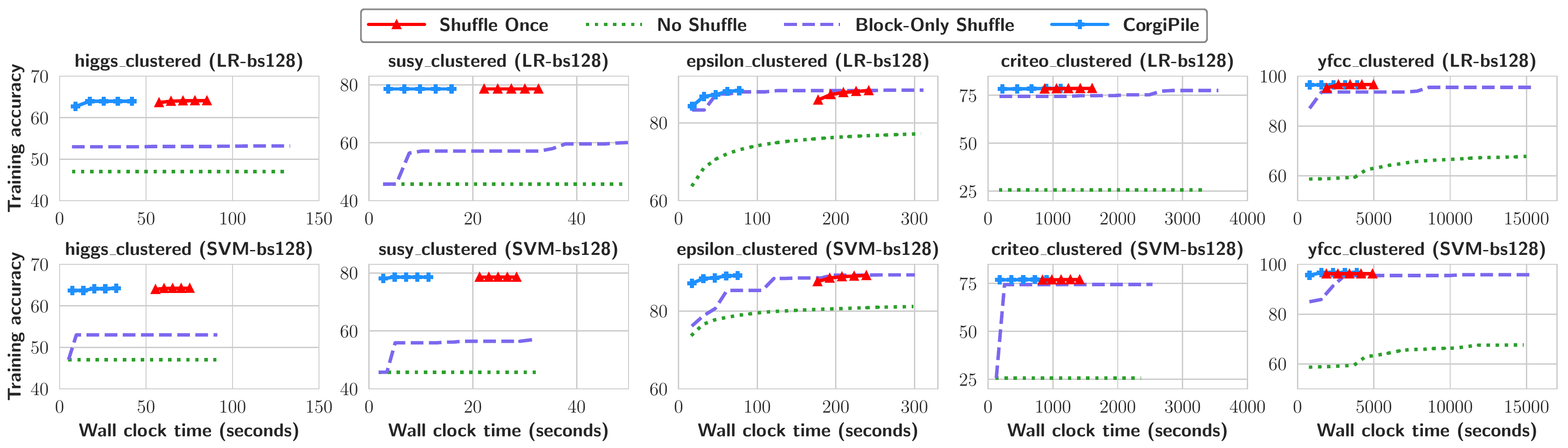}
\caption{The end-to-end execution time of LR and SVM using mini-batch SGD (\emph{batch\_size} = 128) in PostgreSQL, for clustered datasets on SSD.}
%vspace{-1em}
\label{lr-svm-mini-batch-sgd}
\end{figure*}

\begin{figure*}[t]
\centering
\includegraphics[width=0.99\textwidth]{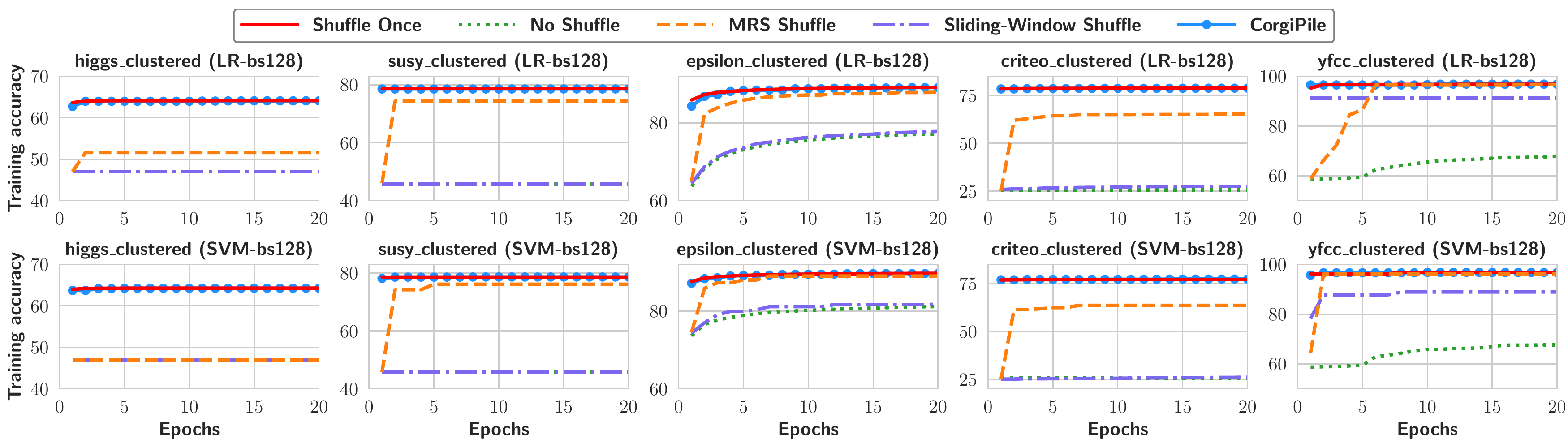}
%%vspace{-2em}
%vspace{-0.5em}
\caption{The convergence rates of LR and SVM using mini-batch SGD (\emph{batch\_size} = 128), for clustered datasets.} 
% %vspace{-1em}
\label{lr-svm-convengence-mini-batch-sgd-128}
\end{figure*}
We first
perform LR and SVM using mini-batch SGD on the clustered datasets. Figure~\ref{lr-svm-mini-batch-sgd} illustrates the end-to-end execution time of these two models in PostgreSQL on SSD. The result is similar to that of the standard SGD. Our \sys achieves comparable convergence rate and accuracy to \emph{Shuffle Once} but 1.7-3.3$\times$ faster than it to converge. Other strategies like \emph{No Shuffle} and \emph{Block-Only Shuffle} suffer from either lower converged accuracy or lower convergence rate. 

Figure~\ref{lr-svm-convengence-mini-batch-sgd-128} demonstrates the convergence rates of different shuffling strategies with \emph{batch\_size} =  128. From this figure, we can see that \emph{Sliding-Window Shuffle} and \emph{MRS Shuffle} have convergence rate or accuracy gap with \sys and \emph{Shuffle Once}, for clustered datasets. Only for \T{yfcc} dataset, \emph{MRS Shuffle} can converge to the similar accuracy to \sys and \emph{Shuffle Once}, but \emph{MRS Shuffle} has slower convergence rate.

\subsubsection{Linear regression and Softmax regression models}

\begin{figure}[t]
\centering
\includegraphics[width=0.99\textwidth]{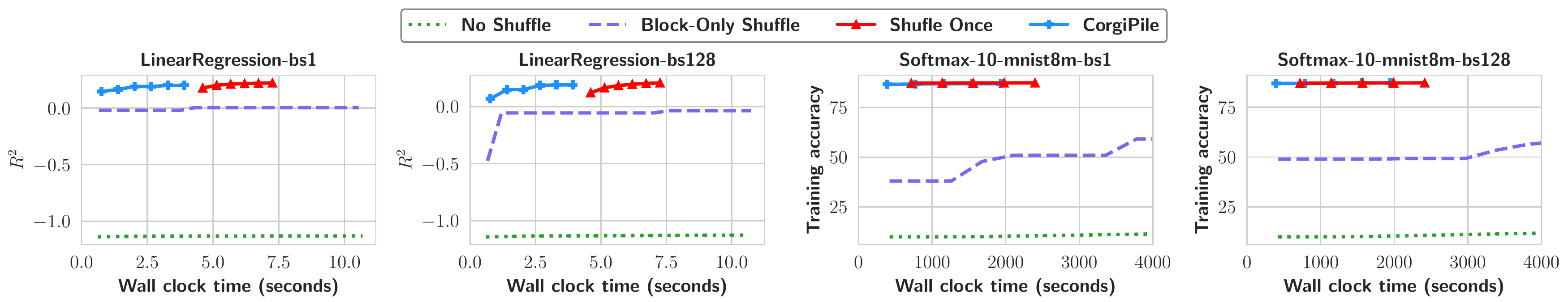}
\caption{The end-to-end time of linear and softmax regression in PostgreSQL, for clustered datasets on SSD. }
% %vspace{-1em}
\label{linear-softmax-mini-batch-sgd}
\end{figure}
Apart from LR/SVM on binary-class datasets, users may also want to train ML models on continuous and multi-class datasets in DB. Thus, we further implement \textit{linear regression} for training continuous dataset and \textit{softmax regression} (i.e., multinomial logistic regression) for multi-class datasets, based on our in-DB operators in PostgreSQL.  Figure~\ref{linear-softmax-mini-batch-sgd} shows the end-to-end execution time of linear regression for continuous \T{YearPredictionMSD} dataset \cite{LIBSVM-dataset} and softmax regression for 10-class \T{mini8m} dataset \cite{LIBSVM-dataset}, with different batch sizes on SSD.
Our \sys again achieves similar convergence rate and accuracy (i.e., coefficient of determination $R^2$ for linear regression) with the best \emph{Shuffle Once}, but 1.6-2.1$\times$ faster than it to converge.

\subsubsection{Beyond label-clustered datasets}

We conduct additional experiments using LR and SVM on all the binary-class datasets ordered by features instead of the labels. For low-dimensional \T{higgs} and \T{susy}, we sort each feature of them and report the statistics of the converged accuracy in Figure~\ref{feature-order}. For the other three high-dimensional datasets, we select 9 features such that 3/3/3 of them have the highest/lowest/median correlations with the labels. 

Figure~\ref{feature-order} shows that \emph{No Shuffle} again leads to lower accuracy than \emph{Shuffle Once}. Only for \T{yfcc} with image-extracted features and \T{epsilon} (with unknown features~\cite{epsilon}), the accuracy gap is limited. In contrast, \sys achieves similar converged accuracy to \emph{Shuffle Once} on all the datasets. This implies simply scanning does not work on the datasets clustered by labels or by features.

\begin{figure}[t]
\centering
\includegraphics[width=0.5\textwidth]{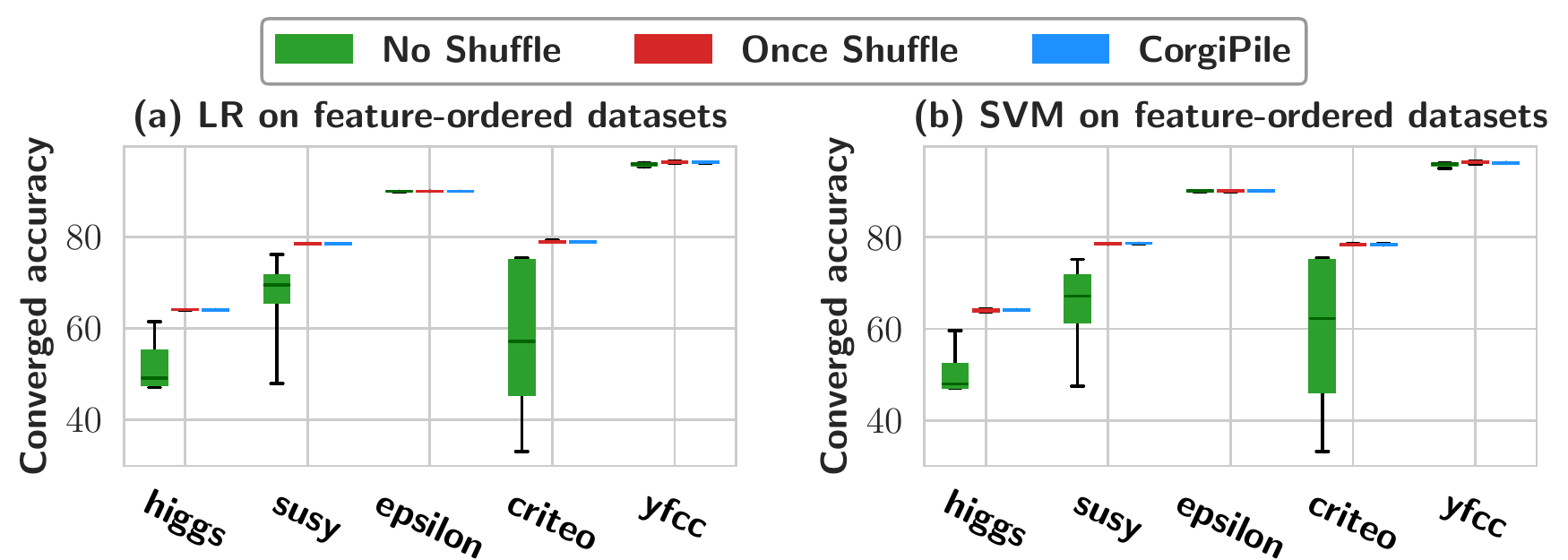}
\caption{The converged accuracy of LR and SVM on the datasets ordered by features instead of the label. }
% %vspace{-1em}
\label{feature-order}
\end{figure}

%-------------------------------------------------------------------------------

%-------------------------------------------------------------------------------
%vspace{-0.5em}
\section{Related Work}
\label{sec:related}

\paragraph*{Stochastic gradient descent (SGD)} SGD is broadly used in machine learning to solve large-scale optimization problems~\cite{bottou2010large}. 
It admits the convergence rate $O(1/T)$ for strongly convex objectives, and $O(1/\sqrt{T})$ for the general convex case~\cite{moulines2011non, ghadimi2013stochastic}, where $T$ refers to the number of iterations. 
For non-convex optimization problems, an ergodic convergence rate $O(1/\sqrt{T})$ is proved in~\cite{ghadimi2013stochastic}, and the convergence rate is $O(1/T)$ (e.g.,~\cite{haochen2018random}) under the Polyak-\L ojasiewicz condition~\cite{polyak1963gradient}. In the analysis of the above cases, the common assumption is that data is sampled uniformly and independently \emph{with replacement} in each epoch. We call SGD methods based on this assumption as \emph{vanilla SGD}.

%vspace{-0.5em}
\paragraph*{Data shuffling strategies for SGD}  In practice, random-shuffle SGD is a more practical and efficient way of implementing SGD~\cite{bottou2012stochastic}. In each epoch, the data is reshuffled and iterated one by one \emph{without replacement}. Empirically, it can also be observed that random-shuffle SGD converges much faster than vanilla SGD~\cite{bottou2009curiously,gurbuzbalaban2019random,haochen2018random}. 
In Section~\ref{sec:benchmark}, we empirically studied the state-of-the-art data shuffling strategies for SGD, including \emph{Epoch Shuffle}, \emph{No Shuffle}, \emph{Shuffle Once}, \emph{Sliding-Window Shuffle}~\cite{sliding-window} and \emph{MRS Shuffle}~\cite{Feng:2012:TUA:2213836.2213874}. Our empirical study shows that \emph{Shuffle Once} achieves good convergence rate but suffers from low performance, whereas other strategies suffer from low accuracy when running on top of \emph{clustered} data.

%vspace{-0.5em}
\paragraph*{In-DB ML} Previous work~\cite{zhang2014dimmwitted, madlib-paper, Feng:2012:TUA:2213836.2213874, DBLP:conf/sigmod/SchleichOC16, learning-over-join, F-system, linear-algebra, Scale-factorization-ml, DBLP:conf/pods/Khamis0NOS18, scalable-linear-algebra,  Jankov2021distributed,luo2021automatic,yuan2021tensor,kara2018columnml,SAP, ORE} has intensively discussed how to implement ML models on relational data, such as linear models~\cite{DBLP:conf/sigmod/SchleichOC16, learning-over-join, F-system}, linear algebra~\cite{linear-algebra, scalable-linear-algebra, luo2021automatic}, factorization models~\cite{Scale-factorization-ml}, neural networks~\cite{Jankov2021distributed,luo2021automatic,yuan2021tensor} and other statistical learning models~\cite{DBLP:conf/pods/Khamis0NOS18}, using Batch Gradient Descent (BGD) or SGD, over join or self-defined matrix/tensors, etc. The most common way of integrating ML algorithm into RDBMS is to use User-Defined Aggregate Functions (UDA). The representative in-DB ML tools are Apache MADlib~\cite{madlib-paper, madlib} and Bismarck~\cite{Feng:2012:TUA:2213836.2213874}, which use PostgreSQL's UDAs to implement SGD, and leverage SQL LOOP (Bismarck) or Python driver (MADlib) to implement iterations. 
Recently, DB4ML~\cite{DB4ML} proposes another approach called \emph{iterative transactions} to implement iterative SGD/graph algorithm in DB.  However, it still uses/assumes the \emph{Shuffle Once} strategy as that of Bismarck/MADlib. Since the source code of DB4ML has not been released yet, we only compare with MADlib and Bismarck.

%vspace{-0.5em}
\paragraph*{Scalable ML for distributed data systems} In recent years, there has been active research on integrating ML models into distributed database systems to enable scalable ML, such as MADlib on Greenplum~\cite{Greenplum-madlib}, Vertica-ML~\cite{Vertica-ML}, Google’s BigQuery ML~\cite{BigQuery}, Microsoft SQL Server ML Services~\cite{SQL-Server-ML}, etc. Another trend is to leverage big data systems to build scalable ML models based on different architectures, e.g.,
MPI~\cite{xgboost,lightgbm}, MapReduce~\cite{MLlib,MLlib-star,SimSQL}, Parameter Server~\cite{distbelief,petuum,heterops} and decentralization~\cite{decentra_nips17,D2}.
Recent work also started discussing how to integrate deep learning into databases~\cite{Study-on-dl-on-db, Cerebro}. Our \sys is a general data shuffling strategy for SGD and has been integrated into PyTorch and PostgreSQL. We believe that \sys can be potentially integrated into more above distributed data systems.

%-------------------------------------------------------------------------------

%-------------------------------------------------------------------------------
%vspace{-0.5em}
\section{Conclusion}
\label{sec:conclusion}

We have presented \sys, a simple but novel data shuffling strategy for efficient SGD computation on top of block-addressable secondary storage systems such as HDD and SSD.
\sys adopts a two-level (i.e., block-level and tuple-level) hierarchical shuffle mechanism that avoids the computation and storage overhead of full data shuffling while retaining similar convergence rates of SGD as if a full data shuffle were performed.
We provide a rigorous theoretical analysis on the
convergence behavior of \sys and further integrate it into both PyTorch and PostgreSQL.
Experimental evaluations demonstrate both statistical and hardware efficiency of \sys when compared to state-of-the-art deep learning system as well as the in-DB ML systems on top of PostgreSQL.

%-------------------------------------------------------------------------------

\clearpage
\balance

\bibliographystyle{abbrv}
\bibliography{main}

\clearpage

\appendix

\section{I/O performance on HDD and SSD} \label{IOTestOnHDDandSSD}

We have performed an I/O test on the HDD and SSD with different block sizes.
As illustrated in Figure~\ref{IOTest}, on most modern devices, randomly accessing small data tuple can be significantly slower than sequentially scanning data tuples (shown as the dash lines). However, as
the block size grows to a reasonable size (e.g., 10MB in this case), the performance
of randomly accessing blocks matches the performance
of sequential scan. In other words, random access block-wise can match the speed of a full shuffle while the order
of data tuples stay untouched.

\begin{figure}[h]
\centering
\includegraphics[width=0.5\textwidth]{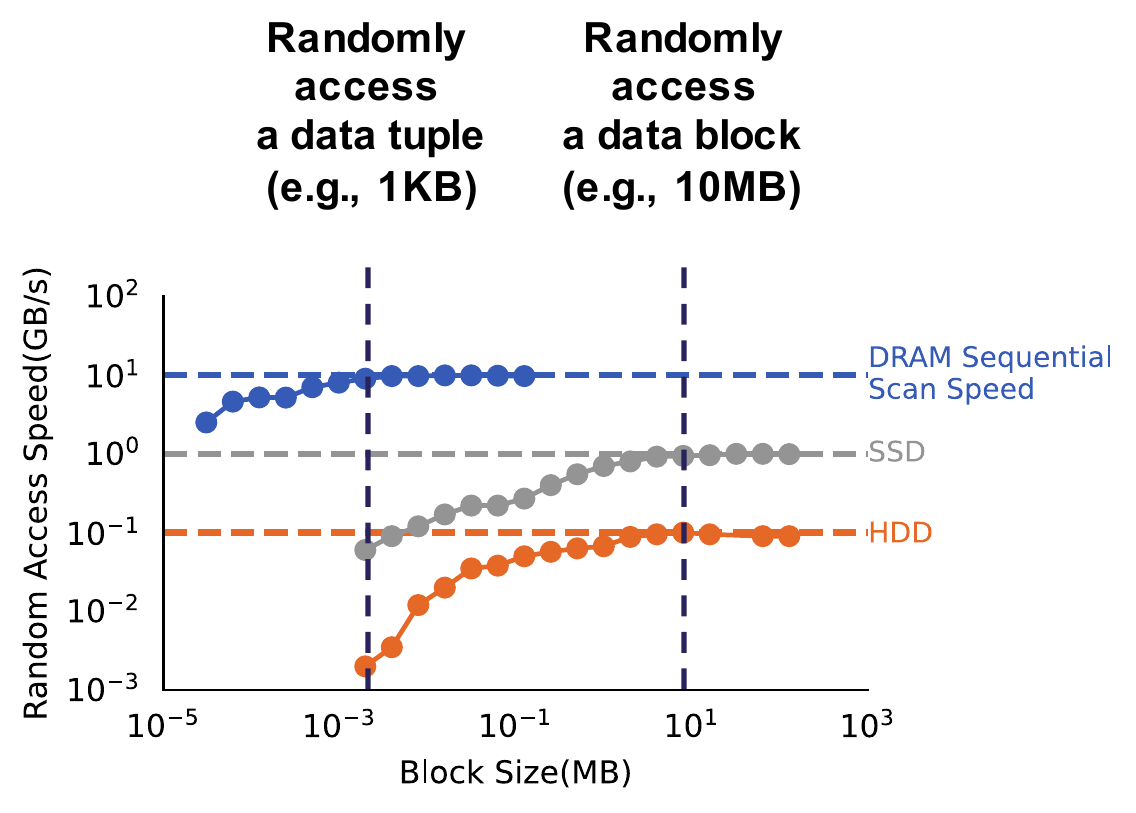}
%vspace{-1em}
\caption{Random Access Performance vs. Block Size. }
\label{IOTest}
\end{figure}

\section{Resource Usage in PostgreSQL}
% \lijie{I move this paragraph from original Section 6.2.6 to the appendix, since it is less interesting.}

\sys has higher CPU consumption than \emph{No Shuffle} and \emph{Shuffle Once}, because it performs data loading and the SGD computation concurrently using two hyper-threads on the same physical CPU core. 
For example, for \T{criteo} on HDD, %using a physical core with two hyper-threads, 
the maximum CPU usage of our \sys is 115\%, where the SGD-computation thread runs with 100\% CPU usage and the data-loading thread runs with 15\% CPU usage. 
\emph{MRS Shuffle}, \emph{Sliding-Window Shuffle}, and \sys have higher memory consumption than \emph{No Shuffle}, since they need to allocate buffers in memory. % for shuffling data tuples.
\emph{Shuffle Once} not only requires additional memory for data shuffling, but also
 requires 2$\times$ disk space to store the shuffled data.

\clearpage

\newpage
\onecolumn

\centerline{ {\LARGE \textbf{The Proof of the Theorems}} } \label{proof}
\renewcommand{\thesection}{\Alph{section}}
\setcounter{section}{0}

\section{Preliminaries}
Before presenting our theoretical analysis, we first show some preliminary definitions and lemmas which is important to our proofs.

\begin{lemma} \label{lemma:strongcvx}
Suppose  $f: \R^d \mapsto \R$ is a $\mu$-strongly convex function. Then $\forall \x,\y \in \R^d$, there is
\begin{align*}
F(\x) - F(\y) \geq \langle \nabla F(\x), \y-\x \rangle + \frac{\mu}{2} \|\x-\y\|^2
\end{align*} 
\end{lemma}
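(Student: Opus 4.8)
The plan is to obtain this inequality as an immediate consequence of the defining inequality of $\mu$-strong convexity, \eqref{eq:strongcvx}, with no auxiliary machinery. Recall that \eqref{eq:strongcvx} asserts that for all $\x,\y$ one has $F(\x) \geq F(\y) + \langle \x-\y, \nabla F(\y)\rangle + \frac{\mu}{2}\|\x-\y\|^2$; that is, the value of $F$ at any point dominates its first-order expansion taken around any other point, augmented by the quadratic curvature term $\frac{\mu}{2}\|\cdot\|^2$. Since the claim of the lemma differs from \eqref{eq:strongcvx} only in which of the two points plays the role of the expansion center, the entire argument reduces to a single instantiation of \eqref{eq:strongcvx} followed by an algebraic rearrangement. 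None of the smoothness or Hessian-Lipschitz hypotheses of Assumption~\ref{assump:global} are needed—only the strong-convexity definition itself.

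Concretely, I would instantiate \eqref{eq:strongcvx} with the two arguments interchanged, so that the gradient is evaluated at $\x$ and the displacement vector becomes $\y-\x$; the curvature term is untouched since $\|\y-\x\| = \|\x-\y\|$. Moving the function values to one side and the linear and quadratic contributions to the other then lands on the asserted inequality,
\[
F(\x) - F(\y) \geq \langle \nabla F(\x), \y-\x\rangle + \frac{\mu}{2}\|\x-\y\|^2,
\]
which is exactly the statement to be proved. Thus the lemma is simply a convenient restatement of \eqref{eq:strongcvx} in the orientation that is most useful downstream, namely the one in which the gradient and the function-value difference are both anchored at the same point $\x$.

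The computation is entirely routine, so the only point that truly warrants care is the bookkeeping of orientations: one must keep straight which point serves as the expansion center (and hence at which point the gradient is taken) and the sign convention of the displacement, $\y-\x$ versus $\x-\y$, because a careless swap flips the sign of the inner-product term and of the left-hand difference simultaneously. I therefore expect the \emph{hard part} to be purely notational—pinning down the convention so that the instantiation of \eqref{eq:strongcvx} produces the stated form rather than its reflection. Once this is fixed, the rearrangement is immediate, and the resulting inequality is then available as the basic first-order estimate invoked repeatedly in the strongly convex analysis that culminates in Theorem~\ref{thm:partialrs_sample}.
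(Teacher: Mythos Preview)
The paper does not supply a proof of this lemma; it is listed as a preliminary fact without argument. Your approach---swap the roles of $\x$ and $\y$ in \eqref{eq:strongcvx} and rearrange---is indeed the only sensible route. However, the algebraic rearrangement does not land where you claim. Interchanging $\x$ and $\y$ in \eqref{eq:strongcvx} gives
\[
F(\y) \geq F(\x) + \langle \y-\x, \nabla F(\x)\rangle + \frac{\mu}{2}\|\x-\y\|^2,
\]
which rearranges to
\[
F(\y) - F(\x) \geq \langle \nabla F(\x), \y-\x\rangle + \frac{\mu}{2}\|\x-\y\|^2,
\]
with $F(\y)-F(\x)$ on the left, not $F(\x)-F(\y)$. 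The inequality exactly as printed in the lemma is in fact false: take $F(t)=\frac{1}{2} t^2$ (so $\mu=1$), $\x=0$, $\y=1$; the left side is $-\frac{1}{2}$ and the right side is $\frac{1}{2}$. So the lemma carries a typographical slip, and the ``bookkeeping of orientations'' that you correctly flagged as the only delicate point actually exposes the misprint rather than confirming the stated form. The version the paper actually invokes later, namely \eqref{eq:proofstrongcvx}, is the correct orientation and matches precisely what your derivation genuinely yields.
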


\begin{lemma} \label{lemma:smooth}
Suppose $f(\x)$ is an $L$-smooth convex function. Then $\forall \x, \x^* \in \R^d$ where $\x^*$ is one global optimum of $f(\x)$,  there is
\begin{align*}
\|\nabla f(\x)\|_2^2  \leq 2L\left( f(\x) - f(\x^*) \right) 
\end{align*} 
\end{lemma}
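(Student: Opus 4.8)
The plan is to derive the inequality from the ``descent lemma'' consequence of $L$-smoothness, combined with the global optimality of $\x^*$; convexity, despite appearing in the hypothesis, will not actually be needed. First I would record the quadratic upper bound implied by having an $L$-Lipschitz gradient: for all $\x, \y \in \R^d$,
\begin{align*}
f(\y) \;\le\; f(\x) + \langle \nabla f(\x), \y - \x \rangle + \frac{L}{2}\|\y - \x\|^2 .
\end{align*}
This follows by writing $f(\y) - f(\x) = \int_0^1 \langle \nabla f(\x + t(\y - \x)), \y - \x \rangle \, dt$, subtracting $\langle \nabla f(\x), \y - \x\rangle$, and bounding the resulting integrand by $\|\nabla f(\x + t(\y-\x)) - \nabla f(\x)\|\,\|\y-\x\| \le Lt\|\y-\x\|^2$.

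Next I would fix $\x$ and minimize the right-hand side over $\y$. It is a convex quadratic in $\y$ whose minimizer is the gradient step $\y = \x - \tfrac{1}{L}\nabla f(\x)$; substituting this choice gives
\begin{align*}
f\!\left(\x - \tfrac{1}{L}\nabla f(\x)\right) \;\le\; f(\x) - \frac{1}{L}\|\nabla f(\x)\|^2 + \frac{1}{2L}\|\nabla f(\x)\|^2 \;=\; f(\x) - \frac{1}{2L}\|\nabla f(\x)\|^2 .
\end{align*}

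Finally, since $\x^*$ is a global minimizer of $f$, we have $f(\x^*) \le f\!\left(\x - \tfrac{1}{L}\nabla f(\x)\right)$. Chaining this with the previous line yields $f(\x^*) \le f(\x) - \tfrac{1}{2L}\|\nabla f(\x)\|^2$, and rearranging gives $\|\nabla f(\x)\|_2^2 \le 2L\left(f(\x) - f(\x^*)\right)$, which is the claim. I do not anticipate any genuine obstacle here; the only step needing a moment of care is the justification of the quadratic upper bound from the Lipschitz-gradient assumption, which is the single place the $L$-smoothness hypothesis is used (convexity is not required for this particular inequality).
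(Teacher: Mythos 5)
Your proof is correct and is the standard argument for this fact (the paper itself states this lemma without proof); the descent-lemma bound, minimization over $\y$ via the gradient step $\y = \x - \tfrac{1}{L}\nabla f(\x)$, and the appeal to global optimality of $\x^*$ are all sound, and your observation that convexity is not actually needed is also accurate.
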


\begin{fact} \label{fact:grad_hess}
Suppose $f: \R^d \mapsto \R$ is twice continuously differentiable. $\nabla f(\x) \in \R^{d}$ and $H(\x) \in \R^{d\times d}$ denote its derivative and Hessian at the point $\x\in \R^d$. Then we have, $\forall \y, \z \in \R^d$, 
\begin{align*}
\nabla f(\z) - \nabla f(\y) =\int_{0}^{\|\z-\y \|} H \left( \y + \frac{\z-\y}{\|\z-\y\|} t \right )  \frac{\z-\y}{\|\z-\y\|} \ \  \mathrm{d} t.
\end{align*}
For simplification, we further define 
\begin{align*}
\int_{\y}^{\z} H(\x) \mathrm{d}\x := \int_{0}^{\|\z-\y \|} H \left( \y + \frac{\z-\y}{\|\z-\y\|} t \right )  \frac{\z-\y}{\|\z-\y\|} \ \  \mathrm{d} t.
\end{align*}
which thus lead to $\nabla f(\z) - \nabla f(\y)  = \int_{\y}^{\z} H(\x) \mathrm{d}\x $.
\end{fact}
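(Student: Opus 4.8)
The plan is to recognize this as the multivariate fundamental theorem of calculus applied along the line segment joining $\y$ and $\z$, and to reduce it to the ordinary one-dimensional FTC for a vector-valued function. First I would dispose of the degenerate case $\z = \y$, where both sides vanish: the right-hand integral is taken over an interval of length zero, and $\nabla f(\z) - \nabla f(\y) = 0$. This also ensures that the unit vector $\u := (\z - \y)/\|\z - \y\|$ is well defined in all that follows.

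For $\z \neq \y$, I would introduce the parametrized curve $t \mapsto \y + t\u$ for $t \in [0, \|\z - \y\|]$, which traces the segment from $\y$ (at $t = 0$) to $\z$ (at $t = \|\z - \y\|$), and define the vector-valued function $g(t) := \nabla f(\y + t\u) \in \R^d$. Since $f$ is twice continuously differentiable, $\nabla f$ is continuously differentiable, so $g$ is a $C^1$ map on the compact interval $[0, \|\z - \y\|]$. The key step is the chain rule: differentiating $g$ componentwise gives $g'(t) = H(\y + t\u)\,\u$, where $H = \nabla^2 f$ is the Jacobian of $\nabla f$, i.e., the Hessian. Because $g'$ is continuous, applying the ordinary fundamental theorem of calculus to each of the $d$ coordinates yields
\[
g(\|\z - \y\|) - g(0) = \int_0^{\|\z - \y\|} g'(t)\, \mathrm{d}t = \int_0^{\|\z - \y\|} H\!\left(\y + t\u\right) \u \, \mathrm{d}t.
\]
Substituting $g(\|\z - \y\|) = \nabla f(\z)$, $g(0) = \nabla f(\y)$, and $\u = (\z - \y)/\|\z - \y\|$ recovers exactly the claimed identity. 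The second display in the statement is merely the definitional abbreviation $\int_\y^\z H(\x)\,\mathrm{d}\x$ for this same integral, so nothing further need be shown there.

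Since the argument is elementary, there is no genuine obstacle. The only points requiring care are the degenerate case $\z = \y$ and the justification that the FTC may be applied to the vector-valued $g$ by treating its $d$ coordinates separately; this is legitimate precisely because the twice-continuous-differentiability of $f$ makes $g'$ continuous, hence each coordinate integrand Riemann integrable.
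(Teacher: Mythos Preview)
Your proof is correct. The paper states this as a ``Fact'' without proof, treating it as a standard preliminary; your argument via the one-dimensional fundamental theorem of calculus applied componentwise to $g(t) = \nabla f(\y + t\u)$ along the segment is exactly the standard justification and fills in what the paper leaves implicit.
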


\begin{lemma} \label{lemma:converge}
Suppose there are two non-negative sequences $\{a_{s}\}_{s=0}^{+\infty}$, $\{b_s\}_{s=0}^{+\infty}$ satisfying
\begin{align*}
A_0 \eta_s a_s \leq (1-\mu A_1 \eta_s) b_s - b_{s+1} + A_2 \eta_s^2  + A_3 \eta_s^3 + A_4 \eta_s^4
\end{align*}
where $\eta_s = \frac{3}{A_1 \mu(s+a)}$ with $a\geq 1$, $\mu > 0, A_0>0, A_1>0, A_2>0, A_3>0, A_4 > 0$ are constants.
Then we have,  for $S \geq 0$,
\begin{align*}
 \frac{\sum_{s=1}^S w_s  a_s}{\sum_{s=1}^S w_s} \leq  \frac{ 4A_1 \mu a^4  b_1}{3 A_0 S^4}   + \frac{12A_2}{A_0 A_1 \mu} \frac{\sum_{s=1}^S (s+a)^2}{S^4}  + \frac{36A_3}{A_0 A_1^2 \mu^2} \frac{\sum_{s=1}^S (s+a)}{S^4} + \frac{108A_4}{A_0 A_1^3 \mu^3} \frac{1}{S^3}
\end{align*} 
where we let $w_s = (s+a)^3$.
\end{lemma}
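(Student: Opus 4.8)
\textbf{Proof strategy for Lemma~\ref{lemma:converge}.} This is a deterministic recursion-unrolling estimate of the type standard in SGD analysis with a $\Theta(1/s)$ step size; the plan is to specialize $\eta_s$, pass to a polynomially-weighted telescoping sum, and then carefully track constants. First I would substitute $\eta_s=\frac{3}{A_1\mu(s+a)}$ into the hypothesis. The point of this particular choice is that $\mu A_1\eta_s=\frac{3}{s+a}$, so that $1-\mu A_1\eta_s=\frac{s+a-3}{s+a}$, and each power $\eta_s^{j}$ collapses to a single monomial $\big(\frac{3}{A_1\mu}\big)^{j}(s+a)^{-j}$. With this, the recursion reads $A_0\eta_s a_s\le \frac{s+a-3}{s+a}\,b_s-b_{s+1}+A_2\eta_s^2+A_3\eta_s^3+A_4\eta_s^4$.

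Next I would multiply the inequality at index $s$ by the weight $\frac{A_1\mu}{3}(s+a)^4$ (equivalently, by $(s+a)^3/\eta_s$). The left side becomes exactly $A_0(s+a)^3a_s=A_0 w_s a_s$, i.e., the numerator of the quantity to be bounded; the three error terms become $\frac{3A_2}{A_1\mu}(s+a)^2$, $\frac{9A_3}{A_1^2\mu^2}(s+a)$, and $\frac{27A_4}{A_1^3\mu^3}$, which after the final division produce the $\sum(s+a)^2$, $\sum(s+a)$, and $S$ factors together with the constants $12$, $36$, $108$; and the $b$-terms become $\frac{A_1\mu}{3}\big[(s+a-3)(s+a)^3\,b_s-(s+a)^4\,b_{s+1}\big]$.

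Summing over $s=1,\dots,S$, I would collapse the $b$-part by summation by parts. Because the weight $(s+a)^4$ is contracted by the factor $\frac{s+a-3}{s+a}$ at each step, the telescope leaves, after discarding the negative end term $-\frac{A_1\mu}{3}(S+a)^4 b_{S+1}$ and controlling the remaining cross terms by an elementary polynomial estimate, only the boundary contribution of $b_1$, which is $\le\frac{A_1\mu}{3}a^4 b_1$. This yields $A_0\sum_{s=1}^S w_s a_s\le \frac{A_1\mu}{3}a^4 b_1+\frac{3A_2}{A_1\mu}\sum_{s=1}^S(s+a)^2+\frac{9A_3}{A_1^2\mu^2}\sum_{s=1}^S(s+a)+\frac{27A_4}{A_1^3\mu^3}S$. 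Dividing by $\sum_{s=1}^S w_s$ and using the crude bound $\sum_{s=1}^S(s+a)^3\ge\sum_{s=1}^S s^3=\big(\frac{S(S+1)}{2}\big)^2\ge\frac{S^4}{4}$, i.e., $1/\sum_s w_s\le 4/S^4$, and collecting constants gives exactly the four claimed terms.

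The main obstacle is the $b$-term bookkeeping in the third step: one must verify that, with the numerator $3$ in $\eta_s$ coupled to the exponent $4$ in the weight, summation by parts of $(s+a-3)(s+a)^3 b_s-(s+a)^4 b_{s+1}$ does not leave a positive $\sum_s(\cdot)\,b_s$ remainder of the same order as the left side, and that the surviving boundary term carries the precise constant $\frac{A_1\mu}{3}a^4$ rather than something larger — this is where the hypothesis $a\ge1$ and the exact step-size constant get used, possibly together with one extra application of the one-step recursion to absorb $b_{S+1}$. Everything else — the substitution, the algebra of the error monomials, and the final division — is routine.
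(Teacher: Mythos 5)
Your strategy is the same as the paper's: multiply the recursion at step $s$ by $w_s/\eta_s=\frac{A_1\mu}{3}(s+a)^4$, telescope the $b$-terms, keep only the boundary contribution $\frac{w_0}{\eta_0}b_1=\frac{A_1\mu}{3}a^4 b_1$, reduce the error monomials to $\frac{3A_2}{A_1\mu}(s+a)^2$, $\frac{9A_3}{A_1^2\mu^2}(s+a)$, $\frac{27A_4}{A_1^3\mu^3}$, and finish with $\sum_{s=1}^S(s+a)^3\ge S^4/4$. At the level of the plan there is nothing different from the paper.

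However, the one step you explicitly defer --- checking that the weighted summation by parts ``does not leave a positive $\sum_s(\cdot)\,b_s$ remainder'' --- is precisely the step that fails, in your sketch and in the paper alike. The telescope closes if and only if $\frac{1-\mu A_1\eta_s}{\eta_s}w_s\le\frac{w_{s-1}}{\eta_{s-1}}$, i.e.\ (writing $u=s+a$) $(u-3)u^3\le(u-1)^4$. But $(u-1)^4-(u-3)u^3=-u^3+6u^2-4u+1$, which is negative for every $u\ge 6$; for instance at $u=6$ one has $(u-3)u^3=648>625=(u-1)^4$. So for each $s$ with $s+a\ge 6$ (in the intended application $a$ is large, so for every $s$) the telescope leaves a positive coefficient of order $(s+a)^3$ multiplying $b_s$, which is of the same order as the quantities being bounded and cannot be absorbed without extra information on $b_s$; the hypothesis $a\ge1$ does not help. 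The arithmetic requires the step-size numerator to exceed the weight exponent: with cubic weights one needs $\eta_s=\frac{c}{A_1\mu(s+a)}$ with $c\ge4$ (then the required inequality $(u-c)u^3\le(u-1)^4$ reduces to $(4-c)u^3\le 6u^2-4u+1$, which holds), or else quadratic weights with $c=3$. As written, neither your deferral nor the paper's one-line claim that ``the inequality can be easily verified with the condition $a\ge1$'' closes this step, so the lemma with the stated constants is not established by this route.
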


\begin{proof}

\begin{align*}
\frac{1-\mu A_1 \eta_s}{\eta_s} w_s = \left(\frac{1}{\eta_s} - \mu A_1 \right) w_s = \frac{A_1 \mu (s+a-3)(t+a)^3 }{3} \leq \frac{w_{s-1}}{\eta_{s-1}} = \frac{A_1 \mu (t+a-1)^4}{3} 
\end{align*}
where the inequality can be easily verified with the condition $a \geq 1$.

Thus we have
\begin{align*}
A_0 w_s  a_s \leq & \frac{1-\mu A_1 \eta_s}{\eta_s} w_s b_s - \frac{w_s}{\eta_s }b_{s+1} + A_2 w_s \eta_s  + A_3 w_s \eta_s^2 + A_4 \eta_s^3 w_s \\
\leq&  \frac{ w_{s-1}}{\eta_{s-1}}  b_s - \frac{w_s}{\eta_s }b_{s+1} + A_2 w_s \eta_s  + A_3 w_s \eta_s^2 + A_4 \eta_s^3 w_s 
\end{align*} 
Taking summation on both sides of the above inequality, we have
\begin{align*}
A_0 \sum_{s=1}^S w_s  a_s \leq &  \frac{ w_0}{\eta_0}  b_1  + A_2 \sum_{s=1}^S w_s \eta_s  + A_3 \sum_{s=1}^S w_s \eta_s^2 + A_4 \eta_s^3 w_s 
\\
\leq &  \frac{ w_0}{\eta_0}  b_1  + \frac{3A_2}{A_1 \mu} \sum_{s=1}^S (s+a)^2  + \frac{9A_3}{A_1^2 \mu^2} \sum_{s=1}^S (s+a) + \frac{27A_4}{A_1^3 \mu^3} S
\end{align*}

On the other hand, we also see that $\sum_{s=1}^S w_s = \sum_{s=1}^S (s+a)^3 \geq \sum_{s=1}^S s^3 \geq \frac{S^4}{4}$. Dividing both sides by $\sum_{s=1}^S w_s$, we can obtain
\begin{align*}
 \frac{\sum_{s=1}^S w_s  a_s}{\sum_{s=1}^S w_s} \leq  \frac{ 4A_1 \mu a^4  b_1}{3A_0 S^4}   + \frac{12A_2}{A_0 A_1 \mu} \frac{\sum_{s=1}^S (s+a)^2}{S^4}  + \frac{36A_3}{A_0 A_1^2 \mu^2} \frac{\sum_{s=1}^S (s+a)}{S^4} + \frac{108A_4}{A_0 A_1^3 \mu^3} \frac{1}{S^3}
\end{align*}

\end{proof}

The main structure of our proof is based on the work of \cite{haochen2018random}, 
which tries to theoretically analyze the full shuffle SGD. However, our proofs below are not a trivial extension of the existing work. The key ingredients in our proofs employing some new techniques are the improvement of estimating the upper bounds of $\calI_1$ and $\calI_4$ in the proof of Theorem \ref{thm:partialrs_sample} and the related parts in the proofs of other theorems.

\section{Proofs for \sys} \label{proofs}

Recall that at the $k$-th iteration in the $s$-th epoch, our parameter updating rule can be formulated as follows,
\begin{align} \label{eq:update_1_step_ss}
\x^s_k = \x^s_{k-1} - \eta_s \nabla f_{\psi_s(k)} \left( \x^s_{k-1} \right),
\end{align}
where $\eta_s$ is the learning rate for the $s$-th epoch.
    
If we recursively apply this updating rule \eqref{eq:update_1_step_ss}, we have that, at the $k$-th iteration in the $s$-th epoch,
\begin{align} \label{eq:update_k_step_ss}
\x^s_k = \x^s_0 - \eta_s \sum_{k'=1}^k \nabla f_{\psi_s(k')} \left( \x^s_{k'-1} \right),
\end{align}

After the updates of one epoch, i.e., after $bn$ steps in the $s$-th epoch, applying \eqref{eq:update_k_step_ss}, we have 
\begin{align}\label{eq:update_1_epoch_ss}
\x^{s+1}_0 = \x^s_0 - \eta_s \sum_{k=1}^{bn}\nabla f_{\psi_s(k)} \left( \x^s_{k-1} \right),
\end{align}
where we use the fact that 
\begin{align*}
\x^{s+1}_{0} = \x^{s}_{bn}.
\end{align*}

\subsection{Proof of Theorem \ref{thm:partialrs_sample} }

Based on the updating rules above, our proof starts from the following formulation,

\begin{align}
    &\E \|\x_0^{s+1}-\x^* \|^2  
    \nonumber\\
    = & \E \|\x_0^s -   \eta_s \sum_{k=1}^{bn} \nabla f_{\psi_s(k)}(\x^s_{k-1}) -\x^* \|^2 
    \nonumber\\ 
    =&\E \| \x_0^s - \x^* \|^2 - 2 \eta_s \E \left \langle \x_0^s - \x^*,  \sum_{k=1}^{bn}  \nabla f_{\psi_s(k)}(\x^s_{k-1}) \right \rangle  +\eta_s^2 \E \left \| \sum_{k=1}^{bn}  \nabla f_{\psi_s(k)}(\x^s_{k-1}) \right \|^2
    \nonumber \\
    \leq & \E \| \x_0^s - \x^* \|^2 - 2 \eta_s \E \left \langle \x_0^s - \x^*, \sum_{k=1}^{bn} [ \nabla f_{\psi_s(k)}(\x^s_{k-1})  - \nabla f_{\psi_s(k)}(\x^s_0) ] \right \rangle \nonumber \\
    & - 2 \eta_s \E \left \langle \x_0^s - \x^*,   \sum_{k=1}^{bn} \nabla f_{\psi_s(k)}(\x^s_0) \right \rangle   + 2\eta_s^2 \E \left \| \sum_{k=1}^{bn}  [ \nabla f_{\psi_s(k)}(\x^s_{k-1})- \nabla f_{\psi_s(k)}(\x^s_0)] \right \|^2 \nonumber \\
    & + 2\eta_s^2 \E \left \| \sum_{k=1}^{bn} \nabla f_{\psi_s(k)}(\x^s_0) \right \|^2
    \nonumber \\
    = & \E \| \x_0^s - \x^* \|^2 \underbrace{ - 2 \eta_s \E \left \langle \x_0^s - \x^*, \sum_{k=1}^{bn} [ \nabla f_{\psi_s(k)}(\x^s_{k-1})  - \nabla f_{\psi_s(k)}(\x^s_0) ] \right \rangle }_{\calI_1} \nonumber \\
    & \underbrace{- 2 \eta_s \E \left \langle \x_0^s - \x^*,   \sum_{k=1}^{bn} \nabla f_{\psi_s(k)}(\x^s_0) \right \rangle}_{\calI_2}   + \underbrace{2\eta_s^2 \E \left \| \sum_{k=1}^{bn}  [ \nabla f_{\psi_s(k)}(\x^s_{k-1})- \nabla f_{\psi_s(k)}(\x^s_0)] \right \|^2}_{\calI_3} \label{eq:main} \\
    & + \underbrace{2\eta_s^2 \E \left \| \sum_{k=1}^{bn} \nabla f_{\psi_s(k)}(\x^s_0) - \E \sum_{k=1}^{bn} \nabla f_{\psi_s(k)}(\x^s_0) \right \|^2}_{\calI_4} + \underbrace{2\eta_s^2 \left \|\E \sum_{k=1}^{bn} \nabla f_{\psi_s(k)}(\x^s_0) \right \|^2}_{\calI_5} \nonumber
\end{align}
where the last equality uses the fact that $\E \|X-\E[X]\|^2 = \E \|X\|^2  - \|\E[X]\|^2$.

To prove the upper bound of \eqref{eq:main}, we need to bound $\calI_1$, $\calI_2$, $\calI_3$, $\calI_4$ and $\calI_5$ respectively.

\vfill
\noindent\textbf{Bound of $\calI_3$}

For $\calI_3$, we have that
\begin{align*}
&\left \| \sum_{k=1}^{bn}  [ \nabla f_{\psi_s(k)}(\x^s_{k-1})- \nabla f_{\psi_s(k)}(\x^s_0)] \right \|^2 
 \\
\leq & bn \sum_{k=1}^{bn}  \left \|    \nabla f_{\psi_s(k)}(\x^s_{k-1})- \nabla f_{\psi_s(k)}(\x^s_0) \right \|^2
 \\
\leq & bn \sum_{k=1}^{bn}  L^2  \left \| \x^s_{k-1}- \x^s_0 \right \|^2
 \\
\leq & bn \sum_{k=1}^{bn}  L^2  \left \|  \eta_s \sum_{k'=1}^{k-1}  \nabla f_{\psi_s(k')}(\x^s_{k'-1}) \right \|^2 
 \\
\leq & bn \sum_{k=1}^{bn} \eta_s^2 L^2 (k-1) \sum_{k'=1}^{k-1} \left \|   \nabla f_{\psi_s(k')}(\x^s_{k'-1}) \right \|^2 
 \\
\leq & \eta_s^2 L^2 G^2  bn \sum_{k=1}^{bn}  (k-1)^2 
\leq  \frac{1}{3}\eta_s^2 L^2 G^2 (bn)^4 
\end{align*}
where the first and the fourth inequalities uses the fact that $\| \sum_{k=1}^{bn} \mathbf{a}_k\|^2 \leq bn  \sum_{k=1}^{bn} \|\mathbf{a}_k\|^2$, the second inequality holds due to the Lipschitz continuity of the gradient, the third inequality is due to $\x^{s}_{k-1} = \x^s_{0} - \eta_s \sum_{k'=1}^{k-1}  \nabla f_{\psi_s(k')}(\x^s_{k'-1})$ and the last inequality is due to $\sum_{i=1}^n i^2 = \frac{n(n+1)(2n+1)}{6}$ and the boundedness of the gradient.

Therefore, we have
\begin{align}
\calI_3 = 2\eta_s^2 \E \left \| \sum_{k=1}^{bn}  [ \nabla f_{\psi_s(k)}(\x^s_{k-1})- \nabla f_{\psi_s(k)}(\x^s_0)] \right \|^2 \leq  \frac{2}{3}\eta_s^4 L^2 G^2 K^4 \label{eq:I3}
\end{align}

\vfill
\noindent\textbf{Bounds of $\calI_2$ and $\calI_5$}

For $\calI_2$ and $\calI_5$, the key is to know the form of $\E \sum_{k=1}^{bn} \nabla f_{\psi_s(k)}(\x^s_0)$.
\begin{align*} 
\E \sum_{k=1}^{bn} \nabla f_{\psi_s(k)}(\x^s_0) =& \E \sum_{B_l \in \calB_s} \sum_{i\in B_l} \nabla f_i(\x_0^s),
\end{align*}
where this equality holds since the random shuffling $\psi_s$ does not affect the summation in the LHS formula. 

Furthermore, we use indicator random variables to get the value of $\E \sum_{B_l \in \calB_s} \sum_{i\in B_l} \nabla f_i(\x_0^s)$. Let $\mathbb{I}_{B_l \in \calB_s}$ be the random variable to indicate whether the block $B_l$ is in $\calB_s$ or not. Therefore, we have
\begin{align*}
\mathbb{I}_{B_l \in \calB_s} = 
\begin{cases}
1, & \text{ if } B_l \in \calB_s \\ 
0, & \text{ if } B_l \notin \calB_s 
\end{cases},
\end{align*}
and
\begin{align*}
\mathbb{P}(\mathbb{I}_{B_l \in \calB_s} = 1) = \mathbb{P}(B_l \in \calB_s) =\frac{\binom{1}{1}  \binom{N-1}{n-1}}{\binom{N}{n}} = \frac{n}{N}
\end{align*} 
such that $\E[\mathbb{I}_{B_l \in \calB_s}] = \frac{n}{N}$.

Thus, we can obtain
\begin{align*}
\E \sum_{B_l \in \calB_s} \sum_{i\in B_l} \nabla f_i(\x_0^s) = \E \sum_{l = 1}^N \mathbb{I}_{B_l \in \calB_s} \left ( \sum_{i\in B_l} \nabla f_i(\x_0^s) \right) = \frac{n}{N} \sum_{l = 1}^N  \sum_{i\in B_l} \nabla f_i(\x_0^s) = \frac{n}{N}m \nabla F(\x^s_0).
\end{align*}

Therefore, we get the values of $\calI_2$ and $\calI_5$
\begin{align}
&\calI_2 =  - 2 \eta_s \E \left \langle \x_0^s - \x^*,   \sum_{k=1}^{bn} \nabla f_{\psi_s(k)}(\x^s_0) \right \rangle =- 2 \eta_s \frac{n}{N} m \E \left \langle \x_0^s - \x^*, \nabla F(\x_0^s) \right \rangle \label{eq:I2}
\\
&\calI_5 = 2\eta_s^2 \left \|\E \sum_{k=1}^{bn} \nabla f_{\psi_s(k)}(\x^s_0) \right \|^2 = 2\eta_s^2 \frac{n^2}{N^2} m^2 \left \| \nabla F(\x^s_0) \right \|^2 \label{eq:I5}
\end{align}

\vfill
\noindent\textbf{Bound of $\calI_4$}

Next, we will show the variance of sampling the $n$ blocks out of $N$ without replacement. We still use the indicator variables defined above.

The upper bound of $\calI_4$ determines the $\frac{1}{T}$ term and the $\frac{N-n}{N-1}$ factor existing in the convergence rate, which shows how the leading term $\frac{N-n}{N-1} \frac{1}{T}$ varying with the number of sampled blocks $n$.

Note that for any $l' \neq l''$, we have
\begin{align*}
\mathbb{P}(\mathbb{I}_{B_{l'}\in \calB_s } = 1, \mathbb{I}_{B_{l''}\in \calB_s} = 1 ) = \mathbb{P}( B_{l'}\in \calB_s \wedge B_{l''}\in \calB_s ) = \frac{\binom{2}{2} \binom{N-2}{n-2}}{\binom{N}{n}} = \frac{n(n-1)}{N(N-1)}
\end{align*}
Therefore, $\E[\mathbb{I}_{B_{l'}\in \calB_s } \cdot \mathbb{I}_{B_{l''}\in \calB_s}] = \frac{n(n-1)}{N(N-1)} $.

\begin{align*}
&\E \left \| \sum_{k=1}^{bn} \nabla f_{\psi_s(k)}(\x^s_0) - \E \sum_{k=1}^{bn} \nabla f_{\psi_s(k)}(\x^s_0) \right \|^2 
\\
=& \E \left \|\sum_{l = 1}^N \mathbb{I}_{B_l \in \calB_s} \left ( \sum_{i\in B_l} \nabla f_i(\x_0^s) \right) - \E \sum_{l = 1}^N \mathbb{I}_{B_l \in \calB_s} \left ( \sum_{i\in B_l} \nabla f_i(\x_0^s) \right) \right \|^2 
\\
=& \E \left \|\sum_{l = 1}^N \mathbb{I}_{B_l \in \calB_s} \left ( \sum_{i\in B_l} \nabla f_i(\x_0^s) \right) \right \|^2 - \left \|\frac{n}{N} \sum_{l = 1}^N \left ( \sum_{i\in B_l} \nabla f_i(\x_0^s) \right) \right \|^2 
\\
= & \E  \sum_{l = 1}^N \mathbb{I}_{B_l \in \calB_s}\cdot \mathbb{I}_{B_l \in \calB_s}  \left \| \sum_{i\in B_l} \nabla f_i(\x_0^s)  \right \|^2 + \E \sum_{l'\neq l''} \mathbb{I}_{B_{l'} \in \calB_s}\cdot \mathbb{I}_{B_{l''} \in \calB_s} \left \langle \sum_{i\in B_{l'}} \nabla f_i(\x_0^s), \sum_{i\in B_{l''}} \nabla f_i(\x_0^s)   \right \rangle  \\
&- \frac{n^2}{N^2}  \sum_{l = 1}^N \left \| \sum_{i\in B_l} \nabla f_i(\x_0^s)  \right \|^2 -  \frac{n^2}{N^2}  \sum_{l' \neq l''}^N \left \langle \sum_{i\in B_{l'}} \nabla f_i(\x_0^s), \sum_{i\in B_{l''}} \nabla f_i(\x_0^s)  \right \rangle 
\\
= & \E  \sum_{l = 1}^N \mathbb{I}_{B_l \in \calB_s}\cdot \mathbb{I}_{B_l \in \calB_s}  \left \| \sum_{i\in B_l} \nabla f_i(\x_0^s)  \right \|^2 + \E \sum_{l'\neq l''} \mathbb{I}_{B_{l'} \in \calB_s}\cdot \mathbb{I}_{B_{l''} \in \calB_s} \left \langle \sum_{i\in B_{l'}} \nabla f_i(\x_0^s), \sum_{i\in B_{l''}} \nabla f_i(\x_0^s)   \right \rangle  \\
&- \frac{n^2}{N^2}  \sum_{l = 1}^N \left \| \sum_{i\in B_l} \nabla f_i(\x_0^s)  \right \|^2 -  \frac{n^2}{N^2}  \sum_{l' \neq l''}^N \left \langle \sum_{i\in B_{l'}} \nabla f_i(\x_0^s), \sum_{i\in B_{l''}} \nabla f_i(\x_0^s)  \right \rangle 
\\
= & \left ( \frac{n}{N}-\frac{n^2}{N^2}\right ) \sum_{l = 1}^N  \left \| \sum_{i\in B_l} \nabla f_i(\x_0^s)  \right \|^2 +  \left ( \frac{n(n-1)}{N(N-1)}  - \frac{n^2}{N^2} \right) \sum_{l'\neq l''}  \left \langle \sum_{i\in B_{l'}} \nabla f_i(\x_0^s), \sum_{i\in B_{l''}} \nabla f_i(\x_0^s)   \right \rangle 
\end{align*}
where the last equality is due to $\E[\mathbb{I}_{B_{l}\in \calB_s } \cdot \mathbb{I}_{B_{l}\in \calB_s}] = \E[\mathbb{I}_{B_{l}\in \calB_s }] = \frac{n}{N} $ and  $\E[\mathbb{I}_{B_{l'}\in \calB_s } \cdot \mathbb{I}_{B_{l''}\in \calB_s}] = \frac{n(n-1)}{N(N-1)} $, and the second equality is due to $\E\|X-\E X\|^2 = \E\|X\|^2-\|\E X\|^2$.

On the other hand, we have 
\begin{align*}
&\E_{l} \left \| \sum_{i\in B_l} \nabla f_i(\x_0^s) - b \nabla F(\x_0^s) \right \|^2 
\\
= & \E_{l} \left \| \sum_{i\in B_l} \nabla f_i(\x_0^s) - \E_l \sum_{i\in B_l} \nabla f_i(\x_0^s) \right \|^2 
\\
= & \E_{l} \left \| \sum_{i\in B_l} \nabla f_i(\x_0^s) - \frac{1}{N} \sum_{l = 1}^N \sum_{i\in B_l} \nabla f_i(\x_0^s) \right \|^2 
\\
= & \E_{l} \left \| \sum_{i\in B_l} \nabla f_i(\x_0^s) \right \|^2 - \frac{1}{N^2} \left \|\sum_{l = 1}^N \sum_{i\in B_l} \nabla f_i(\x_0^s) \right \|^2 
\\
= & \frac{1}{N} \sum_{l=1}^N \left \| \sum_{i\in B_l} \nabla f_i(\x_0^s) \right \|^2 - \frac{1}{N^2} \sum_{l = 1}^N \left \|\sum_{i\in B_l} \nabla f_i(\x_0^s) \right \|^2 - \frac{1}{N^2} \sum_{l' \neq l''} \left \langle \sum_{i\in B_{l'}} \nabla f_i(\x_0^s), \sum_{i\in B_{l''}} \nabla f_i(\x_0^s)\right\rangle 
\\
= & \left( \frac{1}{N}- \frac{1}{N^2} \right) \sum_{l = 1}^N \left \|\sum_{i\in B_l} \nabla f_i(\x_0^s) \right \|^2 - \frac{1}{N^2} \sum_{l' \neq l''} \left \langle \sum_{i\in B_{l'}} \nabla f_i(\x_0^s), \sum_{i\in B_{l''}} \nabla f_i(\x_0^s)\right\rangle .
\end{align*}

By comparing the RHS of the above two equations, we can observe that
\begin{align*}
\E \left \| \sum_{k=1}^{bn} \nabla f_{\psi_s(k)}(\x^s_0) - \E \sum_{k=1}^{bn} \nabla f_{\psi_s(k)}(\x^s_0) \right \|^2  = \frac{n(N-n)}{N-1} \E_{\xi'} \left \| \sum_{i\in B_{\xi'}} \nabla f_i(\x_0^s) - b \nabla F(\x_0^s) \right \|^2.
\end{align*}

If we further apply our assumption that $\E_{\xi'} \left \| \frac{1}{b}\sum_{i\in B_{\xi'}} \nabla f_i(\x_0^s) - \nabla F(\x_0^s) \right \|^2 \leq  h_D   \frac{\sigma^2}{b} $, then there is
\begin{align}
\calI_4 = 2 \eta^2_s \E \left \| \sum_{k=1}^{bn} \nabla f_{\psi_s(k)}(\x^s_0) - \E \sum_{k=1}^{bn} \nabla f_{\psi_s(k)}(\x^s_0) \right \|^2 \leq 2 \eta^2_s\frac{n b (N-n)}{N-1} h_D \sigma^2 \label{eq:I4}
\end{align}
This result shows the connection between the variance of block-wise sampling without replacement and the variance of sampling single data point independently and uniformly.

\noindent\textbf{Bound of $\calI_1$}

The upper bound of $\calI_1$ is critical to the proof of obtaining a faster rate. Before presenting the upper bound of $\calI_1$, recall the Fact \ref{fact:grad_hess} that

\begin{align*}
\nabla f(\z) - \nabla f(\y) = \int_{\y}^{\z} H(\x) \mathrm{d}\x :=\int_{0}^{\|\z-\y \|} H \left( \y + \frac{\z-\y}{\|\z-\y\|} t \right )  \frac{\z-\y}{\|\z-\y\|} \ \  \mathrm{d} t.
\end{align*}

Now we can show the upper bound of $\calI_1$ as follows,
\begin{align*}
\calI_1 =&  - 2 \eta_s \E \left \langle \x_0^s - \x^*, \sum_{k=1}^{bn} [ \nabla f_{\psi_s(k)}(\x^s_{k-1})  - \nabla f_{\psi_s(k)}(\x^s_0) ] \right \rangle 
\\
=&  - 2 \eta_s \E \left \langle \x_0^s - \x^*, \sum_{k=1}^{bn}  \int_{\x^s_0}^{\x^s_{k-1}} H_{\psi_s(k)}(\x) \mathrm{d}\x \right \rangle 
\\
= &  - 2 \eta_s \E \left \langle \x_0^s - \x^*, \sum_{k=1}^{bn}  \int_{\x^s_0}^{\x^s_{k-1}}  H_{\psi_s(k)}(\x^*)  \mathrm{d}\x \right \rangle \\
&  - 2 \eta_s \E \left \langle \x_0^s - \x^*, \sum_{k=1}^{bn}  \int_{\x^s_0}^{\x^s_{k-1}} \left( H_{\psi_s(k)}(\x) - H_{\psi_s(k)}(\x^*) \right) \mathrm{d}\x \right \rangle
\\
= &  - 2 \eta_s \E \left \langle \x_0^s - \x^*, \sum_{k=1}^{bn}   H_{\psi_s(k)}(\x^*)  \left(\x^s_{k-1} - \x^s_0 \right) \right \rangle \\
&  - 2 \eta_s \E \left \langle \x_0^s - \x^*, \sum_{k=1}^{bn}  \int_{\x^s_0}^{\x^s_{k-1}} \left( H_{\psi_s(k)}(\x) - H_{\psi_s(k)}(\x^*) \right) \mathrm{d}\x \right \rangle 
\\
= &  2 \eta_s^2 \E \left \langle \x_0^s - \x^*, \sum_{k=1}^{bn}   H_{\psi_s(k)}(\x^*)  \left(\sum_{k'=1}^{k-1} \nabla f_{\psi_s(k')}(\x_{k'-1}^s)  \right) \right \rangle \\
&  - 2 \eta_s \E \left \langle \x_0^s - \x^*, \sum_{k=1}^{bn}  \int_{\x^s_0}^{\x^s_{k-1}} \left( H_{\psi_s(k)}(\x) - H_{\psi_s(k)}(\x^*) \right) \mathrm{d}\x \right \rangle 
\\
= & \underbrace{  2 \eta_s^2 \E \left \langle \x_0^s - \x^*, \sum_{k=1}^{bn}   H_{\psi_s(k)}(\x^*)  \left(\sum_{k'=1}^{k-1} \nabla f_{\psi_s(k')}(\x_0^s)  \right) \right \rangle }_{\calI_{11}}\\
&  \underbrace{2 \eta_s^2 \E \left \langle \x_0^s - \x^*, \sum_{k=1}^{bn}   H_{\psi_s(k)}(\x^*)  \left(\sum_{k'=1}^{k-1} \nabla f_{\psi_s(k')} (\x_{k'-1}^s) - \nabla f_{\psi_s(k')}(\x_0^s)  \right) \right \rangle }_{\calI_{12}} \\
&  \underbrace{- 2 \eta_s \E \left \langle \x_0^s - \x^*, \sum_{k=1}^{bn}  \int_{\x^s_0}^{\x^s_{k-1}} \left( H_{\psi_s(k)}(\x) - H_{\psi_s(k)}(\x^*) \right) \mathrm{d}\x \right \rangle }_{\calI_{13}}
\end{align*}
where the second equality holds due to the Fact \ref{fact:grad_hess} and the fifth equality holds since we have $\x^{s}_{k-1} = \x^s_{0} - \eta_s \sum_{k'=1}^{k-1}  \nabla f_{\psi_s(k')}(\x^s_{k'-1})$. Note that $H_{\psi_s(k)}(\x^*)$ is the Hessian of the function $f_{\psi_s(k)}(\x)$ at the point $\x^*$.

In order to obtain the upper bound of $\calI_1$, we need to bound $\calI_{11}$, $\calI_{12}$ and $\calI_{13}$ separately.

\textbf{Bound of $\calI_{11}$} 

In $\calI_{11}$, we need first compute $\E \sum_{k=1}^{bn}   H_{\psi_s(k)}(\x^*)  \left(\sum_{k'=1}^{k-1} \nabla f_{\psi_s(k')}(\x_0^s)  \right)$, which is the key ingredient for obtaining the $O(\frac{1}{T^2})$ term in the convergence rate.

To make our proof more clear, recall the manipulation in our algorithm for obtain $\psi_s$:
\begin{enumerate}
\item At the $s$-th epoch, random sample $n$ out of the total $N$ blocks \emph{without replacement} to get a set of sampled blocks $\calB_s$ with size of $n$. Each block has $b$ data samples.
\item Then, perform random shuffling of the $nb$ data samples and obtain the shuffled index sequence $\psi_s$ with $|\psi_s| = nb$.
\end{enumerate}
where we define $\calB_s$ being the set of blocks that are sampled each epoch.

In order to compute the expectation, we use the indicator random variable for a more clear derivation.

Define $\mathbb{I}_{\psi_s(k) = i}$ be the indicator random variable showing whether one data sample with index $i$ located in the $k$-th place after the above 2-step manipulation. The event $\psi_s(k) = i$ is equivalent to the event that $i \in B_l \wedge B_l \in \calB_s \wedge \psi_s(k) = i$ where $B_l$ is the block that the $i$-th sample lies in.  

Thus, we have
\begin{align*}
&\mathbb{I}_{\psi_s(k) = i} = \begin{cases} 
1, & \text{ if } \psi_s(k) = i \\ 
0, & \text{ if } \psi_s(k) \neq i 
\end{cases} \\
\text{and} \quad &\\
&\mathbb{P}(\mathbb{I}_{\psi_s(k) = i}=1)  = \mathbb{P}(i \in B_l,  B_l \in \calB_s, \psi_s(k) = i) = \frac{\binom{1}{1}\binom{N-1}{n-1}(bn-1)!}{\binom{N}{n} (bn)!}=\frac{1}{Nb}.
\end{align*}
Thus, we can observe that
\begin{align*}
&H_{\psi_s(k)}(\x^*) = \sum_{i=1}^{m} \mathbb{I}_{\psi_s(k) = i} H_i(\x^*) \\
&\nabla f_{\psi_s(k')}(\x_0^s) = \sum_{j=1}^{m} \mathbb{I}_{\psi_s(k') = j} \nabla f_j(\x_0^s) 
\end{align*}

Based on the above definition, we are ready to compute the expectation.

\begin{align*}
&\E \sum_{k=1}^{bn}   H_{\psi_s(k)}(\x^*)  \left(\sum_{k'=1}^{k-1} \nabla f_{\psi_s(k')}(\x_0^s)  \right) 
\\
=& \E \sum_{k=1}^{bn}   \sum_{i=1}^{m} \mathbb{I}_{\psi_s(k) = i} H_i(\x^*)  \left(\sum_{k'=1}^{k-1} \sum_{j=1}^{m} \mathbb{I}_{\psi_s(k') = j} \nabla f_j(\x_0^s) \right) 
\\
=& \sum_{k=1}^{bn} \sum_{k'=1}^{k-1}  \sum_{i=1}^{m}    \sum_{j=1}^{m}  \E \left[ \mathbb{I}_{\psi_s(k) = i} \cdot \mathbb{I}_{\psi_s(k') = j} \right] H_i(\x^*) \nabla f_j(\x_0^s)
\\
=& \sum_{k=1}^{bn} \sum_{k'=1}^{k-1}  \sum_{i \neq j} \E \left[ \mathbb{I}_{\psi_s(k) = i} \cdot \mathbb{I}_{\psi_s(k') = j} \right] H_i(\x^*) \nabla f_j(\x_0^s)
\end{align*}
where the last equality holds since 
\begin{align*}
\mathbb{P}(\mathbb{I}_{\psi_s(k) = i} = 1 , \mathbb{I}_{\psi_s(k') = i} = 1) = 0 \quad \Rightarrow \quad \E \left[ \mathbb{I}_{\psi_s(k) = i} \cdot \mathbb{I}_{\psi_s(k') = i} \right] = 0
\end{align*}
because $k > k'$ and one data sample cannot appear in different positions at the same time. 

Therefore, for any $k > k'$ and $i \neq j$, 

\begin{enumerate}
\item If $i\in B_l$,$j\in B_{l}$, we have
\begin{align*}
&\mathbb{P}(\mathbb{I}_{\psi_s(k) = i}=1, \mathbb{I}_{\psi_s(k') = j} = 1)
=\frac{\binom{1}{1}\binom{N-1}{n-1}(nb-2)!}{\binom{N}{n}(nb)!} = \frac{1}{Nb(nb-1)}
\\
\Rightarrow &
\\
&\E \left[ \mathbb{I}_{\psi_s(k) = i} \cdot \mathbb{I}_{\psi_s(k') = j} \right] = \frac{1}{Nb(nb-1)}.
\end{align*}
where $\binom{1}{1}$ means $B_l$ are chosen ahead,  $\binom{N-1}{n-1}$ means $(n-1)$ blocks excluding $B_l$ are randomly chosen from $N-1$ blocks excluding $B_l$, $\binom{N}{n}$ is the total number of ways of choosing $n$ blocks from $N$ blocks, $(nb-2)!$ is the number of ways of shuffling the data in $n$ blocks expect $i$ and $j$, and $(nb)!$ is the number of ways of shuffling all the data in $n$ blocks.

\item If $i\in B_l$,$j\in B_{l}$ and $l \neq l'$, we have
\begin{align*}
&\mathbb{P}(\mathbb{I}_{\psi_s(k) = i}=1, \mathbb{I}_{\psi_s(k') = j} = 1)
=\frac{\binom{2}{2}\binom{N-2}{n-2}(nb-2)!}{\binom{N}{n}(nb)!} = \frac{n-1}{Nb(N-1)(nb-1)}.
\\
\Rightarrow &
\\
&\E \left[ \mathbb{I}_{\psi_s(k) = i} \cdot \mathbb{I}_{\psi_s(k') = j} \right] = \frac{n-1}{Nb(N-1)(nb-1)}.
\end{align*}
where $\binom{2}{2}$ means $B_l$ and $B_{l'}$ are chosen ahead,  $\binom{N-2}{n-2}$ means $(n-2)$ blocks excluding $B_l$ and $B_{l'}$ are randomly chosen from $N-2$ blocks excluding $B_l$ and $B_{l'}$, $\binom{N}{n}$ is the total number of ways of choosing $n$ blocks from $N$ blocks, $(nb-2)!$ is the number of ways of shuffling the data in $n$ blocks expect $i$ and $j$, and $(nb)!$ is the number of ways of shuffling all the data in $n$ blocks.
\end{enumerate}

Thus, we have
\begin{align*}
&\E \sum_{k=1}^{bn}   H_{\psi_s(k)}(\x^*)  \left(\sum_{k'=1}^{k-1} \nabla f_{\psi_s(k')}(\x_0^s)  \right) 
\\
=& \sum_{k=1}^{bn} \sum_{k'=1}^{k-1}  \sum_{i \neq j} \E \left[ \mathbb{I}_{\psi_s(k) = i} \cdot \mathbb{I}_{\psi_s(k') = j} \right] H_i(\x^*) \nabla f_j(\x_0^s)
\\
=& \sum_{k=1}^{bn} \sum_{k'=1}^{k-1}  \Bigg (\sum_{l=1}^N\sum_{\substack{i \neq j\\i,j \in B_l } } \E \left[ \mathbb{I}_{\psi_s(k) = i} \cdot \mathbb{I}_{\psi_s(k') = j} \right] H_i(\x^*) \nabla f_j(\x_0^s) \\
& + \sum_{l\neq l'} \sum_{ \substack{i,j\\ i \in B_l, j \in B_{l'}}}  \E \left[ \mathbb{I}_{\psi_s(k) = i} \cdot \mathbb{I}_{\psi_s(k') = j} \right] H_i(\x^*) \nabla f_j(\x_0^s)  \Bigg)
\\
=& \frac{nb(nb-1)}{2}  \left (\sum_{l=1}^N\sum_{\substack{i \neq j\\i,j \in B_l } } \frac{1}{Nb(nb-1)} H_i(\x^*) \nabla f_j(\x_0^s) +  \sum_{l\neq l'} \sum_{ \substack{i,j\\ i \in B_l, j \in B_{l'}}} \frac{n-1}{Nb(N-1)(nb-1)} H_i(\x^*) \nabla f_j(\x_0^s)  \right)
\\
=& \sum_{l=1}^N\sum_{\substack{i \neq j\\i,j \in B_l } } \frac{n}{2N} H_i(\x^*) \nabla f_j(\x_0^s) +  \sum_{l\neq l'} \sum_{ \substack{i,j\\ i \in B_l, j \in B_{l'}}} \frac{n(n-1)}{2N(N-1)} H_i(\x^*) \nabla f_j(\x_0^s)  
\end{align*}

Plugging in the above result into $\calI_{11}$, we can get 
\begin{align*}
\calI_{11} =&  2 \eta_s^2 \E \left \langle \x_0^s - \x^*, \sum_{l=1}^N\sum_{\substack{i \neq j\\i,j \in B_l } } \frac{n}{2N} H_i(\x^*) \nabla f_j(\x_0^s) +  \sum_{l\neq l'} \sum_{ \substack{i,j\\ i \in B_l, j \in B_{l'}}} \frac{n(n-1)}{2N(N-1)} H_i(\x^*) \nabla f_j(\x_0^s) \right \rangle
\\
=&  2 \eta_s^2 \E \left \langle \x_0^s - \x^*, \sum_{l=1}^N\sum_{\substack{i \neq j\\i,j \in B_l } } \frac{n}{2N} H_i(\x^*) \left(\nabla f_j(\x_0^s)-\nabla f_j(\x^*)\right)\right\rangle\\
&+ 2 \eta_s^2 \left\langle \x_0^s - \x^*, \sum_{l\neq l'} \sum_{ \substack{i,j\\ i \in B_l, j \in B_{l'}}} \frac{n(n-1)}{2N(N-1)} H_i(\x^*) \left(\nabla f_j(\x_0^s)-\nabla f_j(\x^*)\right) \right \rangle\\
& + 2 \eta_s^2 \E \left \langle \x_0^s - \x^*, \sum_{l=1}^N\sum_{\substack{i \neq j\\i,j \in B_l } } \frac{n}{2N} H_i(\x^*) \nabla f_j(\x^*)+  \sum_{l\neq l'} \sum_{ \substack{i,j\\ i \in B_l, j \in B_{l'}}} \frac{n(n-1)}{2N(N-1)} H_i(\x^*) \nabla f_j(\x^*) \right \rangle
\\
=&  \underbrace{ \eta_s^2 \frac{n}{N} \sum_{l=1}^N\sum_{\substack{i \neq j\\i,j \in B_l } }    \left \langle H_i(\x^*)(\x_0^s - \x^*),  \nabla f_j(\x_0^s)-\nabla f_j(\x^*)\right\rangle}_{\calJ_1}\\
&+ \underbrace{ \eta_s^2 \frac{n(n-1)}{N(N-1)}  \sum_{l\neq l'} \sum_{\substack{i,j\\ i \in B_l, j \in B_{l'}}} \left\langle H_i(\x^*) (\x_0^s - \x^*), \nabla f_j(\x_0^s)-\nabla f_j(\x^*) \right \rangle}_{\calJ_2}\\
& + \underbrace{ \eta_s^2 \frac{n}{N} \left \langle \x_0^s - \x^*, \sum_{l=1}^N\sum_{\substack{i \neq j\\i,j \in B_l } } H_i(\x^*) \nabla f_j(\x^*)+  \sum_{l\neq l'} \sum_{ \substack{i,j\\ i \in B_l, j \in B_{l'}}} \frac{n-1}{N-1} H_i(\x^*) \nabla f_j(\x^*) \right \rangle}_{\calJ_3}
\end{align*}

To bound $\calI_{11}$, we need bound the terms $\calJ_1$, $\calJ_2$ and $\calJ_3$ separately.

\textbf{Bound of $\calJ_{1} + \calJ_{2}$}

\begin{align*}
\calJ_{1} + \calJ_{2} =&   \eta_s^2 \frac{n}{N} \sum_{l=1}^N\sum_{\substack{i \neq j\\i,j \in B_l } }    \left \langle H_i(\x^*)(\x_0^s - \x^*),  \nabla f_j(\x_0^s)-\nabla f_j(\x^*)\right\rangle\\
&+  \eta_s^2 \frac{n(n-1)}{N(N-1)}  \sum_{l\neq l'} \sum_{\substack{i,j\\ i \in B_l, j \in B_{l'}}} \left\langle H_i(\x^*) (\x_0^s - \x^*), \nabla f_j(\x_0^s)-\nabla f_j(\x^*) \right \rangle
\\
\leq &   \eta_s^2 \frac{n}{N} \sum_{l=1}^N\sum_{\substack{i \neq j\\i,j \in B_l } }    \| H_i(\x^*)\| \|\x_0^s - \x^*\| \| \nabla f_j(\x_0^s)-\nabla f_j(\x^*)\| \\
&+  \eta_s^2 \frac{n(n-1)}{N(N-1)}  \sum_{l\neq l'} \sum_{\substack{i,j\\ i \in B_l, j \in B_{l'}}} \| H_i(\x^*) \| \|\x_0^s - \x^*\| \|\nabla f_j(\x_0^s)-\nabla f_j(\x^*) \| 
\\
\leq &   \eta_s^2 \frac{n}{N} L^2 \sum_{l=1}^N\sum_{\substack{i \neq j\\i,j \in B_l } }  \|\x_0^s - \x^*\|^2+  \eta_s^2 \frac{n(n-1)}{N(N-1)} L^2 \sum_{l\neq l'} \sum_{\substack{i,j\\ i \in B_l, j \in B_{l'}}} \|\x_0^s - \x^*\|^2 
\\
\leq &   \eta_s^2 \frac{n}{N} L^2 N b(b-1)  \|\x_0^s - \x^*\|^2+  \eta_s^2 \frac{n(n-1)}{N(N-1)} L^2 N(N-1) b^2 \|\x_0^s - \x^*\|^2 
\\
\leq &   \eta_s^2 L^2 n b(nb-1) \|\x_0^s - \x^*\|^2 
\\
\leq &   \eta_s^2 L^2 n^2 b^2 \|\x_0^s - \x^*\|^2 
\end{align*}

\textbf{Bound of $\calJ_{3}$}

\begin{align*}
\calJ_3 =& 2 \eta_s^2  \left \langle \x_0^s - \x^*, \sum_{l=1}^N\sum_{\substack{i \neq j\\i,j \in B_l } } \frac{n}{2N} H_i(\x^*) \nabla f_j(\x^*)+  \sum_{l\neq l'} \sum_{ \substack{i,j\\ i \in B_l, j \in B_{l'}}} \frac{n(n-1)}{2N(N-1)} H_i(\x^*) \nabla f_j(\x^*) \right \rangle
\\
\leq & \eta_s^2 \frac{n}{N} \|  \x_0^s - \x^*\| \left \| \sum_{l=1}^N\sum_{\substack{i \neq j\\i,j \in B_l } } H_i(\x^*) \nabla f_j(\x^*)+  \sum_{l\neq l'} \sum_{ \substack{i,j\\ i \in B_l, j \in B_{l'}}} \frac{n-1}{N-1} H_i(\x^*) \nabla f_j(\x^*) \right \|
\end{align*}

Note the fact that
\begin{align*}
\left (\sum_{l=1}^N  \sum_{i\in B_l}  H_i(\x^*) \right ) \left(\sum_{l'=1}^N \sum_{j\in B_{l'}} \nabla f_j(\x^*) \right) = 0
\end{align*}
since $\sum_{l'=1}^N \sum_{j\in B_{l'}} \nabla f_j(\x^*)  = Nb\nabla f(\x^*) = 0$.

Therefore, letting $\rho \geq 0$, we can obtain a tighter bound for the following term, 
\begin{align}
&\left \| \sum_{l=1}^N\sum_{\substack{i \neq j\\i,j \in B_l } }  H_i(\x^*) \nabla f_j(\x^*) +  \sum_{l\neq l'} \sum_{ \substack{i,j\\ i \in B_l, j \in B_{l'}}} \frac{n-1}{N-1} H_i(\x^*) \nabla f_j(\x^*) \right \| 
\nonumber \\
=&\Bigg\| \sum_{l=1}^N\sum_{\substack{i \neq j\\i,j \in B_l } }  H_i(\x^*) \nabla f_j(\x^*) +  \sum_{l\neq l'} \sum_{ \substack{i,j\\ i \in B_l, j \in B_{l'}}} \frac{n-1}{N-1} H_i(\x^*) \nabla f_j(\x^*) \nonumber  \\
&-  \rho \left (\sum_{l=1}^N  \sum_{i\in B_l}  H_i(\x^*) \right ) \left(\sum_{l'=1}^N \sum_{j\in B_{l'}} \nabla f_j(\x^*) \right) \Bigg \|
\nonumber  \\
=&\Bigg\| (1-\rho) \sum_{l=1}^N\sum_{\substack{i \neq j\\i,j \in B_l } }  H_i(\x^*) \nabla f_j(\x^*) +  \sum_{l\neq l'} \sum_{ \substack{i,j\\ i \in B_l, j \in B_{l'}}} (\frac{n-1}{N-1}-\rho) H_i(\x^*) \nabla f_j(\x^*) - \rho \sum_{l=1}^N\sum_{i \in B_l  }  H_i(\x^*) \nabla f_i(\x^*) \Bigg \|
\nonumber  \\
\leq & |1-\rho| Nb(b-1) LG +  N(N-1) b^2\left|\frac{n-1}{N-1}-\rho \right | + \rho Nb  LG  \label{eq:discuss}
\end{align}

To find a tight upper bound of \eqref{eq:discuss}, we need to discuss its value as follows:
\begin{enumerate}
	\item When $\rho \geq 1$, the RHS of the above inequality is increasing with respect to $\rho$.
    \item When $\rho \leq \frac{n-1}{N-1}$, the RHS of the above inequality is decreasing with respect to $\rho$.
    \item When $\frac{n-1}{N-1} \leq \rho \leq 1$ and $N \geq 2$, the RHS of the above inequality is increasing with respect to $\rho$.
\end{enumerate}

Thus $\rho = \frac{n-1}{N-1}$ is the minimizer of \eqref{eq:discuss}. Plugging the value of $\rho$ into \eqref{eq:discuss}, we can obtain the upper bound as 
\begin{align*}
|1-\rho| Nb(b-1) LG +  N(N-1) b^2\left|\frac{n-1}{N-1}-\rho \right | + \rho Nb  LG \leq \frac{N-n}{N-1}Nb(b-1)LG + \frac{n-1}{N-1}NbLG
\end{align*}
which leads to
\begin{align*}
\calJ_3 \leq & \eta_s^2 \frac{n}{N} \|  \x_0^s - \x^*\| \left \| \sum_{l=1}^N\sum_{\substack{i \neq j\\i,j \in B_l } } H_i(\x^*) \nabla f_j(\x^*)+  \sum_{l\neq l'} \sum_{ \substack{i,j\\ i \in B_l, j \in B_{l'}}} \frac{n-1}{N-1} H_i(\x^*) \nabla f_j(\x^*) \right \|
\\
\leq & \eta_s^2  \|  \x_0^s - \x^*\|  \frac{N-n}{N-1}nb(b-1)LG + \eta_s^2  \|  \x_0^s - \x^*\|  \frac{n-1}{N-1}nbLG
\\
\leq & \frac{1}{8} \eta_s \mu nb \|  \x_0^s - \x^*\|^2 +  2\eta_s^3 nb \left ( \frac{N-n}{N-1} \right )^2 (b-1)^2L^2G^2 \mu^{-1} \\
&+ \frac{1}{8} \eta_s \mu nb \|  \x_0^s - \x^*\|^2  + 2\eta_s^3 nb \left ( \frac{n-1}{N-1} \right )^2 L^2G^2\mu^{-1}
\\
= & \frac{1}{4} \eta_s \mu nb \|  \x_0^s - \x^*\|^2 +  2\eta_s^3 nb L^2G^2 \mu^{-1} \left[\left ( \frac{N-n}{N-1} \right )^2 (b-1)^2 + \left ( \frac{n-1}{N-1} \right )^2 \right ]
\end{align*}

Therefore, we have
\begin{align*}
\calI_{11} \leq \frac{1}{4} \eta_s \mu nb \|  \x_0^s - \x^*\|^2 + \eta_s^2 L^2 n^2 b^2 \|\x_0^s - \x^*\|^2 +  2\eta_s^3 nb L^2G^2 \mu^{-1} \left[\left ( \frac{N-n}{N-1} \right )^2 (b-1)^2 + \left ( \frac{n-1}{N-1} \right )^2 \right ] 
\end{align*}

\textbf{Bound of $\calI_{12}$}
\begin{align*}
\calI_{12} =& 2 \eta_s^2 \E \left \langle \x_0^s - \x^*, \sum_{k=1}^{bn}   H_{\psi_s(k)}(\x^*)  \left(\sum_{k'=1}^{k-1} \nabla f_{\psi_s(k')} (\x_{k'-1}^s) - \nabla f_{\psi_s(k')}(\x_0^s)  \right) \right \rangle 
\\
\leq & 2 \eta_s^2 \E \left \| \x_0^s - \x^* \right \| \left\| \sum_{k=1}^{bn}   H_{\psi_s(k)}(\x^*)  \left(\sum_{k'=1}^{k-1} \nabla f_{\psi_s(k')} (\x_{k'-1}^s) - \nabla f_{\psi_s(k')}(\x_0^s)  \right) \right \|
\\
\leq & 2 \eta_s^2 \E \left \| \x_0^s - \x^* \right \| \sum_{k=1}^{bn} \sum_{k'=1}^{k-1} \left\|   H_{\psi_s(k)}(\x^*)  \left( \nabla f_{\psi_s(k')} (\x_{k'-1}^s) - \nabla f_{\psi_s(k')}(\x_0^s)  \right) \right \|
\\
\leq & 2 \eta_s^2 \E \left \| \x_0^s - \x^* \right \| \sum_{k=1}^{bn} \sum_{k'=1}^{k-1} L \left\|    \nabla f_{\psi_s(k')} (\x_{k'-1}^s) - \nabla f_{\psi_s(k')}(\x_0^s)   \right \|
\\
\leq & 2 \eta_s^2 \E \left \| \x_0^s - \x^* \right \| \sum_{k=1}^{bn} \sum_{k'=1}^{k-1} L^2 \left\|    \x_{k'-1}^s - \x_0^s  \right \|
\\
= & 2 \eta_s^2 \E \left \| \x_0^s - \x^* \right \| \sum_{k=1}^{bn} \sum_{k'=1}^{k-1} L^2 \left\|   \eta_s \sum_{k'=1}^{k-1}  \nabla f_{\psi_s(k')}(\x^s_{k'-1})  \right \|
\\
\leq & 2 \eta_s^3 L^2  G \left \| \x_0^s - \x^* \right \| \sum_{k=1}^{bn} (k-1)^2 
\\
\leq & \frac{2}{3} (bn)^3 \eta_s^3 L^2  G \left \| \x_0^s - \x^* \right \|  
\\
= & \frac{1}{3}  \eta_s^2 b^2 n^2 L  G \left \| \x_0^s - \x^* \right \|^2 + \frac{1}{3} \eta_s^4 b^4 n^4 L^3  G   
\end{align*}
where the first inequality is due to $\langle\mathbf{a},\mathbf{b} \rangle \leq \|\mathbf{a}\| \|\mathbf{b}\|$, the second inequality is due to $\| \sum_{k=1}^{bn} \mathbf{a}_k\| \leq  \sum_{k=1}^{bn} \|\mathbf{a}\|$, the third and fourth inequalities are because of Lipschitz gradient assumption, the second equality is because of $\x^{s}_{k-1} = \x^s_{0} - \eta_s \sum_{k'=1}^{k-1}  \nabla f_{\psi_s(k')}(\x^s_{k'-1})$, and the last inequality holds since $ab\leq \frac{\lambda}{2} a^2 + \frac{1}{2\lambda}b^2$.

\textbf{Bound of $\calI_{13}$}
\begin{align*}
\calI_{13} =& - 2 \eta_s \E \left \langle \x_0^s - \x^*, \sum_{k=1}^{bn}  \int_{\x^s_0}^{\x^s_{k-1}} \left( H_{\psi_s(k)}(\x) - H_{\psi_s(k)}(\x^*) \right) \mathrm{d}\x \right \rangle
\\
=& - 2 \eta_s \E \left \langle \x_0^s - \x^*, \sum_{k=1}^{bn}  \int_{0}^{\|\x^s_{k-1}-\x^s_0\|} \left( H_{\psi_s(k)}\left(\x_0^s + \frac{\x^s_{k-1}-\x^s_0}{\|\x^s_{k-1}-\x^s_0\|} t \right ) - H_{\psi_s(k)}(\x^*) \right) \frac{\x^s_{k-1}-\x^s_0}{\|\x^s_{k-1}-\x^s_0\|}  \mathrm{d} t \right \rangle
\\
=& - 2 \eta_s \E \sum_{k=1}^{bn}  \int_{0}^{\|\x^s_{k-1}-\x^s_0\|} \left \langle \x_0^s - \x^*, \left( H_{\psi_s(k)}\left(\x_0^s + \frac{\x^s_{k-1}-\x^s_0}{\|\x^s_{k-1}-\x^s_0\|} t \right ) - H_{\psi_s(k)}(\x^*) \right) \frac{\x^s_{k-1}-\x^s_0}{\|\x^s_{k-1}-\x^s_0\|} \right \rangle  \mathrm{d} t 
\\
\leq & 2 \eta_s \E \sum_{k=1}^{bn}  \int_{0}^{\|\x^s_{k-1}-\x^s_0\|} \left \| \x_0^s - \x^* \right \|\cdot  \left \| H_{\psi_s(k)}\left(\x_0^s + \frac{\x^s_{k-1}-\x^s_0}{\|\x^s_{k-1}-\x^s_0\|} t \right ) - H_{\psi_s(k)}(\x^*) \right\| \frac{\| \x^s_{k-1}-\x^s_0 \|}{\|\x^s_{k-1}-\x^s_0\|}  \mathrm{d} t 
\\
\leq & 2 \eta_s \left \| \x_0^s - \x^* \right \|\cdot \E \sum_{k=1}^{bn}  \int_{0}^{\|\x^s_{k-1}-\x^s_0\|} L_H \left \| \x_0^s + \frac{\x^s_{k-1}-\x^s_0}{\|\x^s_{k-1}-\x^s_0\|} t  - \x^* \right\|  \mathrm{d} t 
\\
\leq & 2 \eta_s \left \| \x_0^s - \x^* \right \|\cdot \E \sum_{k=1}^{bn}  \int_{0}^{\|\x^s_{k-1}-\x^s_0\|} L_H  \left( \| \x_0^s  - \x^* \| +  t\right)  \mathrm{d} t 
\\
= & 2 \eta_s \left \| \x_0^s - \x^* \right \| L_H  \cdot \E \sum_{k=1}^{bn}   \left( \|\x^s_{k-1}-\x^s_0\| \| \x_0^s  - \x^* \| +  \frac{1}{2} \|\x^s_{k-1}-\x^s_0\|^2 \right)   
\\
= & 2 \eta_s \left \| \x_0^s - \x^* \right \| L_H  \cdot \E \sum_{k=1}^{bn}   \left( \|\eta_s \sum_{k'=1}^{k-1} \nabla f_{\psi_s(k')}(\x^s_{k'-1})\| \| \x_0^s  - \x^* \| +  \frac{1}{2} \|\eta_s \sum_{k'=1}^{k-1} \nabla f_{\psi_s(k')}(\x^s_{k'-1}) \|^2 \right)  
\\
\leq & 2 \eta_s \left \| \x_0^s - \x^* \right \| L_H  \cdot \sum_{k=1}^{bn}   \left( \eta_s (k-1)  G \| \x_0^s  - \x^* \| +  \frac{1}{2} \eta_s^2 (k-1)^2 G^2 \right)  
\\
\leq & 2 \eta_s \left \| \x_0^s - \x^* \right \| L_H     \left(   \frac{(bn)^2}{2} \eta_s G \| \x_0^s  - \x^* \| +  \frac{(bn)^3}{6} \eta_s^2  G^2 \right)  
\\
= & \eta_s^2 (bn)^2 L_H G \left \| \x_0^s - \x^* \right \|^2  +  \frac{(bn)^3}{3} \eta_s^3  L_H G^2 \left \| \x_0^s - \x^* \right \| 
\\
\leq & \eta_s^2 (bn)^2 L_H G \left \| \x_0^s - \x^* \right \|^2  +  \frac{(bn)^2}{6} \eta_s^2  L_H G \left \| \x_0^s - \x^* \right \|^2 +  \frac{(bn)^4}{6} \eta_s^4  L_H G^3
\\
= & \frac{7}{6}\eta_s^2 b^2 n^2 L_H G \left \| \x_0^s - \x^* \right \|^2 + \frac{1}{6} \eta_s^4 b^4 n^4 L_H G^3
\end{align*}
where the second equality holds since the Fact \ref{fact:grad_hess}, the third inequality is due to $\|\mathbf{a} + \mathbf{b}  \| \leq \|\mathbf{a}\| + \|\mathbf{b}\|$, the fourth inequality is due to the boundedness of the gradient, the fifth equality is due to $\x^{s}_{k-1} = \x^s_{0} - \eta_s \sum_{k'=1}^{k-1}  \nabla f_{\psi_s(k')}(\x^s_{k'-1})$, and the last inequality is because of $ab\leq \frac{\lambda}{2} a^2 + \frac{1}{2\lambda}b^2$.

Based on the upper bounds of $\calI_{11}$, $\calI_{12}$ and $\calI_{13}$, we can obtain the bound of $\calI_1$ as
\begin{align}
\calI_1 =& \calI_{11} + \calI_{12} + \calI_{13}
 \nonumber \\
\leq & \frac{1}{4} \eta_s \mu nb \|  \x_0^s - \x^*\|^2 +  2\eta_s^3 nb L^2G^2 \mu^{-1} \left[\left ( \frac{N-n}{N-1} \right )^2 (b-1)^2 + \left ( \frac{n-1}{N-1} \right )^2 \right ] \label{eq:I1} \\
&+\frac{1}{6}  \eta_s^2 b^2 n^2 (2L  G + 6L^2 + 7L_H G) \left \| \x_0^s - \x^* \right \|^2 + \frac{1}{6} \eta_s^4 b^4 n^4 (2L^3  G + L_H G^3)  \nonumber
\end{align}

\vfill
Now we summarize the above upper bounds of $\calI_1$, $\calI_2$, $\calI_3$, $\calI_4$, $\calI_5$. Plugging   \eqref{eq:I1},\eqref{eq:I2}, \eqref{eq:I3}, \eqref{eq:I4}, \eqref{eq:I5} into \eqref{eq:main}, we can eventually obtain that
\begin{align}
\E\|x_0^{s+1} - x^*\|^2 \leq& (1+\frac{1}{4} \eta_s \mu bn + C_1 \eta_s^2 b^2 n^2 )\|x_0^s-x^* \|^2 - 2 \eta_s bn \left \langle \x_0^s - \x^*, \nabla F(\x_0^s) \right \rangle + 2\eta_s^2 b^2n^2 \left \| \nabla F(\x^s_0) \right \|^2 \nonumber \\
&+ C_2 \eta^2_snb  \frac{N-n}{N-1} h_D \sigma^2  +  C_3\eta_s^3 bn  \left[\left ( \frac{N-n}{N-1} \right )^2 (b-1)^2 + \left ( \frac{n-1}{N-1} \right )^2 \right ]  + C_4 \eta_s^4 b^4 n^4  \label{eq:lastbutone_strongcvx}
\end{align}
where we let
\[
C_1 = \frac{1}{3} L  G + L^2 + \frac{7}{6} L_H G, \quad C_2 = 2, \quad C_3 = 2L^2G^2 \mu^{-1}, \quad C_4 = \frac{2}{3} L^2 G^4 + \frac{1}{3} L^3G + \frac{1}{6} L_H G^3
\]

By the definition of $\mu$-strongly convex function $F(\cdot)$, we have
\begin{align} \label{eq:proofstrongcvx}
F(\x^*) - F(\x_0^s) \geq \left \langle \x^*-  \x_0^s, \nabla F(\x_0^s) \right \rangle + \frac{\mu}{2} \| \x_0^s - \x^* \|^2.
\end{align}

By the definition of $L$-smooth convex function $F(\cdot)$, we have
\begin{align} \label{eq:proofsmooth}
\|\nabla F(\x^s_0)\|^2 \leq 2L(F(\x_0^s) - F(\x^*)).
\end{align}

Plugging \eqref{eq:proofstrongcvx} and \eqref{eq:proofsmooth} into \eqref{eq:lastbutone_strongcvx}, we have
\begin{align*}
& (2 \eta_s b n- 4L \eta_s^2   b^2 n^2 ) (   F(\x_0^s)-F(\x^*)  ) \leq (1-\frac{3}{4} \eta_s bn  \mu  + C_1 \eta_s^2 b^2 n^2 )\|x_0^s-x^* \|^2  -  \E\|x_0^{s+1} - x^*\|^2 \\
&\qquad \qquad \qquad + C_2 \eta^2_snb  \frac{N-n}{N-1} h_D \sigma^2  +  C_3\eta_s^3 bn  \left[\left ( \frac{N-n}{N-1} \right )^2 (b-1)^2 + \left ( \frac{n-1}{N-1} \right )^2 \right ]  + C_4 \eta_s^4 b^4 n^4
\end{align*}

Now assume that $\eta_s \leq \min\{ \frac{1}{4C_1\mu^{-1} b n}, \frac{1}{4Lbn} \}$, we eventually obtain 
\begin{align*}
&\eta_s b  n  (   F(\x_0^s)-F(\x^*)  ) \leq (1-\frac{1}{2} \eta_s b n \mu )\|x_0^s-x^* \|^2  -  \E\|x_0^{s+1} - x^*\|^2  \\
&\qquad \qquad \qquad \qquad + C_2 \eta^2_snb  \frac{N-n}{N-1} h_D \sigma^2  +  C_3\eta_s^3 bn  \left[\left ( \frac{N-n}{N-1} \right )^2 (b-1)^2 + \left ( \frac{n-1}{N-1} \right )^2 \right ]  + C_4 \eta_s^4 b^4 n^4
\end{align*}

We then set 
the learning rate $\eta_s$
to balance different terms on the right hand side to achieve the fastest 
convergence in $O(-)$ sense.
By Lemma \ref{lemma:converge}, letting $S \geq  1$, $\eta_s = \frac{6}{b n \mu(s+a)} $, and $T = b S n$, we can have
\begin{align*}
F\left (\frac{\sum_{s=1}^S w_s \x_0^s}{\sum_{s=1}^S w_s} \right) - F(\x^*) \leq \frac{\sum_{s=1}^S w_s  (F(\x_0^s)-F(\x^*) )}{\sum_{s=1}^S w_s} \lesssim  ( 1-\alpha) \frac{h_D\sigma^2}{T}  + \beta \frac{1}{T^2}  + \gamma \frac{  m^3}{T^3}
\end{align*}
with 
\begin{align*}
\alpha := \frac{n-1}{N-1}, \beta := \alpha^2 + ( 1-\alpha)^2(b-1)^2, \gamma := \frac{n^3}{N^3}.
\end{align*}
and requiring
\begin{align*}
&T \geq bn \\
&a \geq \max \left\{\frac{8LG + 24L^2 + 28L_H G}{\mu^2}, \frac{24L}{\mu}, 1 \right\}.
\end{align*}

\subsection{Proof of Theorem \ref{thm:partialrs_sample_smooth}}
Recall the PL condition for a certain constant $\mu$ is in the following form,
\begin{align*}
2 \mu (f(\x)-f(\x^*)) \leq  \| \nabla f(\x) \|^2
\end{align*}

We begin our proof as follows, by the $L$-smoothness of the objective function $f(\x)$, 
\begin{align}
\E [f(\x_0^{s+1}) - f(\x_0^s)] \leq & \left \langle  \E \x_0^{s+1} - \x_0^s, \nabla f(\x_0^s)  \right \rangle + \frac{L}{2} \E \| \x_0^{s+1} - \x_0^s \|^2 
\nonumber\\
\leq & -\eta_s \E \left \langle  \sum_{k=1}^{bn}  \nabla f_{\psi_s(k)}(\x^s_{k-1}), \nabla f(\x_0^s)  \right \rangle + \frac{L}{2} \eta_s^2 \E \left \| \sum_{k=1}^{bn}  \nabla f_{\psi_s(k)}(\x^s_{k-1}) \right\|^2 
\nonumber\\
\leq & \underbrace{-\eta_s \E \left \langle  \sum_{k=1}^{bn}  [ \nabla f_{\psi_s(k)}(\x^s_{k-1}) - \nabla f_{\psi_s(k)}(\x^s_0) ], \nabla f(\x_0^s)  \right \rangle}_{\calG_1} \nonumber\\
&\underbrace{-\eta_s \E \left \langle  \sum_{k=1}^{bn} \nabla f_{\psi_s(k)}(\x^s_0), \nabla f(\x_0^s)  \right \rangle}_{\calG_2}   + \underbrace{ L \eta_s^2 \E \left \| \sum_{k=1}^{bn}  [ \nabla f_{\psi_s(k)}(\x^s_{k-1}) - \nabla f_{\psi_s(k)}(\x^s_0) ] \right\|^2}_{\calG_3} \label{eq:main_PL} \\
& + \underbrace{L\eta_s^2 \E \left \| \sum_{k=1}^{bn}  \nabla f_{\psi_s(k)}(\x^s_0) - \E \sum_{k=1}^{bn}  \nabla f_{\psi_s(k)}(\x^s_0) \right\|^2}_{\calG_4} + \underbrace{ L \eta_s^2 \E \left \| \E \sum_{k=1}^{bn}  \nabla f_{\psi_s(k)}(\x^s_0) \right\|^2}_{\calG_5} \nonumber 
\end{align}

\textbf{Bounds of $\calG_2$, $\calG_3$, $\calG_4$ and $\calG_5$}

As shown in the proof of Theorem \ref{thm:partialrs_sample}, $\calI_2$, $\calI_3$, $\calI_4$ and $\calI_5$ are the similar terms to $\calG_2$, $\calG_3$, $\calG_4$ and $\calG_5$. We can shown their upper bounds as follows,
\begin{align}
\calG_2 =  - \eta_s \E \left \langle \sum_{k=1}^{bn} \nabla f_{\psi_s(k)}(\x^s_0), \nabla f(\x_0^s) \right \rangle = - \eta_s \frac{n}{N} m  \left\|\nabla F(\x_0^s) \right \|^2 \label{eq:G2}
\end{align}

\begin{align}
\calG_3 = L\eta_s^2 \E \left \| \sum_{k=1}^{bn}  [ \nabla f_{\psi_s(k)}(\x^s_{k-1})- \nabla f_{\psi_s(k)}(\x^s_0)] \right \|^2 \leq  \frac{1}{3}\eta_s^4 L^3 G^2 (bn)^4 \label{eq:G3}
\end{align}

\begin{align}
\calG_4 = L \eta^2_s \E \left \| \sum_{k=1}^{bn} \nabla f_{\psi_s(k)}(\x^s_0) - \E \sum_{k=1}^{bn} \nabla f_{\psi_s(k)}(\x^s_0) \right \|^2 \leq L \eta_s^2
\frac{nb (N-n)}{N-1} h_D\sigma^2 \label{eq:G4}
\end{align}

\begin{align}
\calG_5 = L\eta_s^2 \left \|\E \sum_{k=1}^{bn} \nabla f_{\psi_s(k)}(\x^s_0) \right \|^2 = \eta_s^2 \frac{n^2}{N^2} m^2 L \left \| \nabla F(\x^s_0) \right \|^2 \label{eq:G5}
\end{align}

\textbf{Bound of $\calG_1$}

Next, we will show the upper bound of $\calG_1$.

\begin{align*}
\calG_1 =&  -  \eta_s \E \left \langle \nabla f(\x_0^s), \sum_{k=1}^{bn} [ \nabla f_{\psi_s(k)}(\x^s_{k-1})  - \nabla f_{\psi_s(k)}(\x^s_0) ] \right \rangle 
\\
=&  -  \eta_s \E \left \langle \nabla f(\x_0^s), \sum_{k=1}^{bn}  \int_{\x^s_0}^{\x^s_{k-1}} H_{\psi_s(k)}(\x) \mathrm{d}\x \right \rangle 
\\
= &  -  \eta_s \E \left \langle \nabla f(\x_0^s), \sum_{k=1}^{bn}  \int_{\x^s_0}^{\x^s_{k-1}}  H_{\psi_s(k)}(\x_0^s)  \mathrm{d}\x \right \rangle \\
&  -  \eta_s \E \left \langle \nabla f(\x_0^s), \sum_{k=1}^{bn}  \int_{\x^s_0}^{\x^s_{k-1}} \left( H_{\psi_s(k)}(\x) - H_{\psi_s(k)}(\x_0^s) \right) \mathrm{d}\x \right \rangle
\\
= &  -  \eta_s \E \left \langle \nabla f(\x_0^s), \sum_{k=1}^{bn}   H_{\psi_s(k)}(\x_0^s)  \left(\x^s_{k-1} - \x^s_0 \right) \right \rangle \\
&  -  \eta_s \E \left \langle \nabla f(\x_0^s), \sum_{k=1}^{bn}  \int_{\x^s_0}^{\x^s_{k-1}} \left( H_{\psi_s(k)}(\x) - H_{\psi_s(k)}(\x_0^s) \right) \mathrm{d}\x \right \rangle 
\\
= &   \eta_s^2 \E \left \langle \nabla f(\x_0^s), \sum_{k=1}^{bn}   H_{\psi_s(k)}(\x_0^s)  \left(\sum_{k'=1}^{k-1} \nabla f_{\psi_s(k')}(\x_{k'-1}^s)  \right) \right \rangle \\
&  -  \eta_s \E \left \langle \nabla f(\x_0^s), \sum_{k=1}^{bn}  \int_{\x^s_0}^{\x^s_{k-1}} \left( H_{\psi_s(k)}(\x) - H_{\psi_s(k)}(\x_0^s) \right) \mathrm{d}\x \right \rangle 
\\
= & \underbrace{   \eta_s^2 \E \left \langle \nabla f(\x_0^s), \sum_{k=1}^{bn}   H_{\psi_s(k)}(\x_0^s)  \left(\sum_{k'=1}^{k-1} \nabla f_{\psi_s(k')}(\x_0^s)  \right) \right \rangle }_{\calG_{11}}\\
&  \underbrace{ \eta_s^2 \E \left \langle \nabla f(\x_0^s), \sum_{k=1}^{bn}   H_{\psi_s(k)}(\x_0^s)  \left(\sum_{k'=1}^{k-1} \nabla f_{\psi_s(k')} (\x_{k'-1}^s) - \nabla f_{\psi_s(k')}(\x_0^s)  \right) \right \rangle }_{\calG_{12}} \\
&  \underbrace{-  \eta_s \E \left \langle \nabla f(\x_0^s), \sum_{k=1}^{bn}  \int_{\x^s_0}^{\x^s_{k-1}} \left( H_{\psi_s(k)}(\x) - H_{\psi_s(k)}(\x_0^s) \right) \mathrm{d}\x \right \rangle }_{\calG_{13}}
\end{align*}

\textbf{Bound of $\calG_{11}$}

As shown in the proof of $\calJ_{3}$, we can similarly have
\begin{align}
\calG_{11}  =& \eta_s^2  \left \langle \nabla f(\x_0^s), \sum_{l=1}^N\sum_{\substack{i \neq j\\i,j \in B_l } } \frac{n}{2N} H_i(\x_0^s) \nabla f_j(\x_0^s)+  \sum_{l\neq l'} \sum_{ \substack{i,j\\ i \in B_l, j \in B_{l'}}} \frac{n(n-1)}{2N(N-1)} H_i(\x_0^s) \nabla f_j(\x_0^s) \right \rangle
\nonumber\\
= & \eta_s^2 \frac{n}{2N} \left \langle  \nabla f(\x_0^s), \sum_{l=1}^N\sum_{\substack{i \neq j\\i,j \in B_l } } H_i(\x_0^s) \nabla f_j(\x_0^s)+  \sum_{l\neq l'} \sum_{ \substack{i,j\\ i \in B_l, j \in B_{l'}}} \frac{n-1}{N-1} H_i(\x_0^s) \nabla f_j(\x_0^s) \right \rangle
\nonumber\\
= & \eta_s^2 \frac{n}{2N} \left \langle  \nabla f(\x_0^s), \sum_{l=1}^N\sum_{\substack{i \neq j\\i,j \in B_l } } H_i(\x_0^s) \nabla f_j(\x_0^s)+  \sum_{l\neq l'} \sum_{ \substack{i,j\\ i \in B_l, j \in B_{l'}}} \frac{n-1}{N-1} H_i(\x_0^s) \nabla f_j(\x_0^s) \right \rangle \nonumber\\
&- \eta_s^2 \frac{n}{2N} \left \langle  \nabla f(\x_0^s),  \sum_{l=1}^N \sum_{l' = 1}^N \sum_{ \substack{i,j\\ i \in B_l, j \in B_{l'}}} \frac{n-1}{N-1} H_i(\x_0^s) \nabla f_j(\x_0^s) \right \rangle \nonumber\\
&+ \eta_s^2 \frac{n}{2N} \left \langle  \nabla f(\x_0^s),  \sum_{l=1}^N \sum_{l' = 1}^N \sum_{ \substack{i,j\\ i \in B_l, j \in B_{l'}}} \frac{n-1}{N-1} H_i(\x_0^s) \nabla f_j(\x_0^s) \right \rangle
\nonumber\\
= & \eta_s^2 \frac{n}{2N} \left \langle  \nabla f(\x_0^s), \sum_{l=1}^N\sum_{\substack{i \neq j\\i,j \in B_l } } H_i(\x_0^s) \nabla f_j(\x_0^s)+  \sum_{l=1}^N \sum_{ i,j \in B_l} \frac{n-1}{N-1} H_i(\x_0^s) \nabla f_j(\x_0^s) \right \rangle \nonumber\\
&+ \eta_s^2 \frac{n}{2N} \left \langle  \nabla f(\x_0^s),  \frac{n-1}{N-1} N^2b^2 H(\x_0^s)\nabla f(\x_0^s)  \right \rangle
\nonumber\\
\leq & \eta_s^2 \frac{n}{2N} \left \| \nabla f(\x_0^s)\right\| \left \| \sum_{l=1}^N\sum_{\substack{i \neq j\\i,j \in B_l } } H_i(\x_0^s) \nabla f_j(\x_0^s)+ \sum_{l=1}^N \sum_{ i,j \in B_l} \frac{n-1}{N-1} H_i(\x_0^s) \nabla f_j(\x_0^s)  \right \| \nonumber\\
&+ \eta_s^2 \frac{n(n-1)Nb^2}{2(N-1)}L \| \nabla f(\x_0^s) \|^2
\nonumber\\
\leq & \eta_s^2  \| \nabla f(\x_0^s)\|  \frac{N-n}{N-1}nb(b-1)LG + \eta_s^2  \|  \nabla f(\x_0^s)\|  \frac{n-1}{N-1}nbLG + \eta_s^2 \frac{n(n-1)Nb^2}{2(N-1)}L \| \nabla f(\x_0^s) \|^2
\nonumber\\
\leq & \frac{1}{4} \eta_s  nb \|\nabla f(\x_0^s)\|^2 +  \eta_s^3 nb \left ( \frac{N-n}{N-1} \right )^2 (b-1)^2L^2G^2 
\nonumber \\
&+ \frac{1}{4} \eta_s  nb \|\nabla f(\x_0^s)\|^2  + \eta_s^3 nb \left ( \frac{n-1}{N-1} \right )^2 L^2G^2 + \eta_s^2 \frac{n(n-1)Nb^2}{2(N-1)}L \| \nabla f(\x_0^s) \|^2
\nonumber\\
= & \frac{1}{2} \eta_s  nb \| \nabla f(\x_0^s)\|^2 +  \eta_s^3 nb L^2G^2  \left[\left ( \frac{N-n}{N-1} \right )^2 (b-1)^2 + \left ( \frac{n-1}{N-1} \right )^2 \right ] \label{eq:G11}\\
&+ \eta_s^2 \frac{n(n-1)Nb^2}{2(N-1)}L \| \nabla f(\x_0^s) \|^2 \nonumber
\end{align}
where the fourth equality is due to 
\begin{align*}
&\sum_{l=1}^N \sum_{l' = 1}^N \sum_{ \substack{i,j\\ i \in B_l, j \in B_{l'}}} H_i(\x_0^s) \nabla f_j(\x_0^s)  = N^2b^2 H(\x_0^s)\nabla f(\x_0^s),\\
&\sum_{l\neq l'} \sum_{ \substack{i,j\\ i \in B_l, j \in B_{l'}}}H_i(\x_0^s) \nabla f_j(\x_0^s) - \sum_{l=1}^N \sum_{l' = 1}^N \sum_{ \substack{i,j\\ i \in B_l, j \in B_{l'}}} H_i(\x_0^s) \nabla f_j(\x_0^s) = \sum_{l=1}^N  \sum_{i,j\in B_l} H_i(\x_0^s) \nabla f_j(\x_0^s) 
\end{align*}
with $H(\x_0^s)$ being the Hessian of $f(\x)$ at the point $\x_0^s$.

\textbf{Bound of $\calG_{12}$}

\begin{align}
\calG_{12} =&  \eta_s^2 \E \left \langle \nabla f(\x_0^s), \sum_{k=1}^{bn}   H_{\psi_s(k)}(\x_0^s)  \left(\sum_{k'=1}^{k-1} \nabla f_{\psi_s(k')} (\x_{k'-1}^s) - \nabla f_{\psi_s(k')}(\x_0^s)  \right) \right \rangle 
\nonumber\\
\leq &  \eta_s^2 \E \left \| \nabla f(\x_0^s) \right \| \left\| \sum_{k=1}^{bn}   H_{\psi_s(k)}(\x_0^s)  \left(\sum_{k'=1}^{k-1} \nabla f_{\psi_s(k')} (\x_{k'-1}^s) - \nabla f_{\psi_s(k')}(\x_0^s)  \right) \right \|
\nonumber \\
\leq & \eta_s^2 \E \left \| \nabla f(\x_0^s) \right \| \sum_{k=1}^{bn} \sum_{k'=1}^{k-1} \left\|   H_{\psi_s(k)}(\x_0^s)  \left( \nabla f_{\psi_s(k')} (\x_{k'-1}^s) - \nabla f_{\psi_s(k')}(\x_0^s)  \right) \right \|
\nonumber \\
\leq & \eta_s^2 \E \left \| \nabla f(\x_0^s) \right \| \sum_{k=1}^{bn} \sum_{k'=1}^{k-1} L \left\|    \nabla f_{\psi_s(k')} (\x_{k'-1}^s) - \nabla f_{\psi_s(k')}(\x_0^s)   \right \|
\nonumber \\
\leq & \eta_s^2 \E \left \| \nabla f(\x_0^s) \right \| \sum_{k=1}^{bn} \sum_{k'=1}^{k-1} L^2 \left\|    \x_{k'-1}^s - \x_0^s  \right \|
\nonumber \\
= & \eta_s^2 \E \left \| \nabla f(\x_0^s) \right \| \sum_{k=1}^{bn} \sum_{k'=1}^{k-1} L^2 \left\|   \eta_s \sum_{k'=1}^{k-1}  \nabla f_{\psi_s(k')}(\x^s_{k'-1})  \right \|
\nonumber \\
\leq &  \eta_s^3 L^2  G \left \| \nabla f(\x_0^s) \right \| \sum_{k=1}^{bn} (k-1)^2 
\nonumber \\
\leq & \frac{1}{3} (bn)^3 \eta_s^3 L^2  G \left \| \nabla f(\x_0^s) \right \|  
\nonumber \\
\leq & \frac{1}{6}  \eta_s^2 (bn)^2 L  \left \| \nabla f(\x_0^s) \right \|^2 + \frac{1}{6} \eta_s^4 (bn)^4 L^3  G^2    \label{eq:G12}
\end{align}

\textbf{Bound of $\calG_{13}$}

\begin{align}
\calG_{13} =& -\eta_s \E \left \langle\nabla f(\x_0^s), \sum_{k=1}^{bn}  \int_{\x^s_0}^{\x^s_{k-1}} \left( H_{\psi_s(k)}(\x) - H_{\psi_s(k)}(\x_0^s) \right) \mathrm{d}\x \right \rangle
\nonumber\\
=& -  \eta_s \E \left \langle\nabla f(\x_0^s), \sum_{k=1}^{bn}  \int_{0}^{\|\x^s_{k-1}-\x^s_0\|} \left( H_{\psi_s(k)}\left(\x_0^s + \frac{\x^s_{k-1}-\x^s_0}{\|\x^s_{k-1}-\x^s_0\|} t \right ) - H_{\psi_s(k)}(\x_0^s) \right) \frac{\x^s_{k-1}-\x^s_0}{\|\x^s_{k-1}-\x^s_0\|}  \mathrm{d} t \right \rangle
\nonumber\\
=& -  \eta_s \E \sum_{k=1}^{bn}  \int_{0}^{\|\x^s_{k-1}-\x^s_0\|} \left \langle\nabla f(\x_0^s), \left( H_{\psi_s(k)}\left(\x_0^s + \frac{\x^s_{k-1}-\x^s_0}{\|\x^s_{k-1}-\x^s_0\|} t \right ) - H_{\psi_s(k)}(\x_0^s) \right) \frac{\x^s_{k-1}-\x^s_0}{\|\x^s_{k-1}-\x^s_0\|} \right \rangle  \mathrm{d} t 
\nonumber\\
\leq &  \eta_s \E \sum_{k=1}^{bn}  \int_{0}^{\|\x^s_{k-1}-\x^s_0\|} \left \|\nabla f(\x_0^s) \right \|\cdot  \left \| H_{\psi_s(k)}\left(\x_0^s + \frac{\x^s_{k-1}-\x^s_0}{\|\x^s_{k-1}-\x^s_0\|} t \right ) - H_{\psi_s(k)}(\x_0^s) \right\| \frac{\| \x^s_{k-1}-\x^s_0 \|}{\|\x^s_{k-1}-\x^s_0\|}  \mathrm{d} t 
\nonumber\\
\leq &  \eta_s \left \|\nabla f(\x_0^s) \right \|\cdot \E \sum_{k=1}^{bn}  \int_{0}^{\|\x^s_{k-1}-\x^s_0\|} L_H \left \| \x_0^s + \frac{\x^s_{k-1}-\x^s_0}{\|\x^s_{k-1}-\x^s_0\|} t  - \x_0^s \right\|  \mathrm{d} t 
\nonumber\\
\leq &  \eta_s \left \|\nabla f(\x_0^s) \right \|\cdot \E \sum_{k=1}^{bn}  \int_{0}^{\|\x^s_{k-1}-\x^s_0\|} L_H   t  \mathrm{d} t 
\nonumber\\
= &  \eta_s \left \|\nabla f(\x_0^s) \right \| L_H  \cdot \E \sum_{k=1}^{bn}   \frac{1}{2} \|\x^s_{k-1}-\x^s_0\|^2 
\nonumber\\
= &  \eta_s \left \|\nabla f(\x_0^s) \right \| L_H  \cdot \E \sum_{k=1}^{bn}  \frac{1}{2} \eta_s \| \sum_{k'=1}^{k-1} \nabla f_{\psi_s(k')}(\x^s_{k'-1}) \|^2   
\nonumber\\
\leq &  \eta_s \left \|\nabla f(\x_0^s) \right \| L_H  \cdot \sum_{k=1}^{bn}   \frac{1}{2} \eta_s^2 (k-1)^2 G^2 
\nonumber\\
\leq &  \eta_s \left \|\nabla f(\x_0^s) \right \| L_H  \frac{(bn)^3}{6} \eta_s^2  G^2  
\nonumber\\
\leq &  \frac{(bn)^2}{12} \eta_s^2  L_H \left \|\nabla f(\x_0^s) \right \|^2 +  \frac{(bn)^4}{12} \eta_s^4  L_H G^4 \label{eq:G13}
\end{align}

Based on (\ref{eq:G11}), (\ref{eq:G12}), (\ref{eq:G13}), we can have
\begin{align}
\calG_1 =&  \calG_{11} + \calG_{12} + \calG_{13} 
\nonumber\\
\leq &  \frac{1}{2} \eta_s nb \| \nabla f(\x_0^s)\|^2 +  \eta_s^3 nb L^2G^2 \left[\left ( \frac{N-n}{N-1} \right )^2 (b-1)^2 + \left ( \frac{n-1}{N-1} \right )^2 \right ] 
\nonumber\\
&+ \eta_s^2 \frac{n(n-1)Nb^2}{2(N-1)}L \| \nabla f(\x_0^s) \|^2 
+ \frac{1}{6}  \eta_s^2 b^2 n^2 L  \left \| \nabla f(\x_0^s) \right \|^2 + \frac{1}{6} \eta_s^4 b^4 n^4 L^3  G^2  \label{eq:G1} \\
& + \frac{(bn)^2}{12} \eta_s^2  L_H \left \|\nabla f(\x_0^s) \right \|^2 +  \frac{(bn)^4}{12} \eta_s^4  L_H G^4 \nonumber 
\end{align}

Plugging (\ref{eq:G1}) (\ref{eq:G2}) (\ref{eq:G3}) (\ref{eq:G4}) into (\ref{eq:main_PL}), we have

\begin{align} \label{eq:nonconvex_general}
\E f(\x_0^{s+1}) - f(\x_0^s) \leq & - \frac{1}{2} \eta_s bn  \left\|\nabla F(\x_0^s) \right \|^2 + C_1 \eta_s^2 b^2n^2   \left\|\nabla F(\x_0^s) \right \|^2  + C_2 \eta_s^2 bn \frac{N-n}{N-1} h_D  \sigma^2
\nonumber\\
&+  C_3  \eta_s^3 bn \left[\left ( \frac{N-n}{N-1} \right )^2 (b-1)^2 + \left ( \frac{n-1}{N-1} \right )^2 \right ]   + C_4 b^4n^4 \eta_s^4
\end{align}
where we let
\[
C_1 =  \frac{13}{6} L +   \frac{1}{12} L_H , \quad C_2 = L , \quad C_3 = L^2 G^2, \quad  C_4 = \frac{1}{2} L^3 G^2 + \frac{1}{12} L_H G^4
\]

By the definition of PL condition, we have have
\begin{align*}
2\mu (F(\x_0^s) - F(\x^*)) \leq \|\nabla F(\x_0^s) \|^2
\end{align*}

Plugging this into the above formulation, we have
\begin{align*}
&\left( \frac{1}{4} \eta_s bn -  C_1 \eta_s^2 b^2 n^2\right) \left\|\nabla F(\x_0^s) \right \|^2   \\
\leq &  F(\x_0^s)-\E F(\x_0^{s+1})   - \frac{1}{4}  \eta_s b n  \left\|\nabla F(\x_0^s) \right \|^2  + C_2 \eta_s^2 bn \frac{N-n}{N-1} h_D  \sigma^2
\\
&+  C_3  \eta_s^3 bn \left[\left ( \frac{N-n}{N-1} \right )^2 (b-1)^2 + \left ( \frac{n-1}{N-1} \right )^2 \right ]   + C_4 b^4n^4 \eta_s^4
\\
\leq &  F(\x_0^s) -\E F(\x_0^{s+1})   - \frac{1}{2} \eta_s b n \mu (F(\x_0^s) - F(\x^*) )+ C_1 \eta_s^2 b^2 n^2   \left\|\nabla F(\x_0^s) \right \|^2 +  C_2 \eta_s^2 bn \frac{N-n}{N-1} h_D  \sigma^2
\\
&+  C_3  \eta_s^3 bn \left[\left ( \frac{N-n}{N-1} \right )^2 (b-1)^2 + \left ( \frac{n-1}{N-1} \right )^2 \right ]   + C_4 b^4n^4 \eta_s^4
\\
= &  (1 - \frac{1}{2} \eta_s b n \mu )(F(\x_0^s) - F(\x^*) )  -( \E  F(\x_0^{s+1}) - F(\x^*))  +  C_2 \eta_s^3 b n    + C_3  \eta_s^4 b^4 n^4 
\end{align*}

Now assume that $ \eta_s \leq  \frac{1}{8C_1 b n}$, we eventually obtain 
\begin{align*}
\frac{1}{8} \eta_s b n  \|  \nabla F(\x_0^s)\|^2 \leq& (1-\frac{1}{2} \eta_s b n \mu )(F(\x_0^s) - F(\x^*) )  -  (\E F(\x_0^{s+1}) - F(\x^*) ) + C_2 \eta_s^2 bn \frac{N-n}{N-1} h_D  \sigma^2
\nonumber\\
&+  C_3  \eta_s^3 bn \left[\left ( \frac{N-n}{N-1} \right )^2 (b-1)^2 + \left ( \frac{n-1}{N-1} \right )^2 \right ]   + C_4 b^4n^4 \eta_s^4
\end{align*}

By  Lemma \ref{lemma:converge}, letting $S \geq  1$,  $\eta_s = \frac{6}{b n \mu(s+a)} $, and $T = b S n$, we can have
\begin{align*}
\sum_{s=1}^S  \frac{w_s  \|\nabla F(\x_0^s)\|^2 }{\sum_{s=1}^S w_s}   \lesssim ( 1-\alpha) \frac{h_D\sigma^2}{T}  + \beta \frac{1}{T^2}  + \gamma \frac{  m^3}{T^3}
\end{align*}
with 
\begin{align*}
\alpha := \frac{n-1}{N-1}, \beta := \alpha^2 + ( 1-\alpha)^2(b-1)^2, \gamma := \frac{n^3}{N^3}.
\end{align*}
and requiring
\begin{align*}
&T \geq bn \\
&a \geq \max \left\{\frac{108L + 4L_H}{\mu}, 1 \right\}.
\end{align*}  

The above result further lead to 
\begin{align*}
F\left (\frac{\sum_{s=1}^S w_s \x_0^s}{\sum_{s=1}^S w_s} \right) - F(\x^*) \lesssim ( 1-\alpha) \frac{h_D\sigma^2}{T}  + \beta \frac{1}{T^2}  + \gamma \frac{  m^3}{T^3}
\end{align*}
by PL condition.

% \subsection{Proof of Theorem \ref{thm:partialrs_sample_smooth}}
For the proof of non-convex objectives without PL condition, we can directly use the formulation \eqref{eq:nonconvex_general}, 
\begin{align}
\E f(\x_0^{s+1}) - f(\x_0^s) \leq & - \frac{1}{2} \eta_s bn  \left\|\nabla F(\x_0^s) \right \|^2 + C_1 \eta_s^2 b^2n^2   \left\|\nabla F(\x_0^s) \right \|^2  + C_2 \eta_s^2 bn \frac{N-n}{N-1} h_D  \sigma^2
\nonumber\\
&+  C_3  \eta_s^3 bn \left[\left ( \frac{N-n}{N-1} \right )^2 (b-1)^2 + \left ( \frac{n-1}{N-1} \right )^2 \right ]   + C_4 b^4n^4 \eta_s^4
\end{align}
where we let
\[
C_1 =  \frac{13}{6} L +   \frac{1}{12} L_H , \quad C_2 = L , \quad C_3 = L^2 G^2, \quad  C_4 = \frac{1}{2} L^3 G^2 + \frac{1}{12} L_H G^4
\]

Now assuming $\eta_s \leq \frac{1}{4 C_1 b n}$
\begin{align*}
 \frac{1}{4} \eta_s bn  \left\|\nabla F(\x_0^s) \right \|^2  \leq &    f(\x_0^s) - \E f(\x_0^{s+1})  + C_2 \eta_s^2 bn \frac{N-n}{N-1} h_D  \sigma^2
\\
&+  C_3  \eta_s^3 bn \left[\left ( \frac{N-n}{N-1} \right )^2 (b-1)^2 + \left ( \frac{n-1}{N-1} \right )^2 \right ]   + C_4 b^4n^4 \eta_s^4
\end{align*}
Taking summation from $s = 1$ to $S$ and dividing both side by $S$, and then we set the step size as follows to obtain different  convergence rate.
Again, 
we set 
the learning rate $\eta_s$
to balance different terms on the right hand side to achieve the fastest 
convergence in $O(-)$ sense.
\begin{enumerate}
\item When $\alpha \leq \frac{N-2}{N-1}$,  choosing $\eta_s = \frac{1}{\sqrt{bn(1-\alpha)h_D\sigma^2 S} }$ and assuming  $S \geq \frac{16bn(\frac{13}{6}L + \frac{1}{12}L_H)^2}{\sigma^2 (1-\alpha) h_D}$, we have,
\begin{align*}
\frac{1}{S} \sum_{s=1}^S \E\|\nabla F(\x_0^s)\|^2 \lesssim &  ( 1-\alpha)^{1/2} \frac{\sqrt{h_D \sigma^2} }{\sqrt{T}}  + \beta \frac{1}{T}  +\gamma \frac{  m^3}{T^{\frac{3}{2}}},
\end{align*}
where the factors are defined as follows
\begin{align*}
\alpha := \frac{n-1}{N-1}, \beta := \frac{\alpha^2}{1-\alpha} \frac{1}{h_D \sigma^2} + ( 1-\alpha)\frac{(b-1)^2}{h_D \sigma^2}, \gamma := \frac{n^3}{(1-\alpha)N^3}.
\end{align*}

\item When $\alpha =1$, choosing $\eta_s = \frac{1}{(m S)^{\frac{1}{3}} }$ and assuming $S \geq 64 (\frac{13}{6}L + \frac{1}{12}L_H)^3 b^2 n^3 / N$, we have,
\begin{align*}
 \frac{1}{S} \sum_{s=1}^S \E\|\nabla F(\x_0^s)\|^2\lesssim  \frac{1}{T^{\frac{2}{3}}}  +\gamma' \frac{  m^3}{T},
\end{align*}
where we define
\[
\gamma' := \frac{n^3}{N^3}.
\]
\end{enumerate}

\end{document}